\documentclass{article}

\usepackage{fullpage}

\usepackage{amssymb, amsmath}
\usepackage{tikz}
\usepackage[vlined,ruled,linesnumbered]{algorithm2e}
\usepackage{pgfplots}
\usepackage{graphicx}
\usepackage{dsfont}
\usepackage{tikz}
\usepackage{algorithmic}
\newcommand{\B}[1]{\mathbf{#1}} 
\usepackage{lineno,hyperref}
\RequirePackage[numbers]{natbib}
\usepackage{footmisc}
\usepackage{tikz}
\usepackage{ulem}

\newtheorem{assumption}{Assumption}

\usepackage{changepage}
\usepackage{graphicx}

\usepackage{mathrsfs} 
\usepackage{booktabs} 
\usepackage{tabularx} 
\usepackage{float} 
\usepackage{multirow} 
\usepackage{natbib} 
\usepackage{cancel} 
\usepackage{balance}
\usepackage{hyperref}
\usepackage{lineno}
\usepackage{hyperref}

\usepackage{amsfonts} 
\usepackage{amssymb} 
\usepackage{bm} 
\usepackage{dsfont} 

\usepackage{xcolor}

\newtheorem{lemma}{Lemma}

\newtheorem{proposition}{Proposition}
\newtheorem{theorem}{Theorem}
\newtheorem{proof}{Proof}

\newtheorem{remark}{Remark}

\author{Sami Alkhoury, Marianne Clausel, Emilie Devijver, Eric Gaussier, Alexandre Seiller
}

\title{Ensembles of Probabilistic Regression Trees}

\begin{document}

\maketitle

\begin{abstract}
Tree-based ensemble methods such as random forests, gradient-boosted trees, and Bayesian additive regression trees have been successfully used for regression problems in many applications and research studies. In this paper, we study ensemble versions of probabilistic regression trees that provide smooth approximations of the objective function by assigning each observation to each region with respect to a probability distribution. 
We prove that the ensemble versions of probabilistic regression trees considered are consistent, and experimentally study their bias-variance trade-off and compare them with the state-of-the-art in terms of performance prediction.
 \end{abstract}

\section{Introduction}
Regression trees \citep{breiman1984cart} and the ensemble methods based on them such as random forests \citep{breiman2001random}, gradient-boosted trees \citep{friedman2000,elith2008} and Bayesian additive regression trees \citep{bart_2019} have been successfully used for regression problems in many applications. For regression trees, the feature space is partitioned into a set of hyper-rectangles, and a constant model is fitted in each region. Thus, the prediction function is piecewise constant with the hard assignment of data points to the regions. As a result, standard regression trees may have difficulty adapting to the smoothness of the link functions and the noise in the input data.

Extensions of regression trees have been proposed to generalize this prediction function. 
Soft trees \citep{irsoy2012soft} and fuzzy trees \citep{suarez1999} are both used for classification and regression and can learn a parameter vector at each node, the dimensionality of which is equal to that of the input data. For a specific node, this vector is used in a gating function: it gives the probability for each observation to be assigned to the left children of the node. Each example is thus assigned to all leaves with a certain class membership, and the final prediction is a smooth combination of the prediction at each node. Soft and fuzzy trees can be seen as a direct extension of the hierarchical mixtures of experts (HME, \citet{Jordan:1994:HME:188104.188106}): indeed, if the HME use predefined trees or trees learned from another method (typically, a standard decision regression tree), then soft trees are constructed based on the hierarchy of experts. 
Smooth transition regression (STR) trees, introduced in \cite{rosa2008}, follow the same general principle but instead rely on a single parameter at each node. A sigmoid-based gating function is also used to assign points to different regions of the tree. 
Probabilistic regression (PR) trees \citep{Alkhoury_2020} are based on a simpler model (with respect to the number of parameters), where smoothness is given by the soft assignment of each datapoint to each region. 

Instead of focusing on one tree, ensemble methods have been proposed to improve regression and classification tasks. The most well-known ensemble methods based on regression trees are certainly random forests (RF) introduced in \cite{breiman2001random}, where small trees are averaged to reduce the variance, and gradient-boosted trees (GBT) \citep{friedman2000,elith2008} with an additive method, where each new tree reduces the resulting error, thus reducing the bias. Soft trees, STR trees, and PR trees, viewed as construction blocks, can also be used in ensemble extensions to reduce the bias or variance and thus improve the global performance. 
More recently, the ensemble method of Bayesian additive regression trees known as BART \citep{bart} has been proposed. As an ensemble method, many trees are combined. The Bayesian \textit{a priori} is used to define the structure of each tree and the parameters necessary to define each one. The boosting model is used to reduce the error, although an overall average is also calculated to reduce the variance. 
The extension to soft trees has been proposed, namely soft-BART \citep{linero2018bayesian}, which also allows for sparsity using a Dirichlet \textit{a priori }on the feature space.

From a theoretical viewpoint, consistency has been achieved for regression trees \citep{Gyorfi-2002}, standard RF \citep{scornet:2015}, boosting extensions \citep{ZhangYu2005}, and Bayesian extension of standard regression trees \cite{rockova2020}. It has also been derived in the specific case of the Bayesian extension of soft trees \citep{Linero_2018}.

In this paper, we study ensemble versions of PR trees through bagging, boosting and Bayesian additive regression. Although PR-RF, the random forest extension of PR trees, and PR-GBT, the gradient boosted version of PR trees, were proposed in the original PR tree paper \citep{Alkhoury_2020}, there were not studied in detail, both in terms of consistency and performance. Using results established by \citet{scornet:2015} for random forests and \citet{ZhangYu2005} for gradient boosting, it is relatively easy to show that both PR-RF and PR-GBT are consistent in a sense that will be made explicit later (see Section~\ref{sec:bag-boost}). We furthermore illustrate their practical applicability in terms of bias and variance with respect to standard regression trees. Establishing the consistency of probabilistic Bayesian additive regression trees (P-BART) is however more demanding (see Section~\ref{sec:BART}). Experiments performed in this latter context furthermore require a data set of moderate size compared with \cite{Alkhoury_2020}, as the Bayesian routine is more time-consuming.

The remainder of the paper is organized as follows: Section~\ref{sec:PRtree} introduces PR trees. Then, Section \ref{sec:bag-boost} introduces the bagging and boosting extensions of PR trees, namely PR-RF and PR-GBT, as well as theoretical guarantees of their consistency. Section \ref{sec:BART} deals with the Bayesian extension named P-BART. Section~\ref{sec:exps} presents the experiments conducted to illustrate the good performance of the prediction with respect to the state-of-the-art and show the bias-variance trade-off for each estimator. An archive containing the complete code is available in Git\footnote{\url{https://gitlab.com/sami.kh/pr-tree}}. Finally, Section~\ref{sec:concl} concludes the paper.

In what follows, a '.' in one dimension of a matrix means that we consider the corresponding row or column vector.

\section{Probabilistic Regression ((PR) trees}\label{sec:PRtree}
\subsection{Model}
\label{sec:model}

Let $\B{X} = (X_1,\cdots,X_p)$ be a $p$-dimensional input random vector. We assume the following:
\begin{assumption}
\label{ass:Xcompact}
$\B{X} = (X_1,\cdots,X_p)$ is almost surely in $[0,1]^p$. 
\end{assumption}
This holds true for an assumption such as compact subspace $\mathcal{X}$ of $\mathbb{R}^p$ instead of $[0,1]^p$, but for the sake of simplicity, we keep $[0,1]^p$ throughout the paper. 
Let $Y$ be an output random variable linked to $\B{X}$ through:
\begin{equation}\label{eq:reg}
Y = f(\B{X};\Theta) + \varepsilon_Y, \, {\varepsilon}_Y \sim \mathcal{N}(0, \tilde \sigma^2),
\end{equation}
where $\Theta$ is the set of parameters on which $f$ relies.

For a partition of $\mathbb{R}^p$ into $K$ hyper-rectangles, referred to as regions and denoted as $\mathcal{R}_k=[a_{k,1},b_{k,1}] \times ... \times [a_{k,p},b_{k,p}]_{1\leq k \leq K}$, obtained by dyadic splits, the standard regression trees focus on the predictor with the form, for $\B{x}\in \mathbb{R}^p$, $f(\B{x};\Theta) = \sum_{k=1}^K \gamma_k \mathds{1}_{\lbrace \B{x} \in \mathcal{R}_k \rbrace}$ for weights $\gamma_k$ associated with the $k$-th region $\mathcal{R}_k$, where $\Theta = ((\mathcal{R}_k)_{1\leq k \leq K},\boldsymbol{\gamma})$. For $1\le k\le K$, $(a_{k,j}, b_{k,j}) \in (-\infty, +\infty)^2$ and the segments can be extended when $a_{k,j} = \pm \infty$ or $b_{k,j} = \pm \infty$, $1 \le j \le p$.
To generalize this, PR trees replace the indicator function with a function $\Psi$: for $\B{x}\in \mathbb{R}^p$, 
\begin{align}\label{treeProbabilistic}
f_{\text{PR}} \left(\B{x}; \Theta\right) = \sum_{k=1}^K \gamma_k \Psi(\B{x};\mathcal{R}_{k},\boldsymbol{\sigma}).
\end{align}
The set of parameters to be estimated is thus $\Theta = ((\mathcal{R}_k)_{1\leq k \leq K},\boldsymbol{\gamma},\boldsymbol{\sigma})$, where $(\mathcal{R}_k)_{1\leq k \leq K}$ corresponds to the set of regions, the associated weights are represented by $\boldsymbol{\gamma} = (\gamma_k)_{1\leq k \leq K} \in \mathbb{R}^K$, and the noise in the input variables is captured in $\boldsymbol{\sigma} \in \mathbb{R}_+^p$.
When $\Psi(\B{x};\mathcal{R}_k,\boldsymbol{\sigma}) = \mathds{1}_{\lbrace \B{x} \in \mathcal{R}_k \rbrace}, \, \forall k$, $1\leq k \leq K$, standard regression trees are obtained.

Using a probability density function $\phi$, the functions $\Psi$ considered here link the data points to different regions of the tree and smooth the predictions made. For all $\B{x} \in [0,1]^p$, they are defined by:
\begin{align}\label{Psi}
\Psi(\B{x};\mathcal{R}_{k},\boldsymbol{\sigma}) = \frac{1}{\prod_{j=1}^p\sigma_j} \int_{\mathcal{R}_k} \phi\left(\left(\frac{u_j-x_j}{\sigma_j}\right)_{1\le j \le p}\right)d\mathbf u.
\end{align}
This formula can be interpreted as introducing a latent variable $\mathbf{U}$ such that:
$$ \mathbf{X}= \mathbf{U} + \Sigma \varepsilon_{\mathbf{X}}$$
with $\Sigma = \text{diag}(\sigma_1,\ldots, \sigma_p)$ and $(\varepsilon_{\mathbf{X}})_{j}$ for $1\leq j \leq p$ iid from the distribution $\phi$.
 $\mathbf{X}$ can be considered to be the measured values of the input variables, possibly with additional noise, whereas $\mathbf{U}$ is the true value (without noise) of the input variables.
In practice, experts may have empirical knowledge about the nature of the errors (for example, when measurements are performed by calibrated machines), which can help to choose $\phi$. 

As an example, considering for $\phi$ the multivariate Gaussian distribution with a diagonal covariance matrix defined by $\boldsymbol{\sigma}$ , this leads to:
\begin{align}
\Psi(\B{x};\mathcal{R}_k,\boldsymbol{\sigma}) = \prod_{j=1}^p \frac{1}{\sigma_j \sqrt{2\pi}} \int_{a_{k,j}}^{b_{k,j}} e^{-\frac{(u-x_j)^2}{2\sigma_j^2}} du.
\label{alloc-function}
\end{align}
In this case, defining the distance between an observation $\B{x}$ and a region $\mathcal{R}_k$ by $d(\B{x},\mathcal{R}_k) = \displaystyle \inf_{\B{z} \in \mathcal{R}_k} ||\B{x} - \B{z}||_2^2$, the closer $\B{x}$ is to region $\mathcal{R}_k$, the greater the contribution of $\mathcal{R}_k$ is to the prediction of $Y$ given $\B{x}$.
Note that if $\sigma_j \rightarrow 0$, for all $1\leq j \leq p$, PR trees (or formula (4) but less smooth) come back to standard regression trees.

\subsection{Estimation and algorithm}
Given a training set $ {\cal D}_n=\left\{(\B{x}^{(i)}, y^{(i)})_{1\leq i \leq n}\right\}$, with $\B{x} \in \mathbb{R}^p, \, y \in \mathbb{R}$, and in accordance with the empirical risk minimization principle with a quadratic loss, the estimation procedure for probabilistic regression trees followed here aims to find the parameters $\hat\Theta_n$ as solutions for:
\begin{align}\label{emp-risk}
\underset{{\Theta}}{\operatorname{argmin}} \sum_{i=1}^n \left(y^{(i)} - \sum_{k=1}^K \gamma_k P_{ik}\right)^2,
\end{align}
with $P_{ik}:=\Psi(\B{x}^{(i)};\mathcal{R}_{k},\boldsymbol{\sigma})$. The $n\times K$ matrix $\boldsymbol{P}$ thus encodes the relations between each training example $\B{x}^{(i)}$ and each region $\mathcal{R}_k$ such that $0 \le P_{ik} \le 1$ and $\forall i, \, 1 \le i \le n, \sum_{k=1}^K P_{ik}=1$.

As in standard regression trees, the regions and weights are estimated in the following way: for a fixed $\boldsymbol{\sigma}$, one alternates between region and weight estimates until reaching a stopping criterion (any standard stopping criterion can be used here such as tree depth or number of examples in a leaf). During this process, the number of regions increases, and the matrix $\boldsymbol{P}$ and weights $\boldsymbol{\gamma}$ are gradually updated.

When fixing the regions $(\mathcal{R}_k)_{1\leq k \leq K}$ and the vector $\boldsymbol{\sigma}$, minimizing Eq.~\eqref{emp-risk} with respect to $\boldsymbol{\gamma}$ leads to a linear regression between $(y^{(i)})_{1\leq i \leq n}$ and $\mathbf P$. Note that this is once again the case for the standard regression tree, where the local mean can also be seen as a regression problem.

To estimate $(\mathcal{R}_k)_{1 \le k \le K}$, let us assume that $K$ regions, referred to as \textit{current regions}, have already been identified, meaning that the current tree has $K$ leaves. 
As in standard regression trees, each current region $\mathcal{R}_k, \, 1 \le k \le K$ can be decomposed into two sub-regions with respect to a coordinate $1 \le j \le p$ and a splitting point $s_k^j$ that minimizes Eq.~\eqref{emp-risk}. Each split updates $\B{P}$, which now belongs to $M_{n,K+1}(\mathbb{R})$, and $\boldsymbol{\gamma}$, which now belongs to $\mathbb{R}^{K+1}$. Substituting $\boldsymbol{\gamma}$ with its value, the best split for the current region $\mathcal{R}_k$ is given by:
\begin{align}\label{min-split}
\underset{1\leq j \leq p, s \in \mathcal{S}_k^j}{\operatorname{argmin}} \sum_{i=1}^n \left(y^{(i)} - \sum_{\ell=1}^{K+1} \left(\left(\boldsymbol{P}^T \boldsymbol{P}\right)^{-1} \boldsymbol{P}^T \boldsymbol{y}\right)^{(\ell)} {P}_{il}\right)^2,
\end{align}
where $\mathcal{S}_k^j$ denotes the set of splitting points for region $\mathcal{R}_k$ and variable $j$ (more precisely, $\mathcal{S}_k^j$ is the set of middle points of the observations from $\mathcal{R}_k$ projected on the $j$th coordinate). The variable and associated splitting point, which minimizes Eq.~\eqref{min-split}, is then selected to grow the tree, with the number of regions increasing by one.

Lastly, the vector $\boldsymbol{\sigma}$ can either be based on \textit{a priori} knowledge or be learned through a grid search on a validation set. We rely on the latter in our experiments.

\begin{algorithm}[t]
	\caption{Probabilistic Regression Tree}
	\label{algo:PRT}
	\begin{algorithmic}
		\STATE \textbf{Input:} a training set $ {\cal D}_n=\left\{(\B{x}^{(i)}, y^{(i)})_{1\leq i \leq n}\right\}$ and the new covariate $\B{x}$, a vector $\boldsymbol{\sigma}$; 
		\STATE Initialize: $\mathbf{P} = \mathbf{1}_n$
		\REPEAT 
		\STATE Find the best splitting value satisfying \eqref{min-split}, which defines the new set of regions $(\mathcal{R}_k)_k$
		\STATE Update $\mathbf{P}$ with the new set of regions
		\STATE Update $\boldsymbol{\gamma} = \left(\boldsymbol{P}^T \boldsymbol{P}\right)^{-1} \boldsymbol{P}^T \boldsymbol{y}$
		\UNTIL the stopping criterion is met
\STATE\textbf{Return:} the prediction given by Eq. \eqref{treeProbabilistic}.		
	\end{algorithmic}
\end{algorithm}

The algorithm is summarized in Algorithm \ref{algo:PRT}.

Regarding the bias and variance of this estimator, considering a family of functions $\Psi$ reduces the bias with respect to the standard regression tree, because varying the variance vector $\boldsymbol{\sigma}$ allows a different type of regression function and not only a piecewise constant. This also reduces the variance with respect to the standard regression tree due to the smoothness used instead of an indicator function. This is illustrated numerically in Section \ref{exp:bias_var}.

\subsection{Consistency of probabilistic regression trees}
In this section, we recall the main results regarding the consistency of PR trees. First, we introduce important notations and the set of assumptions required to establish consistency.
\paragraph{Notations} Consider the Sobolev space of functions defined, for $s>0$, as: 
\begin{align*}
 H^s(\mathbb{R}^p) &= \{f\in L^2(\mathbb{R}^p, (1+\|\cdot\|_2^2)^{s/2} |\mathcal{F}f(\cdot)| \in L^2(\mathbb{R}^p)\},\\
 H^s([0,1]^p) &= \{f \in L^2([0,1]^p), \exists g \in H^s(\mathbb{R}^p) \text{ s.t. } f = g|_{[0,1]^p}\},
\end{align*} 
where $\mathcal F f$ denotes the Fourier transform of any function $f$, equipped with the following norms:
\begin{align*}
\|h\|_{H^s(\mathbb{R}^p)}&= \int_{\mathbb{R}^p} (1+\|\xi\|_2^2)^{s/2} |\mathcal{F}f(\xi)|^2 d\xi ,\\
 \|h\|_{H^s([0,1]^p)}&=\inf\{\|h\|_{H^s(\mathbb{R}^p)},\,h\in H^s(\mathbb{R}^p),\,\mbox{ s.t. }h=g|_{[0,1]^p}\}.
\end{align*}

\begin{assumption}\label{ass:Hs}
We assume that for some $s \in (1,2)$, $\mathbb{E}(Y|\mathbf{X} = \cdot) \in H^s([0,1]^p)$.
\end{assumption}

\begin{assumption}\label{ass:phi}
 We assume here that $\phi$ satisfies the following conditions: the support of its Fourier transform is $\mathbb{R}^p$, there exists $ r>0$ such that: 
$$ \sup_{\mathbf{v}\in \mathbb{R}^p} |\mathbf{v}|^{1+r+p/2}|\phi(\mathbf{v})| < \infty$$
which ensures that $\phi \in L^2(\mathbb{R}^p, \mathbb{R})$. 
In addition, we also assume that $\phi_{\boldsymbol\sigma}\in B^{(1)}_{1,\infty}$.
\end{assumption}

Set $M>0$, we denote: 
\begin{equation}\label{eq:mathcal-M}
\mathcal{M}(M):=\sup_{|\mathbf v|\geq M}|\phi_{\mathbf \sigma}(\mathbf v)|.
\end{equation}

We also make standard assumptions \citep{scornet:2015} on the regions defined by the tree.
\begin{assumption}
\label{ass:tree}
Set $M>0$. 
Let $(K_n)_{n \in \mathbb{N}}$ such that: 
\begin{align*}
 K_n \underset{n\rightarrow +\infty}{\longrightarrow} &+\infty\\
 K_n (\log n)^9/n \underset{n\rightarrow +\infty}{\longrightarrow} &0\\
 \max_{k=1,\ldots, K_n} [\text{diam}(\mathcal{R}_k \cap [-M,M]^p)] \underset{n\rightarrow +\infty}{\longrightarrow} &0.
\end{align*}
\end{assumption}
The assumption about the diameter of the regions is reasonable for data points lying in a compact subspace: as the number of regions increases to infinity in line with the number of data points, their diameter will decrease.

\paragraph{Consistency} 
We show here that the parameter $\hat\Theta_n$ of PR tree learned from a training set of size $n$ is consistent.
\begin{theorem}\label{thm:PRTree}
We assume that Assumptions \ref{ass:Xcompact}, \ref{ass:Hs}, \ref{ass:phi}, \ref{ass:tree} hold. 
Then
$$ \lim_{n\rightarrow +\infty} \mathbb{E}[|{f}_{\text{PR}}(\mathbf{X};\hat\Theta_n) - \mathbb{E}(Y|\mathbf{X})|^2] = 0.$$
\end{theorem}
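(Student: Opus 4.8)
The plan is to cast $\hat\Theta_n$, or rather the induced estimator $\hat f_n:=f_{\text{PR}}(\cdot;\hat\Theta_n)$, as a least--squares estimator over a sieve and to follow the classical approximation--estimation route for partitioning estimates (Györfi et al., 2002, Chapters~11--13; \citet{scornet:2015}). Write $f^\star(\B x):=\mathbb{E}(Y\mid\B X=\B x)$, so that $f^\star\in H^s([0,1]^p)\subset L^2$ by Assumption~\ref{ass:Hs}. Let $\mathcal F_n$ be the family of functions $\B x\mapsto\sum_{k=1}^{K_n}\gamma_k\,\Psi(\B x;\mathcal R_k,\boldsymbol\sigma_n)$ in which $(\mathcal R_k)_k$ ranges over all partitions of $[0,1]^p$ into $K_n$ hyper--rectangles obtained by dyadic splits whose restriction to $[-M,M]^p$ has diameter at most $\delta_n$, $|\gamma_k|\le\beta_n$, and $\boldsymbol\sigma_n\to 0$ is a fixed deterministic sequence (since $\boldsymbol\sigma$ is chosen in practice by validation, it does at least as well as any such sequence). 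Conditionally on the learned partition -- whose cells shrink by Assumption~\ref{ass:tree} -- $\hat f_n$ is, by the weight step of Algorithm~\ref{algo:PRT}, exactly the least--squares estimator over a $K_n$--dimensional linear space contained in $\mathcal F_n$; hence, after truncating its output at level $\beta_n$, $\hat f_n\in\mathcal F_n$ with probability tending to one. A standard oracle inequality for truncated least--squares estimators (with $Y$ itself truncated at $\beta_n$, picking up logarithmic factors because $\varepsilon_Y$ is Gaussian) then gives
\[
\mathbb{E}\big[\,|\hat f_n(\B X)-f^\star(\B X)|^2\,\big]\;\lesssim\;\frac{\beta_n^4\,(\log n)\,\log\mathcal N_n}{n}\;+\;\inf_{f\in\mathcal F_n}\|f-f^\star\|^2_{L^2(P_{\B X})},
\]
where $\mathcal N_n$ is an $L^1$--covering number of $\mathcal F_n$; it remains to show that each term vanishes.

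\noindent\textbf{Estimation term.} A function of $\mathcal F_n$ is determined by $O(K_n)$ real parameters (the $2K_n$ endpoints of the rectangles, the $K_n$ weights, and the $p$ entries of $\boldsymbol\sigma_n$), and $\B x\mapsto\Psi(\B x;\mathcal R_k,\boldsymbol\sigma)$ is Lipschitz in these parameters on the relevant ranges, so $\log\mathcal N_n=O(K_n\log n)$, exactly as for CART partitions in \citet{scornet:2015}. Choosing $\beta_n$ polylogarithmic in $n$ (as required to truncate both $Y$ and the weights), the estimation term is $O\!\big(K_n(\log n)^{c}/n\big)$ for a fixed $c\le 9$, which tends to $0$ by the condition $K_n(\log n)^9/n\to 0$ of Assumption~\ref{ass:tree}.

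\noindent\textbf{Approximation term.} The key observation is that $\Psi(\cdot;\mathcal R_k,\boldsymbol\sigma)=\mathds{1}_{\mathcal R_k}\ast\phi_{\boldsymbol\sigma}$ (a convolution), so every element of $\mathcal F_n$ equals $g\ast\phi_{\boldsymbol\sigma_n}$ for a piecewise--constant standard--tree function $g=\sum_k\gamma_k\mathds{1}_{\mathcal R_k}$. Taking for $g$ the $L^2$--projection of $f^\star$ onto the current partition and using $\|\phi_{\boldsymbol\sigma_n}\|_{L^1}=1$,
\[
\|g\ast\phi_{\boldsymbol\sigma_n}-f^\star\|_{L^2}\;\le\;\|g-f^\star\|_{L^2}\,\|\phi_{\boldsymbol\sigma_n}\|_{L^1}\;+\;\|f^\star\ast\phi_{\boldsymbol\sigma_n}-f^\star\|_{L^2}.
\]
The first term tends to $0$ because the partition diameters on $[-M,M]^p$ shrink (Assumption~\ref{ass:tree}) and projections onto piecewise--constant functions over shrinking partitions converge to the identity in $L^2$ (density of continuous functions), the tail outside $[-M,M]^p$ being handled by letting $M\to\infty$ jointly with $n$ since $\B X\in[0,1]^p$. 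The second term tends to $0$ as $\boldsymbol\sigma_n\to 0$ by a Fourier / approximate--identity argument: $\widehat{\phi_{\boldsymbol\sigma_n}}(\xi)=\widehat\phi(\boldsymbol\sigma_n\xi)\to 1$ pointwise with $|\widehat\phi|\le 1$, and the rate is quantified through the Sobolev regularity of $f^\star$ (Assumption~\ref{ass:Hs}) together with the Fourier decay and the Besov regularity $\phi_{\boldsymbol\sigma}\in B^{(1)}_{1,\infty}$ of $\phi$ (Assumption~\ref{ass:phi}), the far field being controlled by $\mathcal M(M)$ of \eqref{eq:mathcal-M}. Passing from $\|\cdot\|_{L^2}$ on $[0,1]^p$ to $\|\cdot\|_{L^2(P_{\B X})}$ under the standing assumptions on the law of $\B X$ and combining, $\inf_{f\in\mathcal F_n}\|f-f^\star\|^2_{L^2(P_{\B X})}\to 0$, which with the estimation bound proves the theorem.

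\noindent\textbf{Main obstacle.} The delicate point is the approximation term, namely the interaction between the data--dependent partition and a smoothing operator $\phi_{\boldsymbol\sigma_n}$ that simultaneously degenerates to a Dirac mass: one must make the bound uniform over $\mathcal F_n$ while balancing the piecewise--constant error against the convolution error, which forces $\delta_n,\boldsymbol\sigma_n,\beta_n,M$ to be chosen as coupled sequences. This is exactly where the full strength of Assumption~\ref{ass:phi} enters -- the Fourier transform of $\phi$ having full support (so $\mathcal F_n$ is rich enough that the sieve is dense), the polynomial decay of $\phi$ (so $\phi\in L^2$ and the covering estimates are valid), and $\phi_{\boldsymbol\sigma}\in B^{(1)}_{1,\infty}$ (controlling the modulus of continuity of $g\ast\phi_{\boldsymbol\sigma}$). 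The estimation term, by contrast, is routine once the $O(K_n)$--parameter structure of $\mathcal F_n$ is identified and the covering argument of \citet{scornet:2015} is invoked.
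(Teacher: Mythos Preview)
The paper does not prove Theorem~\ref{thm:PRTree} itself; it defers to \cite{Alkhoury_2020}. However, the approximation machinery developed in the appendix for P-BART (Proposition~\ref{pro:approx1} and the quantity $\beta(\lambda)$ in Assumption~A\ref{ass:BART}.5) exposes the intended argument, and it differs from yours at the decisive step.

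\textbf{The gap.} Your approximation term is handled by sending $\boldsymbol\sigma_n\to 0$ so that $\phi_{\boldsymbol\sigma_n}$ acts as an approximate identity and $\|f^\star*\phi_{\boldsymbol\sigma_n}-f^\star\|_{L^2}\to 0$. But the theorem is stated for $\hat\Theta_n$ with $\boldsymbol\sigma$ possibly \emph{fixed a priori} (the paper says this explicitly in Section~\ref{sec:PRtree}), and nothing in Assumptions~\ref{ass:Xcompact}--\ref{ass:tree} forces $\boldsymbol\sigma\to 0$. For a fixed $\boldsymbol\sigma>0$ your second term $\|f^\star*\phi_{\boldsymbol\sigma}-f^\star\|_{L^2}$ is a strictly positive constant and the bound does not close. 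The remark that validation ``does at least as well as any such sequence'' is not a proof: your oracle inequality compares $\hat f_n$ to competitors in $\mathcal F_n$ built with the \emph{same} $\boldsymbol\sigma$ as $\hat f_n$, not with a separately chosen vanishing sequence. More tellingly, you list the full-support condition on $\mathcal F\phi$ from Assumption~\ref{ass:phi} but never use it; in your argument it is superfluous, which is a sign that you are not proving the theorem as stated.

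\textbf{What the paper does instead.} The route visible in Proposition~\ref{pro:approx1} is a \emph{deconvolution} argument at fixed $\boldsymbol\sigma$. Since every tree function is $g*\phi_{\boldsymbol\sigma}$ with $g$ piecewise constant, one does not take $g\approx f^\star$; one takes $g$ so that $\widehat g\approx \widehat{f^\star}/\widehat{\phi_{\boldsymbol\sigma}}$ on a growing frequency window $|\boldsymbol\omega|\le\lambda$, which is exactly why the support of $\mathcal F\phi$ must be all of $\mathbb R^p$. The Sobolev regularity (Assumption~\ref{ass:Hs}) controls the tail $|\boldsymbol\omega|>\lambda$ via the factor $(1+|\lambda|)^{-s/2}$, and the price of the division is that the weights blow up like $\beta(\lambda)=\lambda^{p/2}/\inf_{|\boldsymbol\omega|\le\lambda}|\mathcal F\phi_{\boldsymbol\sigma}(\boldsymbol\omega)|$; this growing $\beta$ is what sets the truncation level $\beta_n$ and feeds back into the covering bound, and balancing $\lambda$, $K_n$, $M$ and $\beta_n$ is where the condition $K_n(\log n)^9/n\to 0$ is spent. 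Your estimation half and the overall sieve/oracle framework are perfectly in line with the paper; it is the approximation half that needs to be redone along these deconvolution lines rather than by letting the kernel degenerate.
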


This theorem is proven in \cite{Alkhoury_2020}. 

\section{Extension to bagging and boosting ensemble methods}\label{sec:bag-boost}
\subsection{Probabilistic regression random forest}
Bagging consists of averaging simple, noisy, but unbiased models. The RF is a substantial modification of bagging, which builds a collection of decorrelated trees and then averages them. 
The extension of PR trees to RFs over $m$ trees, denoted by PR-RF, is defined as follows:
\begin{align}\label{pred_RF}
f_{\text{PR-RF}}^{(m)} \left(\B{x};  \boldsymbol{\Theta}\right) = \frac1m \sum_{\ell=1}^m {f}_{\text{PR}} \left(\B{x}; \Theta^{(\ell)}\right) 
&=\frac1m \sum_{\ell=1}^m\sum_{k=1}^K \gamma_k^{(\ell)} \Psi(\B{x};\mathcal{R}_{k}^{(\ell)},\boldsymbol{\sigma})
\end{align}
where {$ \boldsymbol{\Theta} = (\Theta^{(1)},\ldots, \Theta^{(m)})$} 
with $\Theta^{(\ell)}$ characterizing the $\ell^{th}$ RF's tree in terms of parameters (i.e., split variables, cut points, predictions, and variances). 
The corresponding algorithm is summarized in Algorithm \ref{algo:PRF}. Note that bagging can be undertaken for any extension of regression trees. 

The standard RF is known to reduce the variance, because it averages identically distributed random variables (each tree), which are constructed to be the least correlated using bootstrap variables for each tree. This conclusion applies to any bagging extension, and specifically to the PR-RF, with the bias-variance trade-off being illustrated in Section \ref{exp:bias_var}.

\begin{algorithm}[t]
	\caption{Probabilistic Random Forest (PR-RF)}
	\label{algo:PRF}
	\begin{algorithmic}
		\STATE \textbf{Input:} a training set $ {\cal D}_n=\left\{(\B{x}^{(i)}, y^{(i)})_{1\leq i \leq n}\right\}$, the new covariate $\B{x}$, a vector $\boldsymbol{\sigma}$, and the number of trees $m$; 
		\STATE Initialize: $\mathbf{P}^{(\ell)}=\mathbf{1}_n$ for $\ell=1,\ldots, m$
		\FOR{$\ell=1,\ldots, m$}
		\STATE Randomly select variables to be used in the tree $\ell$
		\REPEAT 
		\STATE Find the best splitting value satisfying \eqref{min-split} from the restricted set of variables that defines the new set of regions $(\mathcal{R}_k^{(\ell)})_k$
		\STATE Update $\mathbf{P}^{(\ell)}$ with the new set of regions
		\STATE Update $\boldsymbol{\gamma}^{(\ell)}=\left((\boldsymbol{P}^{(\ell)})^T \boldsymbol{P}^{(\ell)}\right)^{-1} (\boldsymbol{P}^{(\ell)})^T \boldsymbol{y}$
		\UNTIL the stopping criterion is met
		\ENDFOR
\STATE\textbf{Return:} the prediction given by Eq. \eqref{pred_RF}		
	\end{algorithmic}
\end{algorithm}

From a theoretical viewpoint, we can easily adapt the consistency from \cite{scornet:2015} to derive the consistency of PR-RF when the number of trees $m$ grows to infinity. Two cases are described in this paper depending on the condition to construct each tree: either the total number of leaves in each tree tends to infinity more slowly than the number of selected data points or the trees are fully grown (i.e., the number of leaves is equal to the number of selected data points). We derive our theorem from the first case, but its extension to the second case is straightforward. As stronger assumptions can lead to strong consistency, we keep the weakest case here. Note that we do not need the assumption about the additive models made in \cite{scornet:2015} as we consider Sobolev spaces for the regression function.

\begin{theorem}
For $n \in \mathbb{N}$ as the sample size, let $t_n$ be the total number of leaves in each tree, and $a_n$ be the number of selected data points to construct each tree. 
We also assume that the assumptions \ref{ass:Xcompact}, \ref{ass:Hs}, \ref{ass:phi}, \ref{ass:tree} hold.
Then, provided $a_n \rightarrow +\infty$, $t_n \rightarrow +\infty$, $t_n(\log a_n)^9/a_n \rightarrow 0$, the PR-RF are consistent, that is: 
\[
\lim_{n\rightarrow +\infty, m\rightarrow +\infty} \mathbb{E}[|f_{\text{PR-RF}}(\mathbf{X};\hat\Theta_n) - \mathbb{E}(Y|\mathbf{X})|^2] = 0.
\]
\end{theorem}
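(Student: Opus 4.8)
The plan is to reduce the consistency of the forest to that of a single randomized PR tree, which is already granted by Theorem~\ref{thm:PRTree}, following the line of argument of \citet{scornet:2015}. \emph{Step 1 (from the finite forest to the infinite forest).} Conditionally on the training set $\mathcal{D}_n$ and on the test point $\mathbf{X}$, the randomizations $\Theta^{(1)},\dots,\Theta^{(m)}$ — which encode the subsampling of the $a_n$ observations used to grow each tree together with the random choice of candidate split variables — are i.i.d., so by the strong law of large numbers $f_{\text{PR-RF}}^{(m)}(\mathbf{X};\boldsymbol\Theta)$ converges almost surely, as $m\to\infty$, to $\bar f_n(\mathbf{X}):=\mathbb{E}_{\Theta}\!\left[f_{\text{PR}}(\mathbf{X};\Theta^{(1)})\mid\mathcal{D}_n\right]$. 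Combining the finite second moment of $Y$ coming from~\eqref{eq:reg} with a uniform integrable envelope for the (truncated) per-tree predictions — obtained exactly as in the proof of Theorem~\ref{thm:PRTree} — dominated convergence yields $\lim_{m\to\infty}\mathbb{E}\!\left[\big|f_{\text{PR-RF}}^{(m)}(\mathbf{X};\boldsymbol\Theta)-\mathbb{E}(Y|\mathbf{X})\big|^2\right]=\mathbb{E}\!\left[\big|\bar f_n(\mathbf{X})-\mathbb{E}(Y|\mathbf{X})\big|^2\right]$, so that it remains to prove that the latter quantity tends to $0$ as $n\to\infty$.

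\emph{Step 2 (convexity).} Since $t\mapsto t^2$ is convex and the randomization $\Theta$ is independent of $(\mathbf{X},Y)$, Jensen's inequality applied conditionally on $\mathcal{D}_n$ and $\mathbf{X}$, followed by Fubini's theorem, gives
\begin{equation*}
\mathbb{E}\!\left[\big|\bar f_n(\mathbf{X})-\mathbb{E}(Y|\mathbf{X})\big|^2\right]\;\le\;\mathbb{E}_{\Theta}\,\mathbb{E}\!\left[\big|f_{\text{PR}}(\mathbf{X};\Theta)-\mathbb{E}(Y|\mathbf{X})\big|^2\right].
\end{equation*}
Hence it suffices to control, on average over $\Theta$, the $L^2$ risk of a single PR tree grown on $a_n$ observations with a randomly restricted set of candidate variables.

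\emph{Step 3 (consistency of each randomized tree).} For a fixed realization of $\Theta$, this object is a PR tree trained on $a_n$ i.i.d. observations from the same distribution and stopped at $t_n$ leaves. Assumption~\ref{ass:Xcompact} is inherited by the subsample; Assumptions~\ref{ass:Hs} and~\ref{ass:phi} bear only on the data-generating law and on the kernel $\phi$ and are unchanged; and Assumption~\ref{ass:tree} holds with $(K_n,n)$ replaced by $(t_n,a_n)$, since $t_n\to\infty$ and $t_n(\log a_n)^9/a_n\to 0$ are precisely the stated hypotheses while the region-diameter condition is part of the assumed tree construction. Note that, because the PR-tree consistency exploits the Sobolev regularity of $\mathbb{E}(Y|\mathbf{X}=\cdot)$ through Assumption~\ref{ass:Hs} rather than an additive structure of the regression function, the random restriction of the split variables creates no additional obstruction here, in contrast with \citet{scornet:2015}. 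Therefore Theorem~\ref{thm:PRTree} applies to each randomized tree and, since $a_n\to\infty$, gives $\mathbb{E}\!\left[\big|f_{\text{PR}}(\mathbf{X};\hat\Theta_{a_n})-\mathbb{E}(Y|\mathbf{X})\big|^2\right]\to 0$ as $n\to\infty$. As the per-tree risk is dominated by the integrable envelope of Step~1, a last application of dominated convergence over $\Theta$ transfers this convergence to the average in Step~2, which completes the proof.

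\emph{Main obstacle.} I expect the difficulty to be not conceptual but measure-theoretic: producing a single integrable envelope, uniform in $n$, $m$ and $\Theta$, for the squared per-tree risk, so that both interchanges of limit and expectation (over $m$ and over $\Theta$) are legitimate. This is why the truncation of the estimator used in the single-tree analysis — and in \citet{scornet:2015} — is the technically delicate ingredient that has to be carried over here; everything else is a direct transfer of Theorem~\ref{thm:PRTree}.
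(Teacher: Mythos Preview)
Your proposal is correct and follows essentially the same three-step route as the paper's sketch: pass from the finite forest to the infinite forest via the law of large numbers, bound the infinite-forest risk by the average single-tree risk through a convexity/Jensen argument (which the paper phrases as Cauchy--Schwarz), and then invoke Theorem~\ref{thm:PRTree} for each randomized tree, combining everything with Fubini. Your attention to the integrable envelope needed for the two limit/expectation interchanges is, if anything, slightly more careful than the paper's sketch, which simply appeals to Fubini at the end.
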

\begin{proof}\textbf{[sketch]} From Theorem \ref{thm:PRTree}, we deduce the existence of $\hat\Theta_n$ independent of $\mathbf{X}$ such that:
\begin{equation}\label{eq:step1a}
\mathbb{E}_{\mathbf X}[\left|f_{\text{PR}}(\mathbf{X};\hat\Theta_n)-\mathbb{E}[Y|\mathbf X]\right|^2]\leq c_n
\end{equation}
with $c_n\to 0$ as $n\to \infty$. We first prove that: 
\begin{equation}\label{eq:step1b}
\mathbb{E}_{\mathbf X}[\left|\mathbb{E}_{\mathcal{D}_n}[f_{\text{PR}}(\mathbf X;\hat\Theta_n)]-\mathbb{E}[Y|\mathbf X]\right|^2]\to 0\mbox{ as }n\to \infty.
\end{equation}
First, observe that by independence of $\hat\Theta_n$ and $\mathbf X$: 
\[
\mathbb{E}_{\mathcal{D}_n}\left[f_{\text{PR}}(\mathbf X;\hat\Theta_n)\right]-\mathbb{E}[Y|\mathbf X]=\mathbb{E}_{\mathcal{D}_n}\left[f_{\text{PR}}(\mathbf X;\hat\Theta_n)-\mathbb{E}[Y|\mathbf X]\right]
\]
Hence:
\begin{eqnarray*}
\left|\mathbb{E}_{\mathcal{D}_n}[f_{\text{PR}}(\mathbf X;\hat\Theta_n)]-\mathbb{E}[Y|\mathbf X]\right|^2&=&\left|\mathbb{E}_{\mathcal{D}_n}\left[f_{\text{PR}}(\mathbf X;\hat\Theta_n)-\mathbb{E}[Y|\mathbf X]\right]\right|^2\\
&\leq&\mathbb{E}_{\mathcal{D}_n}[\left|f_{\text{PR}}(\mathbf X;\hat\Theta_n)-\mathbb{E}[Y|\mathbf X]\right|^2]
\end{eqnarray*}
where in the last display, we use the Cauchy-Schwarz inequality. Hence, by monotonicity of $\mathbb{E}_{\mathbf X}$:
\begin{eqnarray*}
\mathbb{E}_{\mathbf X}[\left|\mathbb{E}_{\mathcal{D}_n}[f_{\text{PR}}(\mathbf X;\hat\Theta_n)]-\mathbb{E}[Y|\mathbf X]\right|^2]&\leq&\mathbb{E}_{\mathbf X}[\mathbb{E}_{\mathcal{D}_n}[\left|f_{\text{PR}}(\mathbf X;\hat\Theta_n)-\mathbb{E}[Y|\mathbf X]\right|^2]]\\
&=&\mathbb{E}_{\mathcal{D}_n}[\mathbb{E}_{\mathbf X}[\left|f_{\text{PR}}(\mathbf X;\hat\Theta_n)-\mathbb{E}[Y|\mathbf X]\right|^2]]\\
&\leq& \mathbb{E}_{\mathcal{D}_n}[c_n]=c_n
\end{eqnarray*}
with the two last displays stemming from the independence of $\hat\Theta_n$ and $\mathbf X$ and~\eqref{eq:step1a}. Since $c_n\to 0$ as $n\to\infty$, this directly implies~\eqref{eq:step1b}. 

Let us now consider $\hat\Theta^{(1)}_n,\cdots,\hat\Theta^{(m)}_n$ iid copies of $\hat\Theta_n$, jointly independent of $\mathbf X$. From the law of large numbers, almost surely then:
\[
 \lim_{m\rightarrow+\infty} {f}_{\text{PR-RF}}^{(m)}\left(\B{X}; \hat{ \boldsymbol{\Theta}}_n \right) 
 = \lim_{m\rightarrow+\infty} \frac1m \sum_{\ell=1}^m {f}_{\text{PR}} \left(\B{X}; \hat\Theta_n^{(\ell)}\right) 
 = \mathbb{E}_{\mathcal{D}_n} (f_{\text{PR}}(\mathbf{X};\hat\Theta_n)).
\]
\begin{equation}\label{eq:step2}
\end{equation}
Combining~\eqref{eq:step1b} and~\eqref{eq:step2} using Fubini's theorem, it yields:
\[\lim_{n\rightarrow +\infty, m\rightarrow +\infty} \mathbb{E}[|{f}_{\text{PR-RF}}(\mathbf{X};\hat\Theta_n) - \mathbb{E}(Y|\mathbf{X})|^2] = 0.\]
\end{proof}

Note that assumptions $1$ to $4$ are only needed to establish the result presented in Theorem~\ref{thm:PRTree} so that the above proof, derived from \citet{scornet:2015}, is valid for any approach consistent in the sense given in Theorem~\ref{thm:PRTree}. 

\subsection{Probabilistic regression gradient boosted trees}

Gradient-boosted trees gradually improve the prediction by optimizing the residuals with respect to the prediction based on the trees constructed so far. Let us assume that $(m-1)$ PR trees have been built so far. The $m^{th}$ smooth tree and its parameter $\hat{\Theta}^{(m)}$ are obtained from Eq.~\eqref{emp-risk} using:
\begin{align}
\underset{{\Theta^{(m)}}}{\operatorname{argmin}} \sum_{i=1}^n \left(\left(y^{(i)} - \sum_{\ell=1}^{m-1} \sum_{k=1}^{K^{(\ell)}} \gamma_k^{(\ell)} [P_{ik}]^{(\ell)}\right) - \sum_{k=1}^{K^{(m)}} \gamma_k^{(m)} [P_{ik}]^{(m)}\right)^2\nonumber
\end{align}
where the matrices $\mathbf{P}$ depend on the regions.
So, the prediction function is given by:
\begin{align}\label{pred_GBT}
{f}_{\text{PR-GBT}}^{(m)} \left(\B{x}; \boldsymbol{\Theta}\right) = \sum_{\ell=1}^m {f}_{\text{PR}} \left(\B{x}; \Theta^{[\ell]}\right) 
&= \sum_{\ell=1}^m \sum_{k=1}^K \gamma_k^{(\ell)} \Psi(\B{x};\mathcal{R}_{k}^{(\ell)},\boldsymbol{\sigma})
\end{align}
where $\boldsymbol{\Theta} = (\Theta^{[1]},\ldots, \Theta^{[m]})$. 

\begin{algorithm}[t]
	\caption{ Probabilistic Regression Gradient Boosted Tree (PR-GBT)}
	\label{algo:P-GBT}
	\begin{algorithmic}
			\STATE \textbf{Input:} a training set $ {\cal D}_n=\left\{(\B{x}^{(i)}, y^{(i)})_{1\leq i \leq n}\right\}$, the new covariate $\B{x}$, a vector $\boldsymbol{\sigma}$, and the number of trees $m$; 
		\STATE Initialize: $\mathbf{P}^{(\ell)}=\mathbf{1}_n$ for $\ell=1,\ldots, m$
		\FOR{$\ell=1,\ldots, m$}
		\REPEAT 
		\STATE Find the best splitting value satisfying \eqref{min-split}, where $y$ is updated by the residuals, which defines the new set of regions $(\mathcal{R}_k^{(\ell)})_k$
		\STATE Update $\mathbf{P}^{(\ell)}$ with the new set of regions
		\STATE Update $\boldsymbol{\gamma}^{(\ell)}=\left((\boldsymbol{P}^{(\ell)})^T \boldsymbol{P}^{(\ell)}\right)^{-1} (\boldsymbol{P}^{(\ell)})^T \boldsymbol{y}$
		\UNTIL the stopping criterion is met
		\ENDFOR
\STATE\textbf{Return:} the prediction given by Eq. \eqref{pred_GBT}		
	\end{algorithmic}
\end{algorithm}

Algorithm \ref{algo:P-GBT} details the different steps for obtaining probabilistic regression gradient-boosted trees (PR-GBT). Note that boosting methods are known to reduce the bias of the prediction function (while allowing for a small variance), which is true for our PR-GBT prediction function, as illustrated in Section \ref{exp:bias_var}.

However, boosting forever can overfit the data, making it necessary to stop the procedure with an adaptive finite number of steps \citep{ZhangYu2005}. In our theoretical analysis, as we mimic the results obtained in \cite{ZhangYu2005}, we apply early stopping; in practice however, we fix the number of trees. Furthermore, in line with \cite{ZhangYu2005}, we consider here convergence in probability (and not in $L_2$ as before).

\begin{theorem}
For the sample size $n$, consider $k_n$ and $\alpha_n$ as two sequences of numbers (independent of the sample) such that $\lim_{n \rightarrow +\infty} k_n = \infty, \lim_{n\rightarrow +\infty} \alpha_n^2 \sqrt{\log(n)} = 0$. 
We also assume Assumptions \ref{ass:Xcompact}, \ref{ass:Hs}, \ref{ass:phi}.
Then, as long as we stop at step $\hat m$ based on the observations such that $\hat m \geq k_n$ and $\|\hat{f}_{\text{PR-GBT}}^{\hat m}\|_1 \leq \alpha_n$, PR-GBT are consistent: 
$$ {f}_{\text{PR-GBT}}^{\hat m} \left(\B{X}; \hat{ \boldsymbol{\Theta}}_n\right) \overset{\mathbf{P}}{\underset{n\rightarrow +\infty}{\longrightarrow}} \mathbb{E}[Y|\mathbf X].$$
\end{theorem}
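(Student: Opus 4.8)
The plan is to recast PR-GBT as greedy (stagewise) boosting over a fixed dictionary of base functions and then to transfer the early-stopping consistency analysis of \citet{ZhangYu2005} to this setting. Concretely, I would let $\mathcal{H}=\{\B{x}\mapsto\Psi(\B{x};\mathcal{R},\boldsymbol{\sigma}):\mathcal{R}\text{ a hyper-rectangle of }[0,1]^p\}$. By \eqref{treeProbabilistic} every tree appended at a boosting step is a finite linear combination of elements of $\mathcal{H}$ whose coefficients are fitted by least squares on the current residuals, so $f_{\text{PR-GBT}}^{m}$ is exactly the $m$-th boosting iterate with base class $\mathcal{H}$ and quadratic cost $A(f)=\mathbb{E}[(Y-f(\B{X}))^2]$. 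Writing $\bar f=\mathbb{E}[Y\mid\B{X}=\cdot]$, the target conclusion $f_{\text{PR-GBT}}^{\hat m}(\B X;\hat{\boldsymbol\Theta}_n)\overset{\B P}{\to}\bar f(\B X)$ is equivalent to $A(\hat f_{\text{PR-GBT}}^{\hat m})-A(\bar f)=\mathbb{E}_{\B X}[(\hat f_{\text{PR-GBT}}^{\hat m}(\B X)-\bar f(\B X))^2]\to 0$ in probability, so it suffices to control this excess risk along the data-dependent stopping time $\hat m$.

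Next I would verify the three structural ingredients that \citet{ZhangYu2005} require of the (loss, base class) pair. First, the loss must be convex and locally Lipschitz: squared loss qualifies once the response and the predictor are confined to a bounded range, which I would arrange by a standard truncation at a level of order $\sqrt{\log n}$, the residual bias being negligible by the Gaussian tail of $\varepsilon_Y$ in \eqref{eq:reg} together with the $L^2$-integrability of $\bar f$ on the compact $[0,1]^p$ (Assumptions~\ref{ass:Xcompact},~\ref{ass:Hs}). Second, the base class must have controlled complexity: $\mathcal{H}$ is parametrised by the $2p$ corner coordinates of $\mathcal{R}$ and every element is bounded by $1$, so it has finite pseudo-dimension and polynomial empirical covering numbers, which gives a uniform gap between empirical and population risk of order $\alpha_n\sqrt{(\log n)/n}$ over the ball $\{f:\|f\|_1\le\alpha_n\}$; this is exactly where the hypothesis $\alpha_n^2\sqrt{\log n}\to0$ is used, to force the estimation error to vanish. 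Third, the data-dependent stopping rule must satisfy $\hat m\to\infty$ and keep the iterate inside a slowly growing $\ell_1$-ball, both of which are given by the assumptions $\hat m\ge k_n\to\infty$ and $\|\hat f_{\text{PR-GBT}}^{\hat m}\|_1\le\alpha_n$.

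The remaining ingredient, and the one I expect to be the main obstacle, is the approximation (denseness) step: I would need to show that for every $\epsilon>0$ there is a $g\in\operatorname{span}(\mathcal{H})$ with $A(g)-A(\bar f)\le\epsilon$ and finite $\ell_1$-mass, i.e. that $\bar f$ lies in the $L^2([0,1]^p)$-closure of $\operatorname{span}(\mathcal{H})$, so that once $\hat m\ge k_n$ is large the greedy iterate can track such a $g$. I would obtain this from the same Fourier-analytic machinery already behind Theorem~\ref{thm:PRTree}: Assumption~\ref{ass:phi} (full-support Fourier transform of $\phi$, its polynomial decay, and $\phi_{\boldsymbol\sigma}\in B^{(1)}_{1,\infty}$) combined with the Sobolev regularity $\bar f\in H^s([0,1]^p)$, $s\in(1,2)$, of Assumption~\ref{ass:Hs} yields a quantitative $L^2$-approximation of $\bar f$ by superpositions of $\Psi(\cdot;\mathcal{R},\boldsymbol\sigma)$, which can then be discretised into finite linear combinations while controlling the total coefficient mass. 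In contrast with the single-tree proof, no diameter condition on the regions (Assumption~\ref{ass:tree}) is needed here, since the boosting span is already rich enough on its own.

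Finally I would assemble the pieces in the Zhang--Yu format: the estimation bound of the second step plus the approximation bound of the third step show that the empirical boosting risk at the adaptive stopping time $\hat m$ converges in probability to $A(\bar f)$, and the excess-risk identity of the first step then delivers $f_{\text{PR-GBT}}^{\hat m}(\B X;\hat{\boldsymbol\Theta}_n)\overset{\B P}{\to}\mathbb{E}[Y\mid\B X]$. The hard part is really the approximation step — keeping the $\ell_1$-mass required to approximate $\bar f$ by the $\Psi$-dictionary compatible with the admissible ball $\{f:\|f\|_1\le\alpha_n\}$ whose radius grows only like a power of $\log n$ — which is precisely why the spectral and Besov conditions of Assumption~\ref{ass:phi} and the Sobolev smoothness of Assumption~\ref{ass:Hs} are indispensable; the truncation making squared loss Lipschitz is a secondary technicality.
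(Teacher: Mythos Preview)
Your proposal is correct and follows essentially the same route as the paper: both invoke Theorem~3.1 of \citet{ZhangYu2005} for early-stopped boosting under quadratic loss, and both hinge on (i) a complexity bound on the base class to control the estimation term and (ii) denseness of its span in $L^2$ to control the approximation term, together with the stopping conditions $\hat m\ge k_n$ and $\|\hat f\|_1\le\alpha_n$. The paper's sketch states the three-term decomposition explicitly and names the tool for each (Rademacher complexity, CLT, Corollary~4.3 of \citet{ZhangYu2005}), while you are more explicit about \emph{how} Assumptions~\ref{ass:Xcompact}--\ref{ass:phi} enter the denseness step via the Fourier machinery behind Theorem~\ref{thm:PRTree} and about the pseudo-dimension argument for the base class; these are the same ingredients, just spelled out more fully.
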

\begin{proof}\textbf{sketch}
We use Theorem 3.1 from \cite{ZhangYu2005} to obtain this theorem.
The least square function is used as the loss function. 
The main idea of this proof is to decompose the risk into three terms, bounded independently. Let $f^{\star}$ be a candidate with the form \eqref{pred_GBT} constructed by boosting. 
\begin{align}
 & \mathbf{E}_{\mathcal{D}_n} \mathbf{E}_{\mathbf{X}, Y}\left(Y - {f}_{\text{PR-GBT}}^{\hat m} \left(\B{X}; \hat{ \boldsymbol{\Theta}}_n\right)\right)^2 - \mathbf{E}_{\mathbf{X}, Y}\left(Y - {f}^{\star} \left(\B{X}\right)\right)^2 \nonumber\\
 &= \mathbf{E}_{\mathcal{D}_n} \mathbf{E}_{\mathbf{X}, Y}\left(Y - {f}_{\text{PR-GBT}}^{\hat m} \left(\B{X}; \hat{ \boldsymbol{\Theta}}_n\right)\right)^2 - \frac1n 
 \mathbf{E}_{\mathcal{D}_n} \sum_{i=1}^n \left(y^{(i)} - {f}_{\text{PR-GBT}}^{\hat m} \left(\B{x}^{(i)}; \hat{ \boldsymbol{\Theta}}_n\right)\right)^2 \label{term1}\\
 &+ \frac1n 
 \mathbf{E}_{\mathcal{D}_n} \sum_{i=1}^n \left(y^{(i)} - {f}^{\star} \left(\B{x}^{(i)}\right)\right)^2
 - \mathbf{E}_{\mathbf{X}, Y}\left(Y - {f}^{\star} \left(\B{X}\right)\right)^2\label{term2}\\
 &+ \frac1n 
 \mathbf{E}_{\mathcal{D}_n} \sum_{i=1}^n \left(y^{(i)} - {f}_{\text{PR-GBT}}^{\hat m} \left(\B{x}^{(i)}; \hat{ \boldsymbol{\Theta}}_n\right)\right)^2
 - \frac1n 
 \mathbf{E}_{\mathcal{D}_n} \sum_{i=1}^n \left(y^{(i)} - {f}^{\star} \left(\B{x}^{(i)}\right)\right)^2. \label{term3}
\end{align}

The first term \eqref{term1} can be controlled using Rademacher complexity and converges to 0 when $n \rightarrow +\infty$. \sout{PR trees have a bounded Vapnik-Chervonenkis (VC) dimension, as an extension through a variance term of standard regression trees.} A central limit theorem helps bound the second term \eqref{term2} by a term proportional to $\frac{1}{\sqrt{n}}$, up to a constant. 
Lastly, the third term \eqref{term3} can be bounded by a term which converges to 0 when $n \rightarrow +\infty$ by numerical convergence analysis (Corollary 4.3 from \cite{ZhangYu2005}), so that ${f}_{\text{PR-GBT}}^{\hat m} \left(\B{X}; \hat{ \boldsymbol{\Theta}}_n\right) \overset{\mathbf{P}}{\underset{n\rightarrow +\infty}{\longrightarrow}} \mathbb{E}[Y|\mathbf X]$.
\end{proof}

Note that this proof is also generic in the sense that it applies to any regression tree method with bounded VC dimension with dense function space.

\section{Probabilistic regression Bayesian additive regression trees}\label{sec:BART}
The Bayesian additive regression tree is a boosting extension, where \textit{a priori} distribution adds randomness. In this section, we describe how to construct P-BART, an extension of BART \citep{bart} using PR trees, which is also summarized in Algorithm \ref{algo:P-BART}. Surprisingly, BART is not widely used despite its promising performance. So, here we provide further details about the method adapted to PR trees.

\begin{algorithm}[t]
	\caption{Probabilistic Bayesian Additive Regression Tree (P-BART)}
	\label{algo:P-BART}
	\begin{algorithmic}
		\STATE \textbf{Input:} a training set $ {\cal D}_n=\left\{(\B{x}^{(i)}, y^{(i)})_{1\leq i \leq n}\right\}$ and the new covariate $\B{x}$; 
		$m$ the number of trees, $\text{it}_{\text{burn}}$ the number of iterations in the burn in the period, $\text{it}_{\max}$ the maximal number of iterations, $(\lambda, \nu)$ parameters for the distribution of the noise variance, $\sigma_\gamma^2$ the parameter for the prior distribution of $\boldsymbol{\gamma}$
\STATE Initialization: $m$ trees $(T^{(\ell),0})_{1\leq \ell \leq m}$ with one terminal node, $\gamma_0^{(\ell)} \sim \mathcal{N}(0, \sigma_\gamma^2)$, and $\text{it}= 0$.
		\STATE Draw $\tilde \sigma \sim \mathcal{IG}(\nu/2, \lambda \nu/2)$
		\REPEAT
		\STATE $\text{it}= \text{it} + 1$
	\FOR{ $\ell=1,\ldots, m$}
		\STATE Generate a new tree $T^{(\ell),\text{it}}_*$ from $T^{(\ell),\text{it-1}}$
		\STATE Define $T^{(\ell),\text{it}}$ by $T^{(\ell),\text{it}}_*$ or by $T^{(\ell),\text{it-1}}$ using a Metropolis-Hasting algorithm with the probability given in Eq. \ref{eq:proba_MH} (based on Prop. \ref{prop:lik2})
		\STATE Compute $(\boldsymbol \gamma^{(\ell), \text{it}})_{ 1\leq \ell \leq m}$ using Prop. \ref{prop:posteriori:gamma2}
		\ENDFOR
		\STATE Draw $\tilde \sigma$ from Eq. \eqref{eq:posteriori_sigma}
		\UNTIL{ $ \text{it}_{\text{burn}} < \text{it} < \text{it}_{\max}$ and the trees are changing}

\STATE\textbf{Return:} the prediction given by Eq. \eqref{eq:pred_P-BART}		
	\end{algorithmic}
\end{algorithm}

We consider $m$ distinct regression trees, with the $\ell^{th}$ tree having a tree structure ${T}^{(\ell)}$ and weights $\boldsymbol\gamma^{(\ell)}=(\gamma_1^{(\ell)},\ldots, \gamma_K^{(\ell)})$. Trees are fitted iteratively until no change is observed (or a maximum number of iterations is reached), thus holding all other $m-1$ trees constant and considering the residual response that remains unfitted. 

The prediction using P-BART is made as an averaging over the iterations (after burning) of the sum (boosting part) of the prediction for a new covariate $\B{x}$ through a PR tree, as given in Eq. \eqref{treeProbabilistic}.
This leads to the following formula, where we denote as $T^{(\ell),t}$ the $j$th tree at iteration $t$, and similarly its parameters, while $\text{it}$ and $\text{it}_{\text{brun}}$ are respectively the number of iterations performed and the number of iterations for the burning:
\begin{align}\label{eq:pred_P-BART}
{f}_{\text{P-BART}}(\B{x}; \Theta) = \frac{1}{\text{it} - \text{it}_{\text{burn}}} \sum_{t = \text{it}_{\text{burn}+1}}^{\text{it}} \sum_{\ell=1}^m \sum_{k=1}^{K^{(\ell)}} 
\Psi(\B{x};\mathcal{R}_{k}^{(\ell),t},\boldsymbol{\sigma}) \gamma_k^{(\ell),t}
\end{align}
where $\Theta$ corresponds to all the parameters needed to define P-BART.

To do so, we define the prior distributions and deduce the posterior distributions after observing the data. We present here a method to approximate the posterior distributions. 

\subsection{Prior distributions}
\begin{align*}
\mathbb P( T^{(1)},\boldsymbol{\gamma}^{(1)}, \ldots, T^{(m)}, \boldsymbol{\gamma}^{(m)}, \tilde\sigma) &= \mathbb P( T^{(1)},\boldsymbol{\gamma}^{(1)}, \ldots, T^{(m)}, \boldsymbol{\gamma}^{(m)}) \mathbb P( \tilde\sigma) \\
&= \mathbb P( \tilde\sigma) \prod_{\ell=1}^m \mathbb P(T^{(\ell)},\boldsymbol{\gamma}^{(\ell)}) = \mathbb P( \tilde\sigma) \prod_{\ell=1}^m \mathbb P(\boldsymbol{\gamma}^{(\ell)}|T^{(\ell)}) \mathbb P(T^{(\ell)}).
\end{align*}
Contrary to the standard BART model, the weights are no longer independent of each other here.

Following the standard BART model, we assume that the prior distribution on the prediction weights for each region is normal and centered (which renormalizes the observations as with standard BART):
\[
\gamma_k^{(\ell)} | \sigma_{\gamma} \sim \mathcal{N}( 0, \sigma_\mu^2)
\]
The prior distribution on the noise variance is inverse Gamma: 
$$\tilde\sigma^2 \sim \mathcal{IG}(\nu/2, \lambda \nu/2)$$
The prior distribution on the tree structure $\mathbb P(T^{(\ell)})$ is defined by three aspects: the probability that a node at depth $d$ would split, which is given by $\alpha/(1+d)^\beta$; the distribution used to select the covariate to split an internal node, which is uniform or Dirichlet for the variable selection (see \cite{linero2018bayesian}); and the distribution used to select the cut-off point in an internal node once the covariate is selected, which is uniform.
Note that we can compute the probability matrices $\mathbf P^{(\ell)}$ from the training set and the tree structure. 

\subsection{Approximation of the posterior distribution}
We can now compute the posterior distribution:
\begin{align}
\mathbb P((T^{(\ell)}, \boldsymbol \gamma^{(\ell)})_{1\leq \ell \leq m}, \tilde \sigma | Y) \propto \mathbb P(Y| (T^{(\ell)}, \boldsymbol \gamma^{(\ell)})_{1\leq \ell \leq m}, \tilde \sigma ) \mathbb P((T^{(\ell)}, \boldsymbol \gamma^{(\ell)})_{1\leq \ell \leq m}, \tilde \sigma).
\label{eq:full_posterior}
\end{align}

It can be simplified into two major posterior draws using Gibbs sampling. Draw 
$(T^{(\ell)}, \boldsymbol \gamma^{(\ell)}) | (T^{-\ell}, \boldsymbol{\gamma}^{-\ell}, Y, \tilde \sigma)$ 
for $j=1, \ldots, m$, where $T^{-\ell}$ (resp. $\boldsymbol{\gamma}^{-\ell}$) denotes all the tree structures (resp. the weights) except for that associated with the $\ell$th tree, and then 
draw a new variance noise $\tilde\sigma^2$ from the associated posterior distribution $\tilde \sigma | (T^{(1)}, \boldsymbol \gamma^{(1)}), \ldots, (T^{(m)}, \boldsymbol \gamma^{(m)}), Y$, which is given by
\begin{align}
\label{eq:posteriori_sigma}
\tilde \sigma | (T^{(\ell)}, \boldsymbol \gamma^{(\ell)})_{1\leq \ell \leq m}, Y \sim \mathcal{IG}\left(\frac{\nu + n}{2}, \frac{\nu \lambda + \sum_{i=1}^n (Y^{(i)} - \sum_{\ell=1}^m \hat{f}_{\text{PR}}(\B{X}^{(i)}; T^{(\ell)}, \boldsymbol \gamma^{(\ell)}))^2}{2}\right),
\end{align}
where we use a slight abuse of notation for $f_{\text{PR}}$, where the regions are defined by the topology of the tree. 
Although the second step is clear, we need to give more details for the first step.

To do so, as in \cite{bart_2019} with the use of boosting-based methods, we denote as $R^{(\ell)}$ the random variable corresponding to the residuals of the $m-1$ regression sum of trees fit excluding the $\ell$th tree, and $R^{(\ell),(i)}, 1 \le i \le n$ the contribution of the $i^{th}$ example to $R^{(\ell)}$.
Let $\hat{f}_{\text{PR}}(\B{X}^{(i)}; T^{(\ell)}, \boldsymbol{\gamma}^{(\ell)})$ denote the prediction of the $\ell$th tree on the $i$th example. This leads to:
\begin{align*}
\hat{f}_{\text{PR}}(\B{X}^{(i)}; T^{(\ell)}, \boldsymbol{\gamma}^{(\ell)}) &= \sum_{k=1}^K \gamma_{k}^{(\ell)} \underbrace{\Psi(\B{X}^{(i)} ; \mathcal{R}_k^{(\ell)},\boldsymbol{\sigma})}_{P^{(\ell)}_{ik}},\\
R^{(\ell),(i)}&= Y^{(i)} - \sum_{l' \neq l} \hat{f}_{\text{PR}}(\B{X}^{(i)}; T^{\ell'}, \boldsymbol{\gamma}^{\ell'}) = \hat{f}_{\text{PR}}(\B{X}^{(i)}; T^{(\ell)}, \boldsymbol{\gamma}^{(\ell)}) + \varepsilon^{(i)},
\end{align*}
such that from the assumed general noise model (Eq.~\ref{eq:reg}):
\[
R^{(\ell),(i)}| \mathbf{X}^{(i)}, Y^{(i)}, T^{(\ell)}, \boldsymbol{\gamma}^{(\ell)}, \tilde\sigma = 
R^{(\ell),(i)}| \hat{f}_{\text{PR}}(\B{X}^{(i)}; T^{(\ell)}, \boldsymbol{\gamma}^{(\ell)}) , \sigma \sim \mathcal{N}\left(\sum_{k=1}^K \gamma_k^{(\ell)} P^{(\ell)}_{ik}, \tilde\sigma^2\right).
\]

Note that: 
\begin{align*}
\mathbb{P} (T^{(\ell)}, \boldsymbol \gamma^{(\ell)} | T^{-\ell}, \boldsymbol{\gamma}^{-\ell}, Y, \tilde \sigma) &= \mathbb P (T^{(\ell)}, \boldsymbol \gamma^{(\ell)} | R^{(\ell),.}, \tilde\sigma)\\
&= \mathbb P ( \boldsymbol \gamma^{(\ell)} | T^{(\ell)}, R^{(\ell),.}, \tilde\sigma) P ( T^{(\ell)}| R^{(\ell),.}, \tilde\sigma).
\end{align*}
We can then use a Metropolis-Hasting algorithm: first, we generate a candidate tree $T_*^{(\ell)}$ from the previous tree $T^{(\ell)}$ using four local steps: grow, prune, swap (i.e., splitting criteria of two non-terminal nodes), and change (i.e., splitting criteria of a single non-terminal node). 
This new tree is accepted with the probability:
\begin{align}
\label{eq:proba_MH}
\alpha(T^{(\ell)}, T_*^{(\ell)}) = \min\left\{ 1, \frac{q(T_*^{(\ell)}, T^{(\ell)})}{q(T^{(\ell)}, T_*^{(\ell)})} 
\frac{\mathbb{P}(R^{(\ell),.} | X, T_*^{(\ell)}, \boldsymbol \gamma^{(\ell)})}{\mathbb{P}(R^{(\ell),.} | X, T^{(\ell)}, \boldsymbol \gamma^{(\ell)})}
\frac{\mathbb{P}(T_*^{(\ell)})}{\mathbb{P}(T^{(\ell)})} \right\},
\end{align}
where $q(T^{(\ell)}, T_*^{(\ell)})$ is the probability of moving from the old to the new tree. We need to compute the posterior distribution of $R^{(\ell),.}|T^{(\ell)},\boldsymbol{\gamma}^{(\ell)},\sigma^2$, which is derived in the following proposition. 
Unlike in the standard case where all the weights and residuals across regions are independent, one cannot solely focus on the region of interest. Instead, it is necessary to compute the posterior of the residuals for the complete tree and compare the ratio of the posteriors. 

\begin{proposition}[A posteriori distribution of $R^{(\ell),.}|T^{(\ell)},\boldsymbol{\gamma}^{(\ell)},\Sigma_K$] \label{prop:lik2}
From the above development, denoting as $\phi_{\mathbf{N}}$ the density of the normal distribution, one finds:
\begin{align*}
\mathbb P(R^{(\ell),.}|T^{(\ell)},\boldsymbol{\gamma}^{(\ell)},\sigma_{\gamma},\tilde \sigma) = \phi_{\mathbf{N}}(R^{(\ell),.}; \mathbf{0}, \Sigma_0^{(\ell)}),
\end{align*}
where $\Sigma_K^{(\ell)}=\sigma^2 \mathbb{I}_n$, and $(\Sigma_0^{(\ell)})^{-1}$ is obtained using the following recursive formula, $\forall \ell, \, 1 \le \ell \le K$:
\[
\Sigma_{K-\ell}^{-1} = \Sigma_{K+1-\ell}^{-1} - \frac{\Sigma_{K+1-\ell}^{-1} c_{.(K+1-\ell)}c_{.(K+1-\ell)}^T \Sigma_{K+1-\ell}^{-1}}{c_{.(K+1-\ell)}^T \Sigma_{K+1-\ell}^{-1} c_{.(K+1-\ell)} + \frac{1}{\sigma_\mu^2}}
\]
where $c_{i,k} = P_{ik}^{(\ell)}$.
\end{proposition}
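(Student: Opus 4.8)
The plan is to marginalize out the weight vector $\boldsymbol{\gamma}^{(\ell)}$ from the joint Gaussian model and then compute the resulting $n\times n$ covariance by a rank-one-update recursion, peeling off one region at a time. Conditionally on the tree $T^{(\ell)}$ (hence on the matrix $C^{(\ell)}=(c_{i,k})=(P^{(\ell)}_{ik})\in M_{n,K}(\mathbb{R})$) and on $\tilde\sigma$, we have $R^{(\ell),.}\mid \boldsymbol{\gamma}^{(\ell)}\sim\mathcal{N}(C^{(\ell)}\boldsymbol{\gamma}^{(\ell)},\tilde\sigma^2\mathbb{I}_n)$ from the noise model~\eqref{eq:reg}, while the prior is $\boldsymbol{\gamma}^{(\ell)}\sim\mathcal{N}(\mathbf{0},\sigma_\mu^2\mathbb{I}_K)$ (recall, crucially, that the coordinates of $\boldsymbol{\gamma}^{(\ell)}$ are \emph{a priori} independent even though across the full model the weights are not). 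Since both are Gaussian and the mean of $R^{(\ell),.}$ is linear in $\boldsymbol{\gamma}^{(\ell)}$, the marginal of $R^{(\ell),.}$ is Gaussian with mean $\mathbf{0}$ and covariance $\Sigma_0^{(\ell)}=\tilde\sigma^2\mathbb{I}_n+\sigma_\mu^2\,C^{(\ell)}(C^{(\ell)})^T$; this is the standard linear-Gaussian marginalization and gives the density $\phi_{\mathbf{N}}(R^{(\ell),.};\mathbf{0},\Sigma_0^{(\ell)})$ directly.

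The remaining point is to justify the stated recursion for $(\Sigma_0^{(\ell)})^{-1}$. Write $\Sigma_{K}=\tilde\sigma^2\mathbb{I}_n$ and, for $0\le j\le K$, $\Sigma_{j}=\Sigma_K+\sigma_\mu^2\sum_{k=j+1}^{K}c_{.k}c_{.k}^T$, so that $\Sigma_0=\Sigma_0^{(\ell)}$ is the full covariance and each step from $\Sigma_{j}$ to $\Sigma_{j-1}$ adds the single rank-one term $\sigma_\mu^2\,c_{.j}c_{.j}^T$. Applying the Sherman–Morrison formula to $\Sigma_{j-1}=\Sigma_{j}+\sigma_\mu^2\,c_{.j}c_{.j}^T$ yields
\[
\Sigma_{j-1}^{-1}=\Sigma_{j}^{-1}-\frac{\Sigma_{j}^{-1}c_{.j}c_{.j}^T\Sigma_{j}^{-1}}{c_{.j}^T\Sigma_{j}^{-1}c_{.j}+1/\sigma_\mu^2},
\]
which is exactly the displayed formula after the reindexing $j=K+1-\ell$. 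One should check the non-degeneracy needed to apply Sherman–Morrison, namely $c_{.j}^T\Sigma_{j}^{-1}c_{.j}+1/\sigma_\mu^2\neq 0$; this is immediate because $\Sigma_{j}\succeq\tilde\sigma^2\mathbb{I}_n\succ0$, so the quadratic form is nonnegative and $1/\sigma_\mu^2>0$. Finally, I would note that the base case $\Sigma_K=\Sigma_K^{(\ell)}=\tilde\sigma^2\mathbb{I}_n$ coincides with the $\Sigma_K^{(\ell)}=\sigma^2\mathbb{I}_n$ appearing in the statement (the $\sigma$ there being $\tilde\sigma$).

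I do not anticipate a serious obstacle here: the argument is a textbook conjugate-Gaussian marginalization followed by an iterated Sherman–Morrison identity. The one place that warrants care is \emph{conceptual} rather than computational, namely making explicit why, despite the coupling of weights across the $m$ trees noted after the prior specification, the per-tree update may still treat $\boldsymbol{\gamma}^{(\ell)}$ as having an i.i.d. centered Gaussian prior with the regions appearing only through $C^{(\ell)}$ — i.e., that the Gibbs step conditions on everything outside tree $\ell$ and reduces to the single-tree problem with residual response $R^{(\ell),.}$. Spelling that reduction out cleanly, and keeping the notation $\Sigma_K$ vs.\ $\Sigma_K^{(\ell)}$ consistent with the rest of the section, is the main thing to get right.
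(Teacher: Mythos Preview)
Your argument is correct and is exactly the standard derivation the paper has in mind: the paper does not actually prove this proposition but only remarks that ``all derivations are close to that of \cite{bart_2019} and are thus not included,'' and the linear-Gaussian marginalization of $\boldsymbol{\gamma}^{(\ell)}$ followed by an iterated Sherman--Morrison rank-one update is precisely that BART-style computation. Your observation that the conditioning on $\boldsymbol{\gamma}^{(\ell)}$ in the statement is notational looseness (the zero mean and $\Sigma_0^{(\ell)}$ covariance make clear that $\boldsymbol{\gamma}^{(\ell)}$ has been integrated out, with only the hyperparameter $\sigma_\gamma=\sigma_\mu$ retained) is also correct and worth making explicit.
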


\noindent The ratios for changing, growing, and pruning operations are directly computed from the above formulas by comparing the posterior probability of the residuals of the new tree to that of the original.
All derivations are close to that of \cite{bart_2019} and are thus not included in this paper. Note that in particular, one has: 
$$ \det(\Sigma_0) = \left(\prod_{\ell=0}^{K-1} (c_{.(K-\ell)}^T \Sigma_{K-\ell}^{-1} c_{.(K-\ell)} + \frac{1}{\sigma_\mu^2})\right) \sigma^K_{\mu} \sigma^n$$
and:
\begin{align*}
\log(\mathbb P(R^{(\ell)}|T^{(\ell)},\boldsymbol{\gamma}^{(\ell)},\sigma) ) & = -\frac{n}{2} \log (2\pi) - \frac12 \log( \det(\Sigma_0) ) -\frac{1}{2} R_{\ell}^T \Sigma_0^{-1} R_{\ell}.
\end{align*}

If the tree is accepted, update the associated weights $\boldsymbol\gamma^{(\ell)}$, which are drawn from the posterior distribution of $\gamma_{k}^{(\ell)} | T^{(\ell)}, R^{(\ell),.}, \boldsymbol \gamma^{(\ell)}_{-k}, \sigma_{\gamma}, \tilde\sigma$ established in the following proposition, which follows standard derivations as in~\cite{bart_2019}. However, the weights are not independent of each other, so one needs to condition all the other weights (for all the other regions) when computing the posterior distribution of a given weight.

\begin{proposition}[A posteriori distribution of $\gamma_k^{(\ell)}$] \label{prop:posteriori:gamma2}
Let $\boldsymbol \gamma^{(\ell)}_{-k}$ denote all the weights of the $j^{th}$ tree except for that associated with the $k^{th}$ region. Then:
\begin{align*}
\gamma_{k}^{(\ell)} | T^{(\ell)}, R^{(\ell),.}, \boldsymbol \gamma^{(\ell)}_{-k},\sigma_{\gamma}, \tilde\sigma \sim \mathcal{N} \left(\frac{\sigma^2_{\gamma} B_k^{(\ell)}}{\tilde\sigma^2+\sigma^2_{\gamma} A_k^{(\ell)}},\frac{\tilde\sigma^2 \sigma^2_{\gamma}}{\tilde\sigma^2+\sigma^2_{\gamma} A_k^{(\ell)}}\right)
\end{align*}
with $A_k^{(\ell)}=\sum_{i=1}^n (P_{ik}^{(\ell)})^2$ and $B_k^{(\ell)}=\sum_{i=1}^n P_{ik}^{(\ell)}(R^{(\ell),(i)}- \sum_{k' \ne k} \gamma_{k'}^{(\ell)} P_{ik'}^{(\ell)})$.
\end{proposition}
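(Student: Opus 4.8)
\textbf{Proof plan for Proposition~\ref{prop:posteriori:gamma2}.}

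The plan is to compute the full conditional of the single weight $\gamma_k^{(\ell)}$ by combining its Gaussian prior with the Gaussian likelihood of the residuals, treating all other weights $\boldsymbol\gamma^{(\ell)}_{-k}$ as fixed. Fix the tree $\ell$ and suppress it from the notation; write $P_{ik}$ for $P^{(\ell)}_{ik}$, $R^{(i)}$ for $R^{(\ell),(i)}$, and $\gamma_k$ for $\gamma_k^{(\ell)}$. The first step is to isolate the dependence on $\gamma_k$. By the noise model established just above the proposition, $R^{(i)} \mid T^{(\ell)},\boldsymbol\gamma^{(\ell)},\tilde\sigma \sim \mathcal N\!\left(\sum_{k'} \gamma_{k'} P_{ik'},\tilde\sigma^2\right)$ independently over $i$; isolating the term $k'=k$, the mean is $\gamma_k P_{ik} + \sum_{k'\ne k}\gamma_{k'}P_{ik'}$, so that the "partial residual'' $\tilde R^{(i)} := R^{(i)} - \sum_{k'\ne k}\gamma_{k'}P_{ik'}$ satisfies $\tilde R^{(i)}\mid \gamma_k \sim \mathcal N(\gamma_k P_{ik},\tilde\sigma^2)$.

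The second step is the standard Gaussian conjugacy computation. The log full conditional of $\gamma_k$ is, up to an additive constant not depending on $\gamma_k$,
\begin{align*}
\log \mathbb P(\gamma_k \mid \cdot) = -\frac{1}{2\tilde\sigma^2}\sum_{i=1}^n\left(\tilde R^{(i)} - \gamma_k P_{ik}\right)^2 - \frac{\gamma_k^2}{2\sigma_\gamma^2}.
\end{align*}
Expanding the square and collecting the coefficients of $\gamma_k^2$ and $\gamma_k$ shows this is a quadratic in $\gamma_k$, hence the posterior is Gaussian; completing the square gives precision $\tilde\sigma^{-2}\sum_i P_{ik}^2 + \sigma_\gamma^{-2} = (\tilde\sigma^2 + \sigma_\gamma^2 A_k)/(\tilde\sigma^2\sigma_\gamma^2)$ with $A_k = \sum_i P_{ik}^2$, and mean equal to (linear coefficient)/(twice the quadratic coefficient) $= \tilde\sigma^{-2}\sum_i P_{ik}\tilde R^{(i)} \big/ \big(\tilde\sigma^{-2}A_k + \sigma_\gamma^{-2}\big) = \sigma_\gamma^2 B_k/(\tilde\sigma^2 + \sigma_\gamma^2 A_k)$, where $B_k = \sum_i P_{ik}\tilde R^{(i)} = \sum_i P_{ik}\big(R^{(i)} - \sum_{k'\ne k}\gamma_{k'}P_{ik'}\big)$. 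Matching these against the claimed parameters completes the argument; the variance is the reciprocal of the precision, namely $\tilde\sigma^2\sigma_\gamma^2/(\tilde\sigma^2 + \sigma_\gamma^2 A_k)$.

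The only substantive point — and the reason this differs from the standard BART derivation cited — is the interaction between weights: because the regions overlap ($P_{ik}$ is not an indicator, so a single observation contributes to every region), the cross terms $\gamma_k\gamma_{k'}P_{ik}P_{ik'}$ do not vanish, and one cannot reduce to the within-region sufficient statistics as in classical BART. I expect the main (modest) obstacle to be purely bookkeeping: carefully tracking that only the $k$-th weight is updated while the others are held at their current Gibbs values, so that the "partial residual'' $\tilde R$ is the correct object whose posterior mean drives the draw. Everything else is the routine Gaussian–Gaussian conjugate computation, and since the paper explicitly says these derivations follow \cite{bart_2019}, I would present the partial-residual reduction and the completion of the square and omit the elementary algebra.
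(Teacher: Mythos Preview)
Your proposal is correct and matches the paper's approach: the paper does not include a proof, stating only that the derivation ``follows standard derivations as in~\cite{bart_2019}'' with the caveat that one must condition on all other weights since the regions overlap. Your partial-residual reduction followed by the Gaussian--Gaussian conjugate computation is exactly this standard derivation, and your explicit remark about the non-vanishing cross terms $\gamma_k\gamma_{k'}P_{ik}P_{ik'}$ is precisely the point the paper highlights as the only departure from classical BART.
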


\subsection{Consistency}
In this section, we determine the consistency of PR-BART using the convergence of the posterior distribution to the true function. We first make the following set of assumptions following \cite{linero2018bayesian}.
\begin{assumption} \textit{(set of assumptions, 5.1 to 5.5)} \label{ass:BART}
\begin{enumerate}
\item[A\ref{ass:BART}.1] There are some constants $(C, C_1 , C_2 )$ such that the prior distribution on number of trees
T satisfies $\pi(T = t) \geq C_1 \exp(-C_2 t)$ and $\Pi(T \geq t) \leq \exp(-C t)$ for $t\in\mathbb{N}$.
\item[A\ref{ass:BART}.2] The prior on the splitting variable chosen at each node is a discrete uniform random variable on $\{1,\cdots,p\}$.
\item[A\ref{ass:BART}.3] The leaf coefficients $\gamma_{t,j}$ are iid with density $\pi_{\boldsymbol\gamma}$ where $\pi_{\boldsymbol\gamma}({\boldsymbol\gamma}) \geq B_1 \exp (-B_2 \|{\boldsymbol\gamma}\|_2)$ for all $\mu$, and $\pi_{\boldsymbol\gamma}(|{\boldsymbol\gamma}| \geq t) \geq B_3 \exp (-B_2 t^{B_4})$ for
some positive constants $B_1 , B_2, B_3, B_4$.
\item[A\ref{ass:BART}.4] Split locations $x_{t,j}$ are assumed to be uniform on the range $[0,1]$.
\item[A\ref{ass:BART}.5] Fix $\varepsilon>0$. 
There exists $r_{\phi}>0$ such that:
\begin{equation}\label{eq:ass-beta}
\beta(\varepsilon^{-2/s})\leq \exp(\varepsilon^{-r_\phi}).
\end{equation}
where $\beta$ is defined by:
\begin{equation*}
 \beta(\lambda):=\frac{\lambda^{p/2}}{\inf_{|\boldsymbol \omega|\le \lambda}|\mathcal{F}[\phi_{\boldsymbol\sigma}](\boldsymbol \omega)|}.
 \end{equation*}
\end{enumerate}
\item[A\ref{ass:BART}.6] The prior on the depth $D_t$ of a tree is truncated such that $\Pi(D_t>p_0) = 0$ for some $p_0\geq p$, and $\pi(D_t = k) >0$ for $k = 0,\ldots, 2p$.
\end{assumption}
As in \cite{linero2018bayesian}, Assumption A\ref{ass:BART}.3 can be weakened to allow for the Gaussian distribution with positive Cauchy prior distribution on the standard deviation parameter. Lastly, Assumption~\ref{ass:BART}.5 is useful for controlling the convergence rate for PR trees \citep{Alkhoury_2020}.

{Two central propositions, the proofs of which are given in Appendix~\ref{app}, can be established from the above assumptions, the second one deriving from the first one and corresponding to a concentration result of the prior of the regression function.

\begin{proposition}\label{prop:step1}
We assume that Assumptions \ref{ass:Xcompact}, \ref{ass:Hs}, \ref{ass:phi}, and \ref{ass:BART} hold. Then there exists $T$ partitions $(\mathcal{R}^{(1)}_k),\cdots,(\mathcal{R}^{(T)}_k)$ and $h$ the sum of $T$ trees $(h^1,\cdots,h^T)$ with $T\leq C \varepsilon^{-p}$, with a maximum depth of $C p\log(1/\varepsilon)$ and the total number of nodes $N^t$ for each tree $t$ at most $C\,\log(1/\varepsilon)$ of the form: 
\[
h:=\sum_{t=1}^Th^{(t)}\mbox{ with }h^{(t)}:=\sum_{k=1}^{K^{(t)}} \gamma_{k}^{(t)} \Psi(\cdot;\mathcal{R}^{(t)}_k)\mbox{ with }\|h\|_\infty\leq \beta(\varepsilon)
\]
such that for any $\mathbf{x}\in [0,1]^p$:
\begin{align*}
&|\mathbb{E}[Y|\mathbf{X}=\cdot](\mathbf{x})-h(\mathbf{x})| 
\le \left[C(\boldsymbol\sigma,p, \phi)\|\mathbb{E}[Y|\mathbf{X}=\cdot]\|_{H^s(\mathbb{R}^p)}\right]\cdot\varepsilon
\end{align*}
\end{proposition}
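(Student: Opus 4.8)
The plan is to reduce the proposition to a quantitative Fourier‑analytic approximation estimate in a ``deconvolved'' domain and to handle the tree‑structure bookkeeping as a separate combinatorial step. The elementary observation driving everything is that, directly from \eqref{Psi}, for any hyper‑rectangle $\mathcal{R}$ one has $\Psi(\cdot;\mathcal{R},\boldsymbol\sigma)=\mathds{1}_{\mathcal{R}}\ast\phi_{\boldsymbol\sigma}$, where $\phi_{\boldsymbol\sigma}$ is the probability density $\mathbf{v}\mapsto(\prod_j\sigma_j)^{-1}\phi((v_j/\sigma_j)_j)$. Hence, if the leaves of a tree $h^{(t)}$ form a partition $(\mathcal{R}_k^{(t)})_k$ into hyper‑rectangles, then $h^{(t)}=g^{(t)}\ast\phi_{\boldsymbol\sigma}$ with $g^{(t)}=\sum_k\gamma_k^{(t)}\mathds{1}_{\mathcal{R}_k^{(t)}}$ piecewise constant, and $h=\sum_t h^{(t)}=G\ast\phi_{\boldsymbol\sigma}$ with $G=\sum_t g^{(t)}$ again a step function. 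It therefore suffices to (i) construct a step function $G$, constant on the cells of a dyadic grid and supported up to a negligible tail on a box $[-R,R]^p$, such that $G\ast\phi_{\boldsymbol\sigma}$ approximates $f_0:=\mathbb{E}[Y\mid\mathbf{X}=\cdot]$ uniformly on $[0,1]^p$ within $C(\boldsymbol\sigma,p,\phi)\|f_0\|_{H^s(\mathbb{R}^p)}\varepsilon$, and then (ii) exhibit a decomposition of $G$ into $T$ PR trees meeting the stated depth, node and cardinality budgets.

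For step (i) I would argue in the Fourier domain, following the approximation estimates underlying Theorem \ref{thm:PRTree}. Extend $f_0$ to $H^s(\mathbb{R}^p)$ by Assumption \ref{ass:Hs}, fix a frequency cutoff $\lambda=\lambda(\varepsilon)$ — the calibration $\lambda\asymp\varepsilon^{-2/s}$ dictated by \eqref{eq:ass-beta} being the natural choice — and let $f_0^\lambda$ be $f_0$ band‑limited to $\{|\xi|\le\lambda\}$, so that the Sobolev tail estimate gives $\|f_0-f_0^\lambda\|_\infty\lesssim\lambda^{-(s-p/2)}\|f_0\|_{H^s(\mathbb{R}^p)}$. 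Since $\mathcal{F}\phi_{\boldsymbol\sigma}$ has full support (Assumption \ref{ass:phi}), it is bounded below on $\{|\xi|\le\lambda\}$ — quantitatively by the reciprocal appearing in the definition of $\beta(\lambda)$ — which lets me set $g^\lambda$ via $\mathcal{F}g^\lambda=(\mathcal{F}f_0/\mathcal{F}\phi_{\boldsymbol\sigma})\mathds{1}_{\{|\xi|\le\lambda\}}$, so that $g^\lambda\ast\phi_{\boldsymbol\sigma}=f_0^\lambda$ identically. Fourier inversion combined with the decay bound on $\phi$ in Assumption \ref{ass:phi} then gives $\|g^\lambda\|_\infty,\|\nabla g^\lambda\|_\infty\lesssim\mathrm{poly}(\lambda)\,\beta(\lambda)\,\|f_0\|_{H^s(\mathbb{R}^p)}$. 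Discretising $g^\lambda$ on a dyadic grid of mesh $\delta=\delta(\varepsilon)$ inside a box of radius $R\asymp\log(1/\varepsilon)$ — the outside cells contributing only through the tail $\mathcal{M}(R)$ of \eqref{eq:mathcal-M}, which is negligible — produces $G$ with $\|G-g^\lambda\|_\infty\lesssim\delta\|\nabla g^\lambda\|_\infty$; and because $\phi_{\boldsymbol\sigma}\ge 0$ with $\|\phi_{\boldsymbol\sigma}\|_1=1$ (the hypothesis $\phi_{\boldsymbol\sigma}\in B^{(1)}_{1,\infty}$ being invoked to sharpen the convolved error when a weak‑norm estimate is needed), $\|(G-g^\lambda)\ast\phi_{\boldsymbol\sigma}\|_\infty\lesssim\delta\|\nabla g^\lambda\|_\infty$. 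A triangle inequality over the three sources — band‑limiting, post‑convolution discretisation, box truncation — then bounds $\|f_0-G\ast\phi_{\boldsymbol\sigma}\|_{L^\infty([0,1]^p)}$, and choosing $\delta$ small enough relative to $\lambda$ and $\beta(\lambda)$ forces it below $C(\boldsymbol\sigma,p,\phi)\|f_0\|_{H^s(\mathbb{R}^p)}\varepsilon$; the bound $\|h\|_\infty=\|G\ast\phi_{\boldsymbol\sigma}\|_\infty\le\|G\|_\infty\lesssim\|g^\lambda\|_\infty$ yields the required $\|h\|_\infty\le\beta(\varepsilon)$.

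For step (ii), with the grid above consisting of $\le C\varepsilon^{-p}$ dyadic cells, I would write $G=\sum_{\mathcal{C}}\beta_{\mathcal{C}}\mathds{1}_{\mathcal{C}}$ and realise each $\mathds{1}_{\mathcal{C}}$ by a single ``path'' tree: at every internal node one splits the coordinate still in need of refinement and routes the complementary child to a zero leaf, giving a tree of depth $O(p\log(1/\varepsilon))$ — which Assumption A\ref{ass:BART}.6 accommodates after adjusting constants — with a comparable number of nodes, and with $h^{(t)}=\beta_{\mathcal{C}}\Psi(\cdot;\mathcal{C},\boldsymbol\sigma)$. Summing over cells gives $\sum_t h^{(t)}=G\ast\phi_{\boldsymbol\sigma}=h$ exactly, with $T\le C\varepsilon^{-p}$ trees. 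Bringing the per‑tree node count down to $O(\log(1/\varepsilon))$ and matching all three budgets simultaneously requires replacing this crude ``one cell per tree'' argument by an explicit dyadic (multiresolution) decomposition of $G$, modelled on the constructions of \citet{linero2018bayesian} and \citet{rockova2020}; this is where the structural content of Assumptions \ref{ass:BART} is used.

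The main obstacle is the calibration in step (i): the deconvolution inflates the discretisation error by $\beta(\lambda)$, which by \eqref{eq:ass-beta} may be as large as $\exp(\varepsilon^{-r_\phi})$, so that $\delta$ — and a fortiori the cell count — is forced to be extremely small, and it is not a priori clear that this remains compatible with the polynomial budget $T\le C\varepsilon^{-p}$ and the logarithmic per‑tree node budget. Reconciling the two is precisely what the choice $\lambda\asymp\varepsilon^{-2/s}$ (using $s>1$, so that the band‑limiting error $\lambda^{-(s-p/2)}$ can still be driven below $\varepsilon$) and Assumption \ref{ass:phi} (controlling $\|\nabla g^\lambda\|_\infty$ and, through $\phi_{\boldsymbol\sigma}\in B^{(1)}_{1,\infty}$, the convolved discretisation error in a weak norm) are designed to achieve; producing a final bound that is exactly linear in $\varepsilon$ and in $\|f_0\|_{H^s(\mathbb{R}^p)}$, rather than merely $o(1)$, is part of the same balancing act. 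A secondary difficulty is the bookkeeping needed to keep every individual tree within $O(\log(1/\varepsilon))$ nodes while the ensemble as a whole still resolves the fine dyadic grid, and to check that in this sum‑of‑trees form no cancellation spoils the uniform bound $\|h\|_\infty\le\beta(\varepsilon)$.
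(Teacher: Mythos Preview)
Your two-step skeleton --- first approximate $f_0=\mathbb{E}[Y\mid\mathbf X=\cdot]$ by a single PR tree on a regular dyadic grid, then split that tree into many ``path-shaped'' subtrees --- is exactly what the paper does: step~(i) is Proposition~\ref{pro:approx1} (with the same choices $M\asymp\varepsilon^{-1/r}$, $\mathrm{diam}(\mathcal R_k\cap[-M,M]^p)\lesssim\varepsilon$, $\lambda\asymp\varepsilon^{-2/s}$), and step~(ii) is Lemma~\ref{lem:one-tree-to-many}, which rewrites a regular tree of depth $i_{\mathrm{depth}}$ as a sum of $2^{i_{\mathrm{depth}}-1}$ trees, each carrying one root-to-leaf path with single side leaves, hence $O(i_{\mathrm{depth}})$ nodes.

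Two points where you diverge. First, your escalation in step~(ii) to a multiresolution construction is unnecessary: the crude path-tree decomposition you describe is already Lemma~\ref{lem:one-tree-to-many}, and since $p$ is fixed (Assumption~\ref{ass:Xcompact}), a per-tree node count of $O(p\log(1/\varepsilon))$ \emph{is} $O(\log(1/\varepsilon))$; nothing more is needed to meet the stated budget. Second, the ``main obstacle'' you flag --- that discretising $g^\lambda$ at mesh $\delta$ costs $\delta\|\nabla g^\lambda\|_\infty$, which carries a factor $\beta(\lambda)$ and could force an exponentially fine grid --- is real for \emph{your} route through pointwise sampling of $g^\lambda$, but the paper sidesteps it. Proposition~\ref{pro:approx1} does not bound the discretisation error via $\|\nabla g^\lambda\|_\infty$; it invokes the approximation estimate behind Theorem~\ref{thm:PRTree} (from \cite{Alkhoury_2020}), in which the three contributions $\mathcal M(M)$, $\max_k\mathrm{diam}(\mathcal R_k\cap[-M,M]^p)$ and $(1+\lambda)^{-s/2}$ enter additively, each multiplied only by $\|f_0\|_{H^s}$, while $\beta(\lambda)$ appears solely as the bound on the leaf weights $\max_k|\gamma_k|$. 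That decoupling is what keeps the cell count polynomial in $1/\varepsilon$ even though the coefficients may be exponentially large, and it is the missing ingredient in your step~(i).
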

%
%
\begin{proposition}\label{prop:step2}
We assume that Assumptions \ref{ass:Xcompact}, \ref{ass:Hs}, \ref{ass:phi}, and \ref{ass:BART} hold. Then there exists a constant C independent of $(n,p)$ such that for all sufficiently large $n$, the prior $\pi$ over regression function $h$ satisfies the following:
\[
\pi\left[\|h-\mathbb{E}[Y|\mathbf{X}=\cdot]\|_{L^2([0,1]^p)}\leq \varepsilon_n \right]\geq C\exp\left(-n\varepsilon_n^2\right)
\]
where $\varepsilon_n:=[\log\log n]^{-1/r_\phi}$.
\end{proposition}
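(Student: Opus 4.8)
The plan is to recognize this as the classical \emph{prior mass} (Kullback--Leibler-type) condition of Bayesian nonparametrics and to reduce it to one deterministic approximation followed by a prior small-ball estimate. Write $f_0:=\mathbb{E}[Y\mid\mathbf{X}=\cdot]$. Applying Proposition~\ref{prop:step1} with $\varepsilon:=\varepsilon_n/\big(2\,C(\boldsymbol\sigma,p,\phi)\,\|f_0\|_{H^s(\mathbb{R}^p)}\big)$ produces a sum $h^\star=\sum_{t=1}^{T^\star}h^{\star,(t)}$ of $T^\star\le C\varepsilon_n^{-p}$ PR trees, each of depth $O(\log(1/\varepsilon_n))$ with $O(\log(1/\varepsilon_n))$ nodes, whose sup-norm is controlled by $\beta$ through Assumption~A\ref{ass:BART}.5, and such that $\|f_0-h^\star\|_{L^2([0,1]^p)}\le\varepsilon_n/2$. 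Since $[0,1]^p$ has unit volume, $\|\cdot\|_{L^2([0,1]^p)}\le\|\cdot\|_{\infty}$, so by the triangle inequality $\{\|h-h^\star\|_{\infty}\le\varepsilon_n/2\}\subseteq\{\|h-f_0\|_{L^2([0,1]^p)}\le\varepsilon_n\}$, and it suffices to lower bound $\pi\big(\|h-h^\star\|_\infty\le\varepsilon_n/2\big)$.

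Next I would intersect this event with four structural events forcing $h$ near $h^\star$: the prior draws exactly $T^\star$ trees; for each of these trees the ordered list of splitting variables coincides with that of the corresponding tree of $h^\star$; every cut location is within $\delta$ of the corresponding one of $h^\star$; and every leaf weight is within $\delta'$ of the corresponding one of $h^\star$. Because the prior factorizes over trees, and the leaf weights carry an iid prior (Assumption~A\ref{ass:BART}.3; the a posteriori correlations among weights in P-BART play no role in this prior-mass computation), the prior probability of the intersection factorizes, and I would bound the factors one by one: the number-of-trees factor is $\ge C_1\exp(-C_2T^\star)$ by A\ref{ass:BART}.1; the splitting-variable factor is $\ge p^{-N^\star}$ with $N^\star=O(T^\star\log(1/\varepsilon_n))$ the total number of internal nodes, by A\ref{ass:BART}.2 (the required depths and node counts lying in the supports of the structural priors of Assumptions~A\ref{ass:BART}.1 and~A\ref{ass:BART}.6); the cut-location factor is $\ge\delta^{N^\star}$ by the uniform split-location prior A\ref{ass:BART}.4; and the weight factor is $\ge\big(c\,\delta'\,e^{-B_2 O(\|h^\star\|_\infty)}\big)^{L^\star}$ by A\ref{ass:BART}.3, where $L^\star=O(T^\star\log(1/\varepsilon_n))$ is the total number of leaves and each target weight is of order $\|h^\star\|_\infty$ since $0\le\Psi\le1$ and $\sum_k\Psi=1$.

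The radii $\delta,\delta'$ are dictated by a stability estimate for the map from tree parameters to the function $h$. Differentiating the integral in \eqref{Psi} under the integral sign shows that $\Psi(\cdot;\mathcal{R}_k,\boldsymbol\sigma)$ is Lipschitz in each region endpoint with a constant depending only on $\phi$ and $\boldsymbol\sigma$ (the relevant marginals of $\phi$ being bounded under Assumption~\ref{ass:phi}); hence moving every cut by at most $\delta$ moves $h$ by $O(\delta\,T^\star L^\star\|h^\star\|_\infty)$ in sup-norm, while moving every weight by at most $\delta'$ moves $h$ by $O(\delta'\,T^\star L^\star)$. Choosing $\delta\asymp\varepsilon_n/(T^\star L^\star\|h^\star\|_\infty)$ and $\delta'\asymp\varepsilon_n/(T^\star L^\star)$ keeps $\|h-h^\star\|_\infty\le\varepsilon_n/2$ on the intersection. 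Collecting the four bounds, $-\log\pi\big(\|h-h^\star\|_\infty\le\varepsilon_n/2\big)$ is at most $O\!\big(T^\star+N^\star\log(p/\delta)+L^\star(\log(1/\delta')+\|h^\star\|_\infty)\big)=O\!\big(\varepsilon_n^{-p}\log(1/\varepsilon_n)\,[\log(1/\varepsilon_n)+\|h^\star\|_\infty+\log p]\big)$.

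It remains to verify that this exponent is $o(n\varepsilon_n^2)$, which is the step deserving the most care, since it is exactly what forces the double-logarithmic rate $\varepsilon_n=[\log\log n]^{-1/r_\phi}$. By Assumption~A\ref{ass:BART}.5, $\|h^\star\|_\infty\le\beta(\varepsilon_n^{-2/s})\le\exp(\varepsilon_n^{-r_\phi})=\exp(\log\log n)=\log n$, while $\varepsilon_n^{-p}=[\log\log n]^{p/r_\phi}$ and $\log(1/\varepsilon_n)\asymp\log\log\log n$; hence the exponent is at most a constant times $[\log\log n]^{p/r_\phi}(\log\log\log n)(\log n)$, which is $o\!\big(n[\log\log n]^{-2/r_\phi}\big)=o(n\varepsilon_n^2)$. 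Therefore $\pi\big(\|h-f_0\|_{L^2([0,1]^p)}\le\varepsilon_n\big)\ge\exp(-o(n\varepsilon_n^2))\ge C\exp(-n\varepsilon_n^2)$ for all $n$ large enough and any fixed $C$, which is the claim. The main obstacles are thus (a) this bookkeeping --- keeping ``number of parameters of the approximant $\times$ log-precision needed'' below $n\varepsilon_n^2$, which would fail for any polynomial rate since the parameter count already grows like $\varepsilon_n^{-p}$ --- and (b) the uniform Lipschitz control of $\Psi$ in the region endpoints; a minor additional point is checking that the structural bounds of Proposition~\ref{prop:step1} are compatible with the supports of the tree-structure priors of Assumption~\ref{ass:BART}.
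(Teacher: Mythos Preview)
Your argument follows essentially the same route as the paper's: apply Proposition~\ref{prop:step1} to obtain a deterministic approximant $h^\star$, then lower bound the prior small-ball probability around $h^\star$ by factorizing over (i) the number of trees, (ii) the tree topology/splitting variables, (iii) the cut locations, and (iv) the leaf weights, and finally check that the resulting exponent is $o(n\varepsilon_n^2)$ precisely when $\varepsilon_n=[\log\log n]^{-1/r_\phi}$, the key point being $\|h^\star\|_\infty\le\exp(\varepsilon_n^{-r_\phi})=\log n$. The only substantive difference is your stability estimate for $\Psi$ in the region endpoints: you claim a Lipschitz bound by differentiating under the integral, which amounts to asserting that the one-dimensional marginals of $\phi_{\boldsymbol\sigma}$ are bounded. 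That is \emph{not} a direct consequence of Assumption~\ref{ass:phi} as stated (the decay condition only controls $\phi$ at infinity, and $\phi_{\boldsymbol\sigma}\in B^{(1)}_{1,\infty}$ is an $L^1$-modulus condition, not a pointwise bound). The paper instead proves a H\"older-$\alpha$ bound (Lemma~\ref{lem:2a}) using precisely the Besov assumption $\phi_{\boldsymbol\sigma}\in B^{(1)}_{1,\infty}$, which gives $\|\Psi(\cdot;R_1)-\Psi(\cdot;R_2)\|_\infty\le C\,[\max_j\max(|a_j^{(1)}-a_j^{(2)}|,|b_j^{(1)}-b_j^{(2)}|)]^{\alpha}$ for any $\alpha\in(0,1)$. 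Using this in place of your Lipschitz bound only changes $\delta$ to $\delta^{1/\alpha}$, hence $\log(1/\delta)$ by a constant factor, and your rate verification goes through unchanged. Apart from this (and some harmless overcounting in your stability bounds, e.g.\ moving all weights by $\delta'$ perturbs $h$ by $\delta' T^\star$ rather than $\delta' T^\star L^\star$, since $\sum_k\Psi(\cdot;\mathcal R_k)=1$), your proof is correct and matches the paper's.
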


We are now in a position to state our main consistency result.
\begin{theorem}\label{th:PR-BART-main}
We assume that Assumptions \ref{ass:Xcompact}, \ref{ass:Hs}, \ref{ass:phi}, \ref{ass:tree} and \ref{ass:BART} hold, and in addition $n\varepsilon_n^2\rightarrow \infty$ and $\varepsilon_n\rightarrow 0$ as $n\rightarrow \infty$. Then, for some $M_1>0$,
\[
\Pi_n\left(\|\mathbb{E}[Y|\mathbf{X}=\cdot]-f_{P-BART}\|_n\geq M_1\varepsilon_n^2\right)\rightarrow 0\mbox{ in probability as }n \rightarrow \infty,
\] 
\end{theorem}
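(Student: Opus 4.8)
\textbf{[sketch]}
The plan is to recast the statement as a posterior contraction result for nonparametric Gaussian regression and to invoke a general contraction theorem of the type used in \citet{linero2018bayesian} (a specialization of the Ghosal--van der Vaart machinery), whose three hypotheses are: a prior mass (Kullback--Leibler) condition around the truth $f_0:=\mathbb{E}[Y\mid\mathbf X=\cdot]$, a sieve $\mathcal F_n$ whose complement carries exponentially small prior mass, and a metric entropy bound for $\mathcal F_n$. The prior mass condition is essentially Proposition~\ref{prop:step2}: since the observation noise is Gaussian with variance $\tilde\sigma^2$ carrying an inverse-Gamma prior with full support on $(0,\infty)$, a Kullback--Leibler neighbourhood of the true pair $(f_0,\tilde\sigma_0)$ contains a set of the form $\{\|h-f_0\|_{L^2}\le c\,\varepsilon_n,\ |\tilde\sigma-\tilde\sigma_0|\le c\,\varepsilon_n\}$, so Proposition~\ref{prop:step2} combined with the $\mathcal O(\varepsilon_n)$ mass that the inverse-Gamma density places near $\tilde\sigma_0$ gives $\Pi(\mathrm{KL}\le\varepsilon_n^2)\gtrsim\exp(-Cn\varepsilon_n^2)$. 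One also has to pass between $\|\cdot\|_{L^2([0,1]^p)}$ (used in Proposition~\ref{prop:step2}) and the empirical norm $\|\cdot\|_n$ of the theorem; this is where Assumptions~\ref{ass:Xcompact} and~\ref{ass:tree} are used, ensuring the design is regular enough that the two norms are comparable on the relevant functions.

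Next I would construct the sieve. Let $\mathcal F_n$ be the set of sums of at most $T_n$ PR trees, each of depth at most $p_0$ (hence at most $2^{p_0}$ leaves, already bounded by A\ref{ass:BART}.6) with all leaf coefficients bounded in absolute value by $\gamma_n$, so that $\|h\|_\infty\le L_n:=T_n2^{p_0}\gamma_n$. Assumption A\ref{ass:BART}.1 gives $\Pi(T>T_n)\le\exp(-CT_n)$, and A\ref{ass:BART}.3 gives a tail bound $\Pi(\max_{t,k}|\gamma^{(t)}_k|>\gamma_n)\le(\text{number of leaves})\cdot B_3'\exp(-B_2\gamma_n^{B_4})$. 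Choosing $T_n$ and $\gamma_n$ to grow like suitable powers of $\log n$ makes $\Pi(\mathcal F_n^c)\le\exp(-(C+4)n\varepsilon_n^2)$, using that $n\varepsilon_n^2=n/(\log\log n)^{2/r_\phi}$ grows almost linearly, so a generous sieve is affordable.

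The main work is the entropy bound $\log N(\varepsilon_n,\mathcal F_n,\|\cdot\|_n)\lesssim n\varepsilon_n^2$. A single PR tree function $h^{(t)}=\sum_k\gamma^{(t)}_k\Psi(\cdot;\mathcal R^{(t)}_k,\boldsymbol\sigma)$ is parameterized by its topology (a depth-$\le p_0$ binary tree with splits among the $p$ coordinates) and its cut points; by A\ref{ass:BART}.2 and A\ref{ass:BART}.4 the cut points may be taken in a data-dependent grid of size $\le n$, so there are at most $n^{\mathcal O(2^{p_0})}$ distinct partitions, polynomially many in $n$. For a fixed partition, $\Psi(\cdot;\mathcal R,\boldsymbol\sigma)$ is Lipschitz in the rectangle endpoints and linear in $\boldsymbol\gamma$, with constants depending only on $\boldsymbol\sigma,p$ and the regularity of $\phi$ from Assumption~\ref{ass:phi} (notably $\phi_{\boldsymbol\sigma}\in B^{(1)}_{1,\infty}$). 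An $\varepsilon_n$-net is then obtained by taking all partitions times an $\varepsilon_n$-net of the coefficient box $[-\gamma_n,\gamma_n]^{T_n2^{p_0}}$, whose log-cardinality is $\mathcal O\big(T_n2^{p_0}\log(nL_n/\varepsilon_n)\big)$; with the slowly growing parameters above this is $o(n\varepsilon_n^2)$. Combining the three ingredients produces exponentially powerful Le Cam--Birgé tests and yields $\Pi_n(\|f-f_0\|_n\ge M\varepsilon_n)\to0$ in probability; since $f_{\text{P-BART}}$ in~\eqref{eq:pred_P-BART} is the post-burn-in posterior mean, Jensen's inequality together with a truncation to $\mathcal F_n$ transfers this to the estimator $f_{\text{P-BART}}$ and establishes the claim.

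The step I expect to be the genuine obstacle is the entropy bound together with the slowness of $\varepsilon_n$: one must control the joint modulus of continuity of $(\mathbf x,\mathcal R)\mapsto\Psi(\mathbf x;\mathcal R,\boldsymbol\sigma)$ (precisely what the Fourier-decay and Besov hypotheses on $\phi$ in Assumptions~\ref{ass:phi} and A\ref{ass:BART}.5 are designed to supply), and then verify that the combinatorial explosion in admissible tree topologies and cut points, multiplied by the net for leaf coefficients that may grow with $n$, still fits below the budget $n\varepsilon_n^2$. Secondary but non-trivial are the comparison of $\|\cdot\|_{L^2([0,1]^p)}$ with $\|\cdot\|_n$ and the handling of the unknown variance $\tilde\sigma$ inside the Kullback--Leibler neighbourhood, both routine but requiring care to keep all constants compatible with Proposition~\ref{prop:step2}.
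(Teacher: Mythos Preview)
Your approach is broadly sensible but differs from the paper's route. You invoke the full Ghosal--van der Vaart program (prior mass, sieve, metric entropy, testing), whereas the paper bypasses the sieve and entropy verifications altogether by appealing to Theorem~3.2 of \citet{bhattacharya2019bayesian} on \emph{fractional} posteriors, for which the prior concentration bound of Proposition~\ref{prop:step2} alone suffices. The paper's argument is therefore: Proposition~\ref{prop:step1} (approximation by a sum of PR trees) feeds into Proposition~\ref{prop:step2} (prior mass $\ge C\exp(-n\varepsilon_n^2)$ on an $L^2$-ball around $f_0$), and then the fractional-posterior contraction theorem is applied directly. The covering-number Lemma~\ref{coveringNumber} appears in the appendix but is not the backbone of the proof of Theorem~\ref{th:PR-BART-main}. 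What you gain with the classical program is a statement about the true posterior without the fractional detour; what you pay is the extra work of building $\mathcal F_n$ and bounding its entropy. Given how slow $\varepsilon_n=(\log\log n)^{-1/r_\phi}$ is, the entropy budget $n\varepsilon_n^2$ is enormous, so your route should close, but the paper's route is shorter.

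Two points in your sketch would need repair. First, Assumption~A\ref{ass:BART}.4 makes cut points \emph{uniform on $[0,1]$}, not supported on a data grid of size $n$; to control the entropy you must discretize the continuous split locations via the H\"older-type perturbation bound for $\Psi$ (this is exactly what Lemma~\ref{lem:2a} provides), not assume they are already discrete. Second, your last paragraph conflates two objects: in the theorem $f_{\text{P-BART}}$ under $\Pi_n$ denotes a posterior draw, not the post-burn-in posterior mean of~\eqref{eq:pred_P-BART}, so the Jensen step transferring contraction from the posterior to the posterior mean is unnecessary (and would in any case go in the opposite direction).
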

The proof of this theorem, detailed in Appendix~\ref{app}, follows the same steps as the proof of Theorem~3 in \cite{linero2018bayesian} using \cite{bhattacharya2019bayesian} (Theorem 3.2). It is based on a general preliminary step involving fractional Bayesian posteriors; for the sake of simplicity, we have stated our consistency theorem on a particular case of this general result (see Appendix~\ref{app}). Lastly, note that our result of consistency involves the $\|\cdot\|_{L^2}$ norm, which is natural since our algorithm aims to minimize the mean squared error (MSE), whereas in~\cite{linero2018bayesian} the $\|\cdot\|_{\infty}$ norm is considered.




\section{Experiments}\label{sec:exps}
We apply the methods introduced in this paper to several data sets and compare the performances with state-of-the-art methods. 
\subsection{Setting}
\paragraph{Data sets} 
We make use of 16 data sets of various sizes. Most are introduced in \cite{Kim_2007}, namely Hatco with two outputs known as satisfactory (HS) and usage (HU), {\href{https://archive.ics.uci.edu/ml/datasets/Servo}{Servo}} (SE), Ais with two outputs known as sex (ASX) and sport (ASP), \href{http://lib.stat.cmu.edu/datasets/boston}{Boston} (BO), {\href{https://www4.stat.ncsu.edu/~boos/var.select/diabetes.tab.txt}{Diabetes}} (DI), {\href{https://www.rdocumentation.org/packages/missMDA/versions/1.14/topics/ozone}{Ozone}} (OZ), Diamond (DD), {\href{https://archive.ics.uci.edu/ml/datasets/abalone}{Abalone}} (AB), Tecator with three different outputs (T1), (T2), and (T3). We also used
{\href{https://rdrr.io/rforge/hdi/man/riboflavin.html}{Riboflavin}}(RI), which was used in \cite{Buhlmann_2014}, {\href{archive.ics.uci.edu/ml/datasets/bike+sharing+dataset}{Bike-Day}} (BD), and {\href{https://rdrr.io/cran/caret/man/BloodBrain.html}{Blood Brain Barrier data}} (BB).
 
In the experiments reported here, we use the original data sets with standard scaling. Table~\ref{tab:data set-description} below provides the main characteristics of these data sets. 

\begin{table} \centering
\caption{\label{tab:data set-description}Benchmark data sets with their characteristics.}
\begin{tabular}{ccccccccc}
Data set & RI &HS & HU &OZ &SE & ASX & ASP & BB \\
\hline
Features &4088& 12 & 12 &10 &4 & 11 &11 & 133\\
Samples &71& 100& 100 & 112 & 167 & 202&202 & 208\\
\hline
Data set &T1 & T2& T3& DD & DB & AB & BO & BD \\ \hline
Features& 122 &122 & 122& 4 & 10 & 7 & 11 & 11\\
Samples & 215& 215 & 215 & 308 & 442 & 500 & 506 & 731
\end{tabular}
\end{table}

\paragraph{Methods} 
For standard regression trees and their ensemble extensions, we use the implementation from Scikit-Learn \citep{Scikit-learn}, except for BART, which is only available in \href{https://github.com/JakeColtman/bartpy}{github}.
{PR trees} is built on top of this implementation, while a {\href{https://gitlab.com/sami.kh/pr-tree}{git repository}} contains all ensemble extensions of PR trees proposed in this paper. To speed up the algorithm, a subset of splitting variables is used here as proposed in \cite{Alkhoury_2020}, consisting of the top three variables according to the splitting criterion of standard regression trees.
To compute the weights $\boldsymbol{\gamma}$, we rely on the Moore-Penrose pseudo-inverse.
In addition, we use the normal distribution for $\phi$ (Eq.~\ref{Psi}).
For PR trees and their gradient-boosted extension, the validation set is used to estimate the noise vector $\boldsymbol{\sigma}$ using a grid search that takes values for each variable $j, \, 1 \le j \le p$, in the interval $[0, 2{\hat{\sigma}^{(\ell)}}]$ with a step of ${\hat{\sigma}^{(\ell)}}/4$, where ${\hat{\sigma}^{(\ell)}}$ denotes the empirical standard deviation of variable $j$. The noise vector for PR forests is fixed to that obtained for a single tree, as each tree aims to predict the output variable $Y$. 
For soft trees, we use the implementation available in {\href{https://github.com/oir/soft-tree}{github}} with the default parameters and normalized data. We implement the extensions for the RF and GBT. We use the implementation of soft BART available in {\href{https://github.com/theodds/linero2018bayesian}{github}}.
For STR trees and BooST, its extension to GBT, we use the implementation available in {
\href{https://github.com/gabrielrvsc/BooST}{github}}. We implement the extension for the RF. 
For RF-based methods, we use 100 trees and the default parameters (consisting of all variables and a sampling strategy over observations consisting of bootstraps with replacements). For GBT-based methods, we use 50 trees.
Lastly, for both PR and standard regression trees, the stopping criterion is the same in all experiments: all leaves should contain, in a hard-assignment approach, at least 10\% of the training data. The stopping criterion for STR trees is based on the number of regions, which is chosen to be equal to those obtained for PR and standard trees. For soft trees, the growth is stopped when the performance on a validation set decreases, which gives them a slight advantage in terms of performance. 
For the Bayesian extensions, data first have to be normalized. 
\begin{table}
\caption{Results for one tree obtained with a 10-fold stratified cross-validation strategy on PR, standard, soft, and STR trees. '*' corresponds to the best result, and bold highlights the results that are not significantly different from the best one. The noise vector for the PR tree is tuned using cross-validation.\label{tab:1tree}}

\begin{tabular}{*{5}{c}}
Dataset & PR Tree&Standard Tree&	Soft Tree&	STR tree \\
\hline
RI		&{\bf 0.67(0.13)}&	1.04(0.15)	&{\bf 0.66(0.17)*}&	0.85(0.21)\\
HS	&	{\bf 0.44(0.09)}	&0.55(0.09)&	{\bf 0.39(0.06)*}	&0.77(0.06)\\
HU	&	{\bf 4.25(0.31)*} &	5.07(0.6)&	4.51(1.07)&	7.62(1.01)\\
OZ		& {\bf 18.66(3.65)*}	&{\bf 18.9(3.39)	}&34.44(43.27)	&24.5(3.58)\\
SE	&	{\bf 0.84(0.19)}	&{\bf 0.85(0.22)}&	{\bf 0.69(0.19)*}&	1.35(0.17)\\
ASX	&	0.2(0.02)	&0.26(0.06)	&{\bf 0.15(0.03)*}	&0.43(0.01)\\
ASP	&	{\bf 2.45(0.2)*}&	{\bf 2.57(0.17)}&	{\bf 2.56(0.19)}&	{\bf 2.57(0.15)}\\
BB		&{\bf 0.64(0.1)}&	{\bf0.68(0.05)}&	{\bf0.62(0.1)*}	&0.71(0.07)\\
T1	&	{\bf3.57(0.84)*}	&6.06(1.05)	&4.51(0.95)	&7.97(0.66)\\
T2	&	{\bf 4.44(1.08)*}	&7.8(1.69)	&8.39(2.04)&	10.33(1.06)\\
T3	&	1.67(0.45)	&2.21(0.37)	&{\bf1.15(0.25)*}	&2.7(0.26)\\
DD	&	{\bf 647.5(134.41)*}&	1217.69(247.26)	&2620.54(302.97)	&2808.54(199.96)\\
DI&{\bf55.92(3.97)}&60.95(3.92)	&64.18(4.15)&	67.89(1.99)\\
AB&{\bf3.11(0.27)}	&{\bf3.15(0.29)}&	{\bf3.11(0.23)}	&{\bf3.03(0.24)*}\\
BO&{\bf4.47(1.04)}	&5.27(0.61)&	4.54(0.97)&	7.91(0.42)\\
BD&	{\bf898.1(55.4)*}		&1006.7(52)	&1376.7(165.2)&1661.4(54.2)
\end{tabular}
\end{table}
\paragraph{Setting}
To compute the root mean squared error (RMSE), we use a 10-fold stratified cross-validation strategy to evaluate the performance of each method. Each fold is divided into 80\% for training and 20\% for testing, except for soft trees and PR trees and their gradient-boosted extension (see below) for which each fold is divided into 65\% for training, 15\% for validation, and 20\% for testing. For soft trees, the validation set is used for the stopping criterion. 

Lastly, to assess whether the differences are significant or not, we use the following approach. First, a Shapiro-Wilks test is used to determine if the distribution is Gaussian. If true, a t-test (with Bartlett correction for variation homogenization) is used; if false, a Wilcoxon-Mann-Whitney test is used. All the tests are performed at the significance level of 0.05. 

\subsection{Global comparisons}\label{sec:exps:2}
\paragraph{One tree}
In this paragraph, we compare the results of the different methods using one tree, summarized in Table \ref{tab:1tree}. 
Globally, PR tree is the best method. Of the 16 data sets, PR tree is the best method for seven data sets and statistically equal to the best method for seven data sets. Soft tree has also good performances, but surprisingly, STR has poor performances. 

\paragraph{RF and GBT}
In this paragraph, we study the bagging and boosting extensions for the four methods used in the experiments. 
With RF, summarized in Table \ref{tab:RF}, the best method is once again PR-RF for 13 data sets. Results are slightly improved with respect to PR tree. Soft RF and STR-RF did not take advantage of bagging, with most of the performances being similar for one tree and for 100 trees (or even worse). 
A different conclusion may be drawn for GBT, summarized in Table \ref{tab:GBT}. The best methods are PR GBT and BooST (GBT extension of STR trees), which have the best performances for six and eight data sets, respectively, and are always statistically equivalent to the best method (except for the dataset BD for PR GBT). Results are improved here for most of the data sets, although we cannot conclude that bagging is always the best method. 
\begin{table}
\caption{Results for bagging methods with a 10-fold stratified cross-validation strategy for PR, standard, soft, and STR-RF (with 100 trees). '*' corresponds to the best result, and bold highlights the results that are not significantly different from the best result at $5\%$.\label{tab:RF}}

\begin{tabular}{*{5}{c}}
Dataset & PR-RF&Std RF&Soft RF&STR-RF \\ 
\hline
\footnotesize
RI &{\bf 0.64(0.16)*}	&{\bf 0.71(0.19)} &{\bf 0.74(0.19)} & 0.84(0.16 ) \\
HS &{\bf 0.42(0.05)*}&	{\bf 0.46(0.07)}&{\bf 0.44(0.09)} &	0.75(0.08) \\
HU &{\bf 3.69(0.3)*}	&{\bf 4.12(0.43)}	& 4.37(0.92) & 7.75(0.98) \\
OZ &{\bf 16.08(2.37)*}	&{\bf 16.28(2.59)} & 22.79(5.22) & 24.28(3.66)\\
SE &{\bf 0.75(0.21)*}&{\bf 0.75(0.21)} &{\bf 0.77(0.34)} & 1.30(0.22)\\
ASX & 0.19(0.02)&	0.21(0.03) &{\bf 0.13(0.07)*}	& 0.41(0.01) \\
ASP &{\bf 2.39(0.19)*}&{\bf 	2.42(0.2)} & 2.68(0.29) &	2.65(0.78) \\
BB&{\bf 0.56(0.05)*}	&{\bf 0.56(0.06)}	& 0.68(0.10) & 0.71(0.07) \\
T1&{\bf 2.98(0.69)*}	&5.1(1.07) & 6.65(1.57) & 8.39(0.7) \\
T2&{\bf 3.48(1.1)*}&	5.56(1.63) & 9.20(2.88) & 10.85(0.93)\\
T3& 1.51(0.38)	& 1.91(0.32)	& {\bf1.10(0.27)*} & 2.70(0.21) \\
DD&{\bf 635.4(147.9)*}&	972.0(182.0) & 2523.26(396.85) & 2774.36(225.46)\\
DI & {\bf 54.32(3.29)*} &{\bf 56.53(2.61) } & 64.59(4.70) & 70.20(3.64)\\
AB &{\bf 3.07(0.26)} & {\bf 3.09(0.27) } & {\bf 3.05(0.31)*} & {\bf 3.07(0.45)}\\
BO &{\bf 4.01(0.73) *} & {\bf 4.7(0.64)} & 4.81(0.96) & 7.80(0.68)\\
BD & {\bf 838.2(51.1)*} & 907.5(42.4) & 1359.14(133.56) & 1645.74(74.61) \\
\end{tabular}
\end{table}

\begin{table}
\caption{Results for boosting methods with a 10-fold stratified cross-validation strategy for PR, standard, soft GBT, and BooST (with 50 trees). '*' corresponds to the best result, and bold highlights the results that are not significantly different from the best result at $5\%$.\label{tab:GBT}}
\begin{tabular}{*{5}{c}}
Dataset & PR GBT&GBT& Soft GBT &	BooST \\ 
\hline
RI & {\bf 0.68(0.13)} &	0.69(0.12) & {\bf 0.64(0.16)}	&{\bf 0.52(0.09)*} \\
HS & {\bf 0.34(0.29)} & {\bf 0.38(0.05)} & {\bf 0.39(0.06)}	&{\bf 0.33(0.09)*}\\
HU & {\bf 2.45(0.36)*}	& 3.56(0.45) & 3.76(0.85) &	{\bf 2.77(0.40)}\\
OZ & {\bf 15.96(3.13)	}&{\bf 16.11(2.65)}	& 22.61(6.94) & {\bf 15.60(3.31)*} \\
SE & {\bf 0.57(0.19)}	& 	{\bf 0.57(0.19)} & 0.74(0.27) &	{\bf 0.42(0.13)*}\\
ASX & {\bf 0.15(0.03)}	& 0.18(0.03) & {\bf 0.15(0.02)} & {\bf 	0.14(0.02)*}\\
ASP & {\bf 2.42(0.21)}& 2.52(0.23) & {\bf 2.28(0.23)*} &{\bf 2.36(0.26)}\\
BB & {\bf 0.53(0.07)*}&	{\bf 0.56(0.07)} & {\bf 0.55(0.12)}	& {\bf 0.73(0.35)}\\
T1 & {\bf 1.62(0.1)*}	&3.47(0.66) & 3.17(0.84)	&{\bf 1.87(0.22)}\\
T2 & {\bf 1.83(0.29)*}	&4.4(0.96) & 5.96(1.22)	& {\bf 2.39(0.38)} \\
T3 & {\bf 0.86(0.14)*}	&1.37(0.23)& {\bf 0.98(0.21)} &	{\bf 0.95(0.11)}\\
DD & {\bf 514.8(162.8)} &	684.1(178.7)& 1585.99(236.38) &	{\bf 482.45(138.60)*}\\
DI & {\bf 57.14(3.65)} & {\bf 57.33(3.72)} & {\bf 56.73(2.38)} & {\bf 54.38(3.43)*}\\
AB & {\bf 3.15(0.29)} & {\bf 3.15(0.29) } & {\bf 3.01(0.24)*} & {\bf 3.12(0.25)} \\
BO & {\bf 3.4(0.6)*} & {\bf 3.77(0.65)} & {\bf 3.98(0.79)} &{\bf 3.41(0.76) } \\
BD & 683.0(36.6) & 695.8(40.9) & 907.45(93.70) &{\bf 658.32(45.09)*}\\
\end{tabular}
\end{table}

\paragraph{BART}
In this paragraph, we compare the performance for the Bayesian additive regression tree using PR tree, standard tree, or soft tree as blocks, since the extension of the STR tree is not available in the literature. Results are summarized in Table \ref{tab:BART}. 
This ensemble method clearly favors soft BART as the best method for 11 data sets and P-BART for five data sets. 
Although the method is more general and allows for greater variability than RF or GBT, the final results are not better (and are even worse for most of the data sets) compared with the other methods, while the computation time is longer due to the complexity of the modeling. 

\begin{table}\centering
\caption{Results for Bayesian methods with a 10-fold stratified cross-validation strategy for P-BART, BART, and soft BART. '*' corresponds to the best result, and bold represents results that are not significantly different from the best result according to a two-sided t-test at $5\%$.\label{tab:BART}}

\begin{tabular}{llll}
Dataset & P-BART & BART & Soft BART\\
\hline
RI &{\bf 0.62(0.12)} &	{\bf 0.63(0.15)}&	{\bf 0.61(0.17)*} \\
HS &0.38(0.09)	&0.4(0.08)	&{\bf 0.29(0.06)*}	\\
HU&{\bf 2.77(0.39)}	&3.34(0.63)	&{\bf 2.48(0.84)*}\\
OZ& {\bf 15.44(2.73)}	& {\bf 15.46(2.64)}	&{\bf 14.21(2.44)*} \\
SE&0.8(0.14)&	0.68(0.16)&	{\bf 0.43(0.28)*}	\\
ASX& {\bf 0.19(0.02)} &	{\bf 0.19(0.02)}	& {\bf 0.16(0.04)*}	\\
ASP& {\bf 2.4(0.2)*} &	{\bf 2.42(0.22)} &	{\bf 2.45(0.24)} \\
BB& {\bf 0.52(0.06)}	& {\bf 0.55(0.05)}	&{\bf 0.52(0.04)*} \\
T1& {\bf 1.97(0.38)*}	& 4.25(1.33)	& {\bf 2.17(0.37)}	\\
T2&{\bf 2.32(0.63)*}	&5.54(1.97)	&{\bf 2.39(0.51)}	\\
T3&{\bf 0.9(0.2)}&	1.9(0.55)	&{\bf 0.87(0.15)*}	\\
DD&650(120.85)&	720.68(99.43)&	{\bf 305.92(88.81)*}	\\
DI&{\bf 54.06(2.64)*}	&56.26(2.93)&	{\bf 54.46(3.44)}	\\
AB& {\bf 3.06(0.27)*}&	3.13(0.3)&	3.11(0.44)\\
BO& 4.25(0.72)	&4.04(0.69)&	{\bf 3.1(0.42)*} \\
BD& 701.62(48.68)	& 739.69(35.51)&	{\bf 611.41(64.58)*} \\	
\end{tabular}
\end{table}

\subsection{Bias-variance trade-off}
\label{exp:bias_var}
\begin{figure}[t]
 \includegraphics[width=14.5cm]{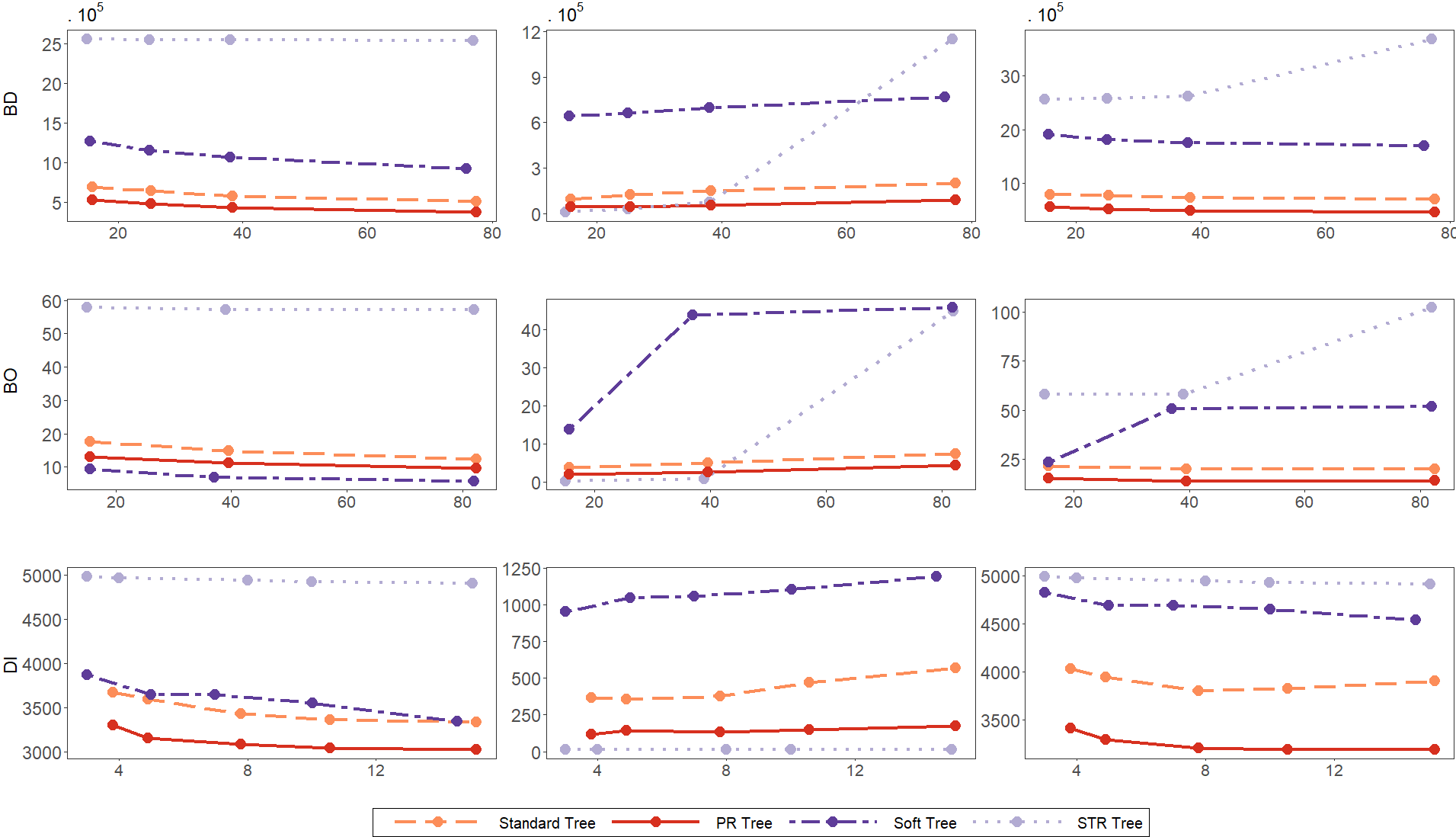}
 \caption{Evolution of the performance with respect to the depth of the tree on several data sets: Bike Day (BD), Boston (BO), Diabetes (DI). Left: bias, middle: variance, right: MSE.\label{fig:bias_var}}
\end{figure}
RMSE, as the standard measure of performance in regression used in Section \ref{sec:exps:2}, summarizes the performance but conceals the details. In this section, we illustrate the evolution of the bias and variance for each estimator introduced in this paper, as it is well known that RMSE can be decomposed as the sum of the squared bias and variance and that each term has a meaning for the ensemble methods. 

The bias and variance are defined in the following way for the prediction function $\hat f$:
\begin{align*}
\text{bias}(\hat{f}) &= \text{E}_{(\B{X},Y)} \left\{ \left( Y - \text{E} \{\hat{f}(\B{X}) \}\right)^2 \right\} ;
\\
\text{var}(\hat{f}) &= \text{E}_{(\B{X},Y)} \left(\text{Var} \{\hat{f}(\B{X}) \}\right) ;
\end{align*}
where the inner expectation and variance are with respect to the estimator. 
To compute the bias and variance, we subsample the data with 80\% for training and 20\% for testing, which estimates the inner and outer expectations. 

The evolution with respect to the dimension is plotted in Fig. \ref{fig:bias_var} for one tree. We vary the number of leaves (changing the stopping criteria) to consider trees of different depths for three data sets (BD, BO, and DI). We remark that the bias decreases with respect to the number of leaves, while the variance increases with respect to the number of leaves, leading to the famous U-shape for RMSE. It is interesting to observe that PR trees improve the bias and variance of the standard trees. STR trees have a particularly large bias, despite the small variance leading to a large RMSE. The opposite is true for soft trees, which can have a good (low) bias but particularly poor (high) variance. 

In Fig. \ref{fig:bias_var_2}, we focus on the Diabetes data set and provide the result for one tree, RF, GBT, and BART. For the ensemble methods (RF, GBT, and BART), the evolution with respect to the dimension is plotted in which we vary the number of aggregated trees.
It is well known that bagging improves the variance (and makes it decrease with the number of trees). This is indeed illustrated in the plots. However, we observe that the standard RF, PR-RF, and STR-RF achieve the same variance. At the end, PR-RF has the best performance in RMSE, because it improves (even for one tree) the bias. Note that the plot of bias is very similar for one tree and RF. It is also known that boosting reduces the bias, as illustrated in this figure. Again, all methods perform similarly, and the gain for PR-RF in RMSE is achieved thanks to the variance reduction for one tree. 
Finally, as BART is a mixture of bagging and boosting (after a warming phase), we acknowledge the improvement in both the bias and variance. All the methods provide comparable results (except for the standard tree that is slightly worse than the others), thus leading to the same RMSE. 

\begin{figure}[t]
 \includegraphics[width=14.5cm]{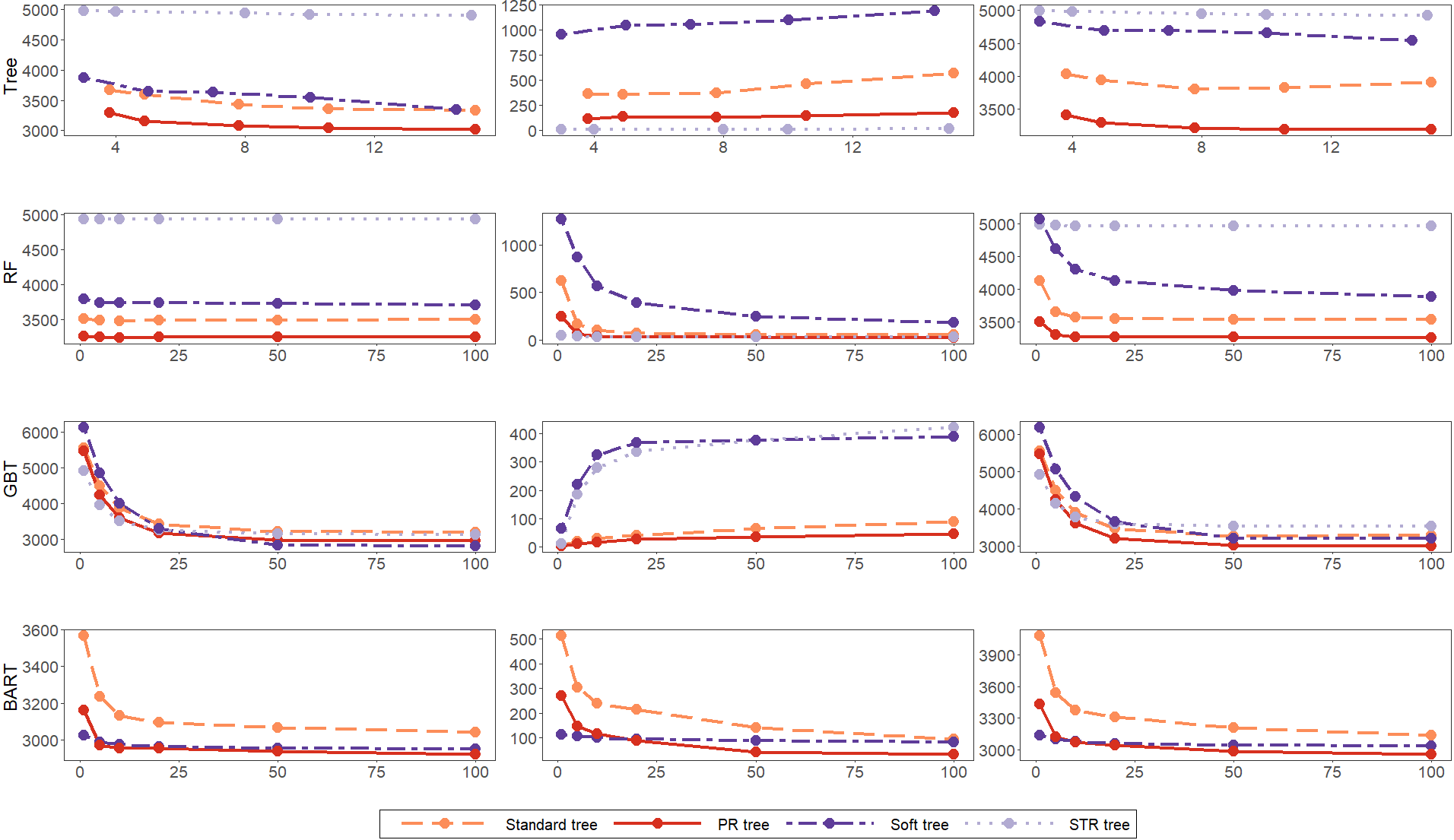}
 \caption{Evolution of the performance of one tree (first row), bagging methods (second row), boosting methods (third row), and Bayesian ensemble methods (fourth row) for the Diabetes data set. Left: bias, middle: variance, right: MSE. 
\\For one tree, we increase the depth of the tree to vary the dimension, whereas for the ensemble methods, we increase the number of trees.\label{fig:bias_var_2}}
\end{figure}

\section{Conclusion}\label{sec:concl} 

Ensemble methods of regression trees are known to be accurate and robust for dealing with general data sets. In this paper, we provide an extension of the standard regression tree, called the probabilistic regression tree, which is more robust to noise and more stable. We study it as a building block for bagging, boosting, and Bayesian ensemble extensions. The consistency of the corresponding parameters is established for all extensions to the ensemble methods. 

Experiments were then carried out to better understand the behavior of each method for several data sets. Unfortunately, there is not one single best method to recommend for any data set, but we rather observe that the ensemble methods converge to similar results. An empirical study of the bias-variance trade-off also illustrates the well-known conclusion and emphasizes which method should be chosen depending on the criterion.

As a future perspective, we would like to generalize the uncertainty considered here for the output using quantile regressions, for example.

\bibliographystyle{chicago}
\bibliography{ref}

\begin{thebibliography}{}

\bibitem[\protect\citeauthoryear{Alkhoury, Devijver, Clausel, Tami, Gaussier,
  and Oppenheim}{Alkhoury et~al.}{2020}]{Alkhoury_2020}
Alkhoury, S., E.~Devijver, M.~Clausel, M.~Tami, E.~Gaussier, and G.~Oppenheim
  (2020).
\newblock Smooth and consistent probabilistic regression tree.
\newblock In {\em Advances in Neural Information Processing Systems 34}.

\bibitem[\protect\citeauthoryear{Bhattacharya, Pati, and Yang}{Bhattacharya
  et~al.}{2019}]{bhattacharya2019bayesian}
Bhattacharya, A., D.~Pati, and Y.~Yang (2019).
\newblock {Bayesian fractional posteriors}.
\newblock {\em The Annals of Statistics\/}~{\em 47\/}(1), 39 -- 66.

\bibitem[\protect\citeauthoryear{Breiman}{Breiman}{2001}]{breiman2001random}
Breiman, L. (2001).
\newblock Random forests.
\newblock {\em Machine learning\/}~{\em 45\/}(1), 5--32.

\bibitem[\protect\citeauthoryear{Breiman, Friedman, Stone, and Olshen}{Breiman
  et~al.}{1984}]{breiman1984cart}
Breiman, L., J.~Friedman, C.~J. Stone, and R.~A. Olshen (1984).
\newblock {\em {Classification and Regression Trees}}.
\newblock New York: Chapman \& Hall.

\bibitem[\protect\citeauthoryear{Bühlmann, Kalisch, and Meier}{Bühlmann
  et~al.}{2014}]{Buhlmann_2014}
Bühlmann, P., M.~Kalisch, and L.~Meier (2014).
\newblock High-dimensional statistics with a view toward applications in
  biology.
\newblock {\em Annual Review of Statistics and Its Application\/}~{\em 1\/}(1),
  255--278.

\bibitem[\protect\citeauthoryear{Chipman, George, and Mcculloch}{Chipman
  et~al.}{2010}]{bart}
Chipman, H.~A., E.~I. George, and R.~E. Mcculloch (2010).
\newblock Bart: Bayesian additive regression trees.
\newblock {\em Annals of Applied Statistics\/}, 266--298.

\bibitem[\protect\citeauthoryear{da~Rosa, Veiga, and Medeiros}{da~Rosa
  et~al.}{2008}]{rosa2008}
da~Rosa, J.~C., A.~Veiga, and M.~C. Medeiros (2008).
\newblock Tree-structured smooth transition regression models.
\newblock {\em Computational Statistics and Data Analysis\/}~{\em 58},
  2469--2488.

\bibitem[\protect\citeauthoryear{Elith, Leathwick, and Hastie}{Elith
  et~al.}{2008}]{elith2008}
Elith, J., J.~R. Leathwick, and T.~Hastie (2008).
\newblock A working guide to boosted regression trees.
\newblock {\em Animal Ecology\/}~{\em 77\/}(4), 802--813.

\bibitem[\protect\citeauthoryear{Friedman}{Friedman}{2000}]{friedman2000}
Friedman, J.~H. (2000).
\newblock Greedy function approximation: A gradient boosting machine.
\newblock {\em The Annals of Statistics\/}~{\em 29}, 1189--1232.

\bibitem[\protect\citeauthoryear{{Gy\"orfi}, Kohler, Krzyżak, and
  Walk}{{Gy\"orfi} et~al.}{2002}]{Gyorfi-2002}
{Gy\"orfi}, L., M.~Kohler, A.~Krzyżak, and H.~Walk (2002).
\newblock {\em {A Distribution-Free Theory of Nonparametric Regression}}.
\newblock Springer.

\bibitem[\protect\citeauthoryear{Irsoy, Yildiz, and Alpaydin}{Irsoy
  et~al.}{2012}]{irsoy2012soft}
Irsoy, O., O.~T. Yildiz, and E.~Alpaydin (2012).
\newblock Soft decision trees.
\newblock In {\em International Conference on Pattern Recognition}.

\bibitem[\protect\citeauthoryear{Jordan and Jacobs}{Jordan and
  Jacobs}{1994}]{Jordan:1994:HME:188104.188106}
Jordan, M.~I. and R.~A. Jacobs (1994).
\newblock Hierarchical mixtures of experts and the em algorithm.
\newblock {\em Neural Comput.\/}~{\em 6\/}(2), 181--214.

\bibitem[\protect\citeauthoryear{Kim, Loh, Shih, and Chaudhuri}{Kim
  et~al.}{2007}]{Kim_2007}
Kim, H., W.-Y. Loh, Y.-S. Shih, and P.~Chaudhuri (2007).
\newblock Visualizable and interpretable regression models with good prediction
  power.
\newblock {\em IIE Transactions\/}~{\em 39\/}(6), 565--579.

\bibitem[\protect\citeauthoryear{Linero}{Linero}{2018}]{Linero_2018}
Linero, A.~R. (2018).
\newblock Bayesian regression trees for high-dimensional prediction and
  variable selection.
\newblock {\em Journal of the American Statistical Association\/}~{\em
  113\/}(522), 626--636.

\bibitem[\protect\citeauthoryear{Linero and Yang}{Linero and
  Yang}{2018}]{linero2018bayesian}
Linero, A.~R. and Y.~Yang (2018).
\newblock {Bayesian regression tree ensembles that adapt to smoothness and
  sparsity}.
\newblock {\em Journal of the Royal Statistical Society: Series B (Statistical
  Methodology)\/}~{\em 80\/}(5), 1087--1110.

\bibitem[\protect\citeauthoryear{Ročková and van~der Pas}{Ročková and
  van~der Pas}{2020}]{rockova2020}
Ročková, V. and S.~van~der Pas (2020, 08).
\newblock Posterior concentration for bayesian regression trees and forests.
\newblock {\em Ann. Statist.\/}~{\em 48\/}(4), 2108--2131.

\bibitem[\protect\citeauthoryear{Scornet, Biau, and Vert}{Scornet
  et~al.}{2015}]{scornet:2015}
Scornet, E., G.~Biau, and J.-P. Vert (2015).
\newblock Consistency of random forests.
\newblock {\em The Annals of Statistics\/}~{\em 43\/}(4), 1716--1741.

\bibitem[\protect\citeauthoryear{Suarez and Lutsko}{Suarez and
  Lutsko}{2003}]{suarez1999}
Suarez, A. and F.~Lutsko (2003).
\newblock Globally fuzzy decision trees for classification and regression.
\newblock {\em Fuzzy sets and systems\/}~{\em 138}, 221--254.

\bibitem[\protect\citeauthoryear{Tan and Roy}{Tan and Roy}{2019}]{bart_2019}
Tan, Y.~V. and J.~Roy (2019).
\newblock Bayesian additive regression trees and the general bart model.
\newblock {\em Statistics in Medicine\/}~{\em 38\/}(25), 5048--5069.

\bibitem[\protect\citeauthoryear{Varoquaux, Buitinck, Louppe, Grisel,
  Pedregosa, and Mueller}{Varoquaux et~al.}{2015}]{Scikit-learn}
Varoquaux, G., L.~Buitinck, G.~Louppe, O.~Grisel, F.~Pedregosa, and A.~Mueller
  (2015).
\newblock Scikit-learn: Machine learning without learning the machinery.
\newblock {\em GetMobile: Mobile Computing and Communications\/}~{\em 19\/}(1),
  29--33.

\bibitem[\protect\citeauthoryear{Zhang and Yu}{Zhang and
  Yu}{2005}]{ZhangYu2005}
Zhang, T. and B.~Yu (2005).
\newblock {Boosting with early stopping: Convergence and consistency}.
\newblock {\em The Annals of Statistics\/}~{\em 33\/}(4), 1538 -- 1579.

\end{thebibliography}

\appendix

\section{Technical lemmas}
\subsection{Proof of Lemma~\ref{lem:2a}}
We detail the proof of Lemma~\ref{lem:2a} involved in the proof of Proposition~\ref{prop:step2}. We first need several notations. For each $j$, set $I_j^{(1)}:=[m_j^{(1)}-\delta_j^{(1)},m_j^{(1)}+\delta_j^{(1)}]$
(resp $I_j^{(1)}:=[m_j^{(2)}-\delta_j^{(2)},m_j^{(2)}+\delta_j^{(2)}])$) and $\delta_j^{(1)},\delta_j^{(2)}>0$. Define the two rectangles $R_1:=\prod_j I_j^{(1)}$ and $R_2:=\prod_j I_j^{(2)}$. One has
\begin{lemma}\label{lem:2a}
Assume that $\phi_{\boldsymbol\sigma}\in B^{(1)}_{1,\infty}$. For any $\mathbf x\in [0,1]^p$ and any $\alpha\in (0,1)$, one has:
\[
|\Psi(\mathbf x;R_1)-\Psi( \mathbf x;R_2)|\leq C\max_j\left[\max (|a_j^{(2)}-a_j^{(1)}|,|b_j^{(2)}-b_j^{(1)}|)\right]^{\alpha}
\]
\end{lemma}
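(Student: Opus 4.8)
The plan is to estimate the difference $\Psi(\mathbf{x};R_1)-\Psi(\mathbf{x};R_2)$ by rewriting each $\Psi$-value as a convolution of $\phi_{\boldsymbol\sigma}$ against the indicator of a rectangle, and then bounding the discrepancy between the two indicators in a suitably weak norm. Concretely, from the definition \eqref{Psi}, and after the change of variable that absorbs the scaling by $\prod_j\sigma_j$, we have $\Psi(\mathbf{x};R_\iota) = (\phi_{\boldsymbol\sigma} * \mathds{1}_{R_\iota})(\mathbf{x})$ for $\iota \in \{1,2\}$, so that
\[
\Psi(\mathbf{x};R_1)-\Psi(\mathbf{x};R_2) = \bigl(\phi_{\boldsymbol\sigma} * (\mathds{1}_{R_1}-\mathds{1}_{R_2})\bigr)(\mathbf{x}).
\]
The key point is that $\phi_{\boldsymbol\sigma}\in B^{(1)}_{1,\infty}$ means $\phi_{\boldsymbol\sigma}$ is, roughly, Lipschitz in the integrated ($L^1$) sense: its first-order difference satisfies $\|\phi_{\boldsymbol\sigma}(\cdot+\mathbf{h})-\phi_{\boldsymbol\sigma}(\cdot)\|_{L^1} \le C|\mathbf{h}|$. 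This regularity is exactly what lets us trade smoothness of $\phi_{\boldsymbol\sigma}$ against the lack of smoothness of the indicator.

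The main step is then to control $\|\phi_{\boldsymbol\sigma} * (\mathds{1}_{R_1}-\mathds{1}_{R_2})\|_\infty$. I would split the symmetric difference $R_1 \triangle R_2$ into at most $2p$ "slabs", one for each face of the rectangles, each slab being a set of the form (a thin interval of width $|a_j^{(2)}-a_j^{(1)}|$ or $|b_j^{(2)}-b_j^{(1)}|$ in coordinate $j$) times (a bounded box in the remaining coordinates). For a single such slab $S_j$ of width $w_j := \max(|a_j^{(2)}-a_j^{(1)}|,|b_j^{(2)}-b_j^{(1)}|)$ in direction $j$, one writes $\mathds{1}_{S_j}$ as a difference of two translated half-space-type indicators and uses Fubini together with the one-dimensional Besov/Lipschitz estimate on the $j$-th partial difference of $\phi_{\boldsymbol\sigma}$: this yields $\|\phi_{\boldsymbol\sigma}*\mathds{1}_{S_j}\|_\infty \le C\, w_j$. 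Summing over the $2p$ slabs gives $|\Psi(\mathbf{x};R_1)-\Psi(\mathbf{x};R_2)| \le C \max_j w_j$, which is the claimed bound with exponent $\alpha = 1$; the stated bound for arbitrary $\alpha\in(0,1)$ follows immediately since the $w_j$ are bounded (the rectangles live in a fixed compact region, so $w_j \le C$ and $w_j \le C\,w_j^\alpha$ for $\alpha\le 1$), which is presumably why the authors phrase the conclusion with a general $\alpha$: it is the form in which the lemma is consumed in the proof of Proposition~\ref{prop:step2} (covering-number/entropy bounds want a Hölder modulus).

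I would carry this out in the order: (i) record the convolution representation of $\Psi$; (ii) state and use the $B^{(1)}_{1,\infty}$ estimate on first differences of $\phi_{\boldsymbol\sigma}$ in each coordinate; (iii) decompose $R_1\triangle R_2$ into $O(p)$ slabs and bound the contribution of each by Fubini plus the one-dimensional estimate; (iv) sum and pass from exponent $1$ to exponent $\alpha$ using boundedness of the widths. The part requiring the most care is step (iii): one must handle the geometry of the symmetric difference cleanly — in particular the "corner" regions where several faces have moved simultaneously, which I would deal with by the standard telescoping over coordinates (change one face at a time, so each intermediate symmetric difference is a genuine single slab) rather than trying to describe $R_1\triangle R_2$ in one shot. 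A minor technical point to be careful about is that the slabs must be intersected with a fixed bounded box (coming from $\mathbf{x}\in[0,1]^p$ and the regions being contained in a compact set), so that the "remaining coordinates" integral of $\phi_{\boldsymbol\sigma}$ is genuinely finite and uniformly bounded; this is where Assumption~\ref{ass:phi} (the decay bound on $\phi$) is implicitly used to keep all constants independent of $\mathbf{x}$.
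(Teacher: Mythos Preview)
Your proposal is essentially the same as the paper's proof. Both arguments (i) telescope over coordinates, changing one interval $I_j^{(1)}\to I_j^{(2)}$ at a time, and (ii) reduce each coordinate change to an $L^1$ translation difference of $\phi_{\boldsymbol\sigma}$, which is then controlled by the Besov assumption $\phi_{\boldsymbol\sigma}\in B^{(1)}_{1,\infty}$. Your convolution/half-space-indicator language is a cleaner repackaging of the same idea; in particular it avoids the paper's somewhat awkward intermediate step of first enlarging the shorter interval to match lengths before translating.

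One small caveat: you assert that $\phi_{\boldsymbol\sigma}\in B^{(1)}_{1,\infty}$ yields $\|\phi_{\boldsymbol\sigma}(\cdot+\mathbf h)-\phi_{\boldsymbol\sigma}(\cdot)\|_{L^1}\le C|\mathbf h|$ (exponent $1$) and then weaken to exponent $\alpha$. Under the standard second-difference definition of $B^{1}_{1,\infty}$ this first-order Lipschitz bound need not hold; what one gets directly is $\|\Delta_{\mathbf h}\phi_{\boldsymbol\sigma}\|_{L^1}\le C|\mathbf h|^\alpha$ for every $\alpha<1$, which is exactly what the paper invokes. Since your final claim is the $\alpha$-bound anyway, this does not affect the outcome, but the intermediate ``exponent $1$'' step should be replaced by the direct $\alpha$-modulus estimate. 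Your concern about needing a bounded box in the remaining coordinates is unnecessary: after passing to the translation difference, the full $L^1(\mathbb R^p)$ norm of $\Delta_{\mathbf h}\phi_{\boldsymbol\sigma}$ controls everything, as the paper's proof makes explicit by extending the $u_j$-integration to all of $\mathbb R$.
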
 
\noindent{\bf Proof.} 
Observe that:
\begin{eqnarray*}
&&|\Psi(\mathbf x;R_1)-\Psi(\mathbf x;R_2)|\\
&=&\left|\int_{R_1}\phi_{\boldsymbol \sigma}(\mathbf u-\mathbf x)d\mathbf u-\int_{R_2}\phi_{\boldsymbol \sigma}(\mathbf u-\mathbf x)d\mathbf u\right|\\
&= &\left|\sum_{j=1}^p \int_{\prod_{\ell=1}^j I_\ell^{(1)}\times \prod_{\ell=j+1}^p I_\ell^{(2)}}\phi_{\boldsymbol \sigma}(\mathbf u-\mathbf x)d\mathbf u-\int_{\prod_{\ell=1}^{j-1}I_\ell^{(1)}\times \prod_{\ell=j}^p I_\ell^{(2)}}\phi_{\boldsymbol \sigma}(\mathbf u-\mathbf x)d\mathbf u\right|\\
&\leq&\sum_{j=1}^p \int_{\prod_{\ell=1}^{j-1}I_\ell^{(1)} \times \prod_{\ell=j+1}^p I_\ell^{(2)}}
\left|\int_{I_j^{(1)}}\phi_{\boldsymbol \sigma}(\mathbf u-\mathbf x)d u_j-\int_{I_j^{(2)}}\phi_{\boldsymbol \sigma}(\mathbf u-\mathbf x)d u_j\right|du_1 \cdots du_{j-1} du_{j+1}\cdots du_p
\end{eqnarray*}

We now bound each term of the sum above. For the sake of simplicity, let us assume that for a given $j$, $\delta_j^{(1)}\leq \delta_j^{(2)}$. Then:
\begin{eqnarray*}
&&\left|\int_{I_j^{(1)}}\phi_{\boldsymbol \sigma}(\mathbf u-\mathbf x)d u_j-\int_{I_j^{(2)}}\phi_{\boldsymbol \sigma}(\mathbf u-\mathbf x)d u_j\right|\\
&=&\left|\int_{m_j^{(1)}-\delta_j^{(1)}}^{m_j^{(1)}+\delta_j^{(1)}}\phi_{\boldsymbol \sigma}(\mathbf u-\mathbf x)d u_j-\int_{m_j^{(2)}-\delta_j^{(2)}}^{m_j^{(2)}+\delta_j^{(2)}}\phi_{\boldsymbol \sigma}(\mathbf u-\mathbf x)d u_j\right|\\
&\leq&\left|\int_{m_j^{(1)}-\delta_j^{(2)}}^{m_j^{(1)}+\delta_j^{(2)}}\phi_{\boldsymbol \sigma}(\mathbf u-\mathbf x)d u_j-\int_{m_j^{(2)}-\delta_j^{(2)}}^{m_j^{(2)}+\delta_j^{(2)}}\phi_{\boldsymbol \sigma}(\mathbf u-\mathbf x)d u_j\right|
\end{eqnarray*}
where the last equality comes from the positivity of $\phi_{\boldsymbol\sigma}$. Performing a change of variable in the first integral, we obtain:
\begin{eqnarray*}
&&\left|\int_{m_j^{(1)}-\delta_j^{(2)}}^{m_j^{(1)}+\delta_j^{(2)}}\phi_{\boldsymbol \sigma}(\mathbf u-\mathbf x)d u_j-\int_{m_j^{(2)}-\delta_j^{(2)}}^{m_j^{(2)}+\delta_j^{(2)}}\phi_{\boldsymbol \sigma}(\mathbf u-\mathbf x)d u_j\right|\\
&\leq&\int_{m_j^{(2)}-\delta_j^{(2)}}^{m_j^{(2)}+\delta_j^{(2)}}\left|\phi_{\boldsymbol \sigma}(\mathbf u-\mathbf x+(m_j^{(1)}-m_j^{(2)}e_j)-\phi_{\boldsymbol \sigma}(\mathbf u-\mathbf x)\right|d u_j\\
&\leq &\int_{\mathbb{R}}\left|\phi_{\boldsymbol \sigma}(\mathbf u-\mathbf x+(m_j^{(1)}-m_j^{(2)}e_j)-\phi_{\boldsymbol \sigma}(\mathbf u-\mathbf x)\right|d u_j
\end{eqnarray*}
Gathering this last inequality, performing the change of variable $\mathbf v:=\mathbf u-\mathbf x$, and using the inequality
\begin{eqnarray*}
&&|\Psi(\mathbf x;R_1)-\Psi(\mathbf x;R_2)|\\
&\leq& \sum_{j=1}^p \int_{\prod_{\ell=1}^{j-1}I_\ell^{(1)} \times \prod_{\ell=j+1}^p I_\ell^{(2)}}
\left|\int_{I_j^{(1)}}\phi_{\boldsymbol \sigma}(\mathbf u-\mathbf x)d u_j-\int_{I_j^{(2)}}\phi_{\boldsymbol \sigma}(\mathbf u-\mathbf x)d u_j\right|du_1 \cdots du_{j-1} du_{j+1}\cdots du_p
\end{eqnarray*}
yields:
\[
|\Psi(\mathbf x;R_1)-\Psi(\mathbf x;R_2)|\leq \sum_{j=1}^p \int_{\mathbb{R}^p}\left|\phi_{\boldsymbol \sigma}(\mathbf v+(m_j^{(1)}-m_j^{(2)})e_j)-\phi_{\boldsymbol \sigma}(\mathbf v)\right|d \mathbf v
\]
We now use the regularity assumption of the function $\phi_{\boldsymbol\sigma}$ which yields that for each $j$ and each $\alpha\in (0,1)$:
\begin{eqnarray*}
\int_{\mathbb{R}^p}\left|\phi_{\boldsymbol \sigma}(\mathbf v+(m_j^{(1)}-m_j^{(2)})e_j)-\phi_{\boldsymbol \sigma}(\mathbf v)\right|d \mathbf v&\leq C|m_1^{(j)}-m_2^{(j)}|^\alpha\leq C\|m_1-m_2\|_\infty^\alpha\\
& C'\,\max(\|a_1-a_2\|_\infty,\|b_1-b_2\|_\infty)
\end{eqnarray*}

\subsection{Technical lemma about the decomposition of a single tree into a sum of trees}
We need preliminary notations to index the regions differently based on the fact that the partition is regular. Any $k\in \{0,\cdots,2^{i_\text{depth}}-1\}$ can be written in an unique way:
\[
k=\sum_{i=0}^{i_\text{depth}-1}\varepsilon_i[k] 2^i
\]
Then, any region $R_k$ can be indexed by $\boldsymbol \varepsilon[k]:=(\varepsilon_i[k])_i$. We can describe the topology $\mathcal{T}$ of any tree of $h$ giving the activated nodes. For example, the topology associated with $h$ will be described as:
\[
\mathcal{T}:=\{(\varepsilon_i)_i\in\{0,1\}^{i_\text{depth}} \}
\]
If $i< i_\text{depth}$, the corresponding region associated with a given node $(\varepsilon_1,\cdots,\varepsilon_i)$ is then:
\begin{equation}\label{eq:union-regions}
\mathcal{R}_{(\varepsilon_1,\cdots,\varepsilon_i)}:=\bigcup_{(\varepsilon_{i+1},\cdots,\varepsilon_{i_\text{depth}})\in \{0,1\}^{i_\text{depth}-i-1}}\mathcal{R}_{(\varepsilon_1,\cdots,\varepsilon_i,\varepsilon_{i+1},\cdots,\varepsilon_{i_\text{depth}})}
\end{equation}

\begin{lemma}\label{lem:one-tree-to-many}
Let $(R_k)$ be a regular partition of depth $i_\text{depth}$ and $h$ a tree of the form:
\[
h:=\sum_{k=0}^{2^{i_\text{depth}}-1} \gamma_k \Psi(\cdot, \mathcal{R}_k)
\]
Then $h$ can we written as a sum of $2^{i_\text{depth}-1}$ trees $(h^{(\ell)})_{\ell=0,\cdots,2^{i_\text{depth}-1}-1}$ with
\[
\|h^{(\ell)}\|_{\infty,\mathcal{V}_n}\leq \max_{i}\max_{(\varepsilon_1,\cdots,\varepsilon_{i})}|\gamma_{\varepsilon_1[\ell],\cdots,\varepsilon_{i-1}[\ell],1-\varepsilon_i[\ell]}-\gamma_{\varepsilon_1[\ell],\cdots,\varepsilon_{i-1}[\ell],\cdot}|
\]
where for any $\ell\in \{0,\cdots,2^{i_\text{depth}-1}\}$, $\boldsymbol \varepsilon[\ell]=(\varepsilon_i[\ell])$ its binary encoding. 
\end{lemma}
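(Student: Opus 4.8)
The plan is to prove this by reverse induction on the depth, peeling off one level of the tree at a time and producing one ``difference tree'' per leaf at the bottom level. First I would set up the bookkeeping: for a regular partition of depth $i_\text{depth}$, the $2^{i_\text{depth}}$ leaves are indexed by binary strings $\boldsymbol\varepsilon[k]=(\varepsilon_0[k],\dots,\varepsilon_{i_\text{depth}-1}[k])$, and the internal nodes at level $i$ by prefixes $(\varepsilon_1,\dots,\varepsilon_i)$, with the region identity~\eqref{eq:union-regions} expressing $\Psi(\cdot,\mathcal{R}_{(\varepsilon_1,\dots,\varepsilon_i)})$ as the sum of the $\Psi(\cdot,\mathcal{R}_k)$ over all leaves $k$ refining that node. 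The key algebraic fact is that, because $\{\Psi(\cdot,\mathcal R_k)\}$ forms a partition of unity over the leaves under a common node, any constant-weight contribution telescopes: writing $\gamma_{(\varepsilon_1,\dots,\varepsilon_i),\cdot}$ for a representative (say averaged) weight attached to an internal node, one has $\sum_{k}\gamma_k\Psi(\cdot,\mathcal R_k)=\gamma_{\emptyset,\cdot}\,\Psi(\cdot,[0,1]^p)+\sum_{\text{internal nodes }v}\big(\text{local weight increments along the two children of }v\big)$.

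Concretely, the induction step I would carry out is: given the representation of $h$ restricted to a node $v$ at level $i-1$ with an assigned base weight $\gamma_{v,\cdot}$, split off the part that distinguishes the two children. Write, for each child index $\varepsilon_i\in\{0,1\}$,
\[
\sum_{k\text{ under }v,\varepsilon_i}\gamma_k\Psi(\cdot,\mathcal R_k)
=\gamma_{v,\cdot}\,\Psi(\cdot,\mathcal R_{(v,\varepsilon_i)})
+\sum_{k\text{ under }v,\varepsilon_i}(\gamma_k-\gamma_{v,\cdot})\Psi(\cdot,\mathcal R_k),
\]
so that summing the two children and using $\Psi(\cdot,\mathcal R_{(v,0)})+\Psi(\cdot,\mathcal R_{(v,1)})=\Psi(\cdot,\mathcal R_v)$ reproduces the node-$v$ term $\gamma_{v,\cdot}\Psi(\cdot,\mathcal R_v)$ plus two strictly deeper problems on the children. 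Iterating down to depth $i_\text{depth}$ produces, for each bottom leaf $\ell$ (there are $2^{i_\text{depth}-1}$ of them once we pair siblings appropriately — this is where the factor $2^{i_\text{depth}-1}$ in the statement comes from, one difference tree per sibling pair along a root path), a tree $h^{(\ell)}$ whose weights are exactly the increments $\gamma_{\varepsilon_1[\ell],\dots,\varepsilon_{i-1}[\ell],1-\varepsilon_i[\ell]}-\gamma_{\varepsilon_1[\ell],\dots,\varepsilon_{i-1}[\ell],\cdot}$ over $i=1,\dots,i_\text{depth}$. The sup-norm bound on $\mathcal V_n$ then follows because each $h^{(\ell)}$ is a convex combination (again by the partition-of-unity property $\sum_k\Psi(\cdot,\mathcal R_k)=1$, Assumption~\ref{ass:Xcompact} guaranteeing the data lie in $[0,1]^p$) of those finitely many increment values, so $\|h^{(\ell)}\|_{\infty,\mathcal V_n}$ is at most the largest $|\cdot|$ among them, which is precisely the claimed maximum over $i$ and over nodes $(\varepsilon_1,\dots,\varepsilon_i)$.

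The main obstacle I anticipate is not any single estimate but getting the indexing and the sibling-pairing exactly right so that the count of difference trees is $2^{i_\text{depth}-1}$ and each $h^{(\ell)}$ genuinely has the stated weights: one must be careful that the ``base weight'' $\gamma_{v,\cdot}$ assigned to an internal node is consistent (e.g. always taken from one designated child, or the average) so that the telescoping closes and no residual term is left over at the root beyond the single node-$v$ term that gets absorbed one level up. A secondary point to verify is that each difference tree $h^{(\ell)}$ really is ``a tree'' in the sense used elsewhere in the paper — i.e. its support decomposition is along the same regular partition — which is immediate here since we only ever re-group the fixed functions $\Psi(\cdot,\mathcal R_k)$. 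Once the combinatorial bookkeeping is pinned down, the analytic content is just the triangle inequality together with $0\le\Psi(\cdot,\mathcal R_k)\le 1$ and $\sum_k\Psi(\cdot,\mathcal R_k)=1$, so no further technical lemma (in particular not Lemma~\ref{lem:2a}) is needed for this step.
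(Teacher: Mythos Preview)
Your construction and the paper's use the same underlying topology for each $h^{(\ell)}$: a ``spine'' from the root down to one bottom sibling pair, with the off-spine sibling region at each level serving as a leaf (this is exactly the topology $\mathcal{T}_\ell$ drawn in the paper's Step~1). The paper's Step~2, however, is itself only a sketch: it assigns weight $0$ to every off-spine leaf and the original $\gamma$'s to the two deepest leaves, which gives $\sum_\ell h^{(\ell)}=h$ but only the weaker bound $\|h^{(\ell)}\|_{\infty,\mathcal{V}_n}\le\|h\|_{\infty,\mathcal{V}_n}=\max_k|\gamma_k|$, not the difference bound stated in the lemma. The phrase ``a first definition'' hints at an intended refinement that is never carried out in the paper.

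Your telescoping idea is the natural route toward the sharper difference bound, but there is a gap you flagged and did not close. An increment attached to an internal node $v$ at level $i<i_{\text{depth}}$ is shared by all $2^{\,i_{\text{depth}}-1-i}$ root paths passing through $v$; if each $h^{(\ell)}$ carries the increments along its own root path, the sum $\sum_\ell h^{(\ell)}$ over-counts those upper-level increments and does not equal $h$. You need an explicit allocation rule (for instance, assign the increment at $v$ to a single designated descendant path, or split it among the descendants and then verify that the resulting leaf weights are still bounded by the stated differences). ``Once we pair siblings appropriately'' is precisely the step that has to be made concrete, and until it is, the decomposition is not well defined.

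A smaller point: in this paper $\|h\|_{\infty,\mathcal{V}_n}:=\max_k|\gamma_k|$ (see the statement of Proposition~\ref{pro:approx1}), i.e.\ it is the maximum absolute leaf weight, not a functional sup-norm on $[0,1]^p$. Once the leaf weights of $h^{(\ell)}$ are identified as the claimed differences, the bound is immediate and no partition-of-unity/convex-combination argument is needed.
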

\begin{remark}
Note that the way in which $h$ is decomposed is not unique.
\end{remark}
\noindent{\bf Proof:} \\
\noindent{\bf Step 1: definition of the topology of each tree $h^{(\ell)}$}\\
We first begin by defining the topology $\mathcal{T}^{(\ell)}$ associated with each tree $h^{(\ell)}$. Activated nodes of $h^{(\ell)}$ are defined as follows:
\[
\mathcal{T}_{l}:=\{(1-\varepsilon_1[\ell]),(\varepsilon_1[\ell],1-\varepsilon_2[\ell]),\cdots,(\varepsilon_1[\ell],\varepsilon_2[\ell],\cdots,\varepsilon_{i_\text{depth}}[\ell]),(\varepsilon_1[\ell],\varepsilon_2[\ell],\cdots,1-\varepsilon_{i_{depth}}[\ell])\}
\]
We use an example proposed in \cite{rockova2020} for $i_\text{depth=3}$. Let us define the topology of $h$ in Figure~\ref{fig:tree}, and the topology of $h_1,h_2,h_3,h_4$ in Figure~\ref{fig:trees}.

\begin{figure}\centering
		\begin{tikzpicture}[{black, circle, draw, inner sep=1}]
		\tikzset{nodes={draw,rounded corners},minimum height=0.5cm,minimum width=0.5cm, font=\footnotesize}
		\node (X) at (0,1){};
		\node (Y) at (-2,0){};
		\node (Z) at (2,0){};
		\node (A) at (-1,-1){};
		\node (B) at (-3,-1){};
		\node (C) at (1,-1){};
		\node (D) at (3,-1){};
		\node (E) at (-3.5,-2){};
		\node (F) at (-2.5,-2){};
		\node (G) at (-1.5,-2){};
		\node (H) at (-0.5,-2){};
		\node (I) at (0.5,-2){};
		\node (J) at (1.5,-2){};
		\node (K) at (2.5,-2){};
		\node (L) at (3.5,-2){};
		\draw (X) -- (Y);
		\draw (X) -- (Z);
		\draw (Y) -- (A);
		\draw (Y) -- (B);
		\draw (Z) -- (C);
		\draw (Z) -- (D);
		\draw (B) -- (E);
		\draw (B) -- (F);
		\draw (A) -- (G);
		\draw (A) -- (H);
		\draw (C) -- (I);
		\draw (C) -- (J);
		\draw (D) -- (K);
		\draw (D) -- (L);
		\end{tikzpicture}
  \caption{Topology $\mathcal{T}$ of $h$}
  \label{fig:tree}
\end{figure}
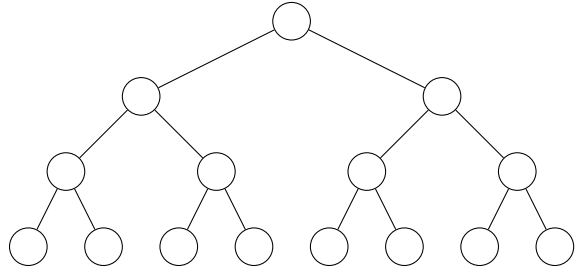

\begin{figure}\centering
		\begin{tikzpicture}[{black, circle, draw}]
		\tikzset{nodes={draw,rounded corners},minimum height=0.3cm,minimum width=0.3cm, font=\footnotesize}
		\node (X) at (-2,1){};
		\node (Y) at (-2.5,0){};
		\node (Z) at (-1.5,0){};
		\node (A) at (-2,-1){};
 		\node (B) at (-3,-1){};
		\node (E) at (-3.5,-2){};
		\node (F) at (-2.5,-2){};
		\draw (X) -- (Y);
		\draw (X) -- (Z);
		\draw (Y) -- (A);
		\draw (Y) -- (B);
		\draw (B) -- (E);
		\draw (B) -- (F);
		\end{tikzpicture}
				\begin{tikzpicture}[{black, circle, draw}]
		\tikzset{nodes={draw,rounded corners},minimum height=0.3cm,minimum width=0.3cm, font=\footnotesize}
		\node (X) at (-1.5,1){};
		\node (Y) at (-2,0){};
		\node (Z) at (-1,0){};
		\node (A) at (-1.5,-1){};
 		\node (B) at (-2.5,-1){};
 		\node (G) at (-2,-2){};
 		\node (H) at (-1,-2){};
		\draw (X) -- (Y);
		\draw (X) -- (Z);
		\draw (Y) -- (A);
		\draw (Y) -- (B);
 		\draw (A) -- (G);
 		\draw (A) -- (H);
		\end{tikzpicture}
		\begin{tikzpicture}[{black, circle, draw}]
		\tikzset{nodes={draw,rounded corners},minimum height=0.3cm,minimum width=0.3cm, font=\footnotesize}
		\node (X) at (-1,1){};
		\node (Y) at (-1.5,0){};
		\node (Z) at (-0.5,0){};
 		\node (C) at (0,-1){};
 		\node (D) at (-1,-1){};
		\node (I) at (-0.5,-2){};
 		\node (J) at (-1.5,-2){};
		\draw (X) -- (Y);
		\draw (X) -- (Z);
 		\draw (Z) -- (C);
 		\draw (Z) -- (D);
		\draw (D) -- (I);
 		\draw (D) -- (J);
		\end{tikzpicture}
		\begin{tikzpicture}[{black, circle, draw}]
		\tikzset{nodes={draw,rounded corners},minimum height=0.3cm,minimum width=0.3cm, font=\footnotesize}
		\node (X) at (0,1){};
		\node (Y) at (-0.5,0){};
		\node (Z) at (0.5,0){};
 		\node (C) at (1,-1){};
 		\node (D) at (0,-1){};
		\node (I) at (0.5,-2){};
 		\node (J) at (1.5,-2){};
		\draw (X) -- (Y);
		\draw (X) -- (Z);
 		\draw (Z) -- (C);
 		\draw (Z) -- (D);
		\draw (C) -- (I);
 		\draw (C) -- (J);
		\end{tikzpicture}
  \caption{Topology $\mathcal{T}_1,\mathcal{T}_2,\mathcal{T}_3,\mathcal{T}_4$ of $h_1,h_2,h_3,h_4$}
  \label{fig:trees}
\end{figure}
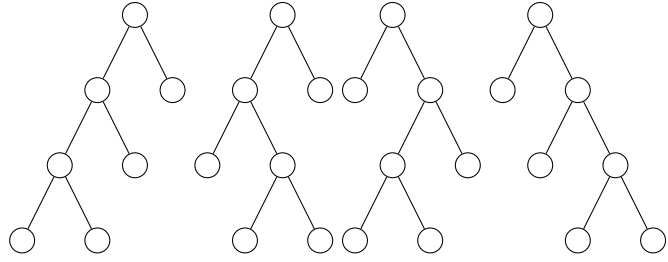

\noindent{\bf Step 2: definition of each tree $h^{(\ell)}$}\\
We provide a first definition of each $h^{(\ell)}$
\begin{eqnarray*}
h^{(\ell)}&=&\gamma_{(\varepsilon_1[\ell],\varepsilon_2[\ell],\cdots,\varepsilon_{i_\text{depth}}[\ell])}\Psi(\cdot,\mathcal{R}_{(\varepsilon_1[\ell],\varepsilon_2[\ell],\cdots,\varepsilon_{i_\text{depth}}[\ell])})\\
&&+\gamma_{(\varepsilon_1[\ell],\varepsilon_2[\ell],\cdots,1-\varepsilon_{i_\text{depth}}[\ell])}\Psi(\cdot,\mathcal{R}_{(\varepsilon_1[\ell],\varepsilon_2[\ell],\cdots,1-\varepsilon_{i_\text{depth}}[\ell])})
\end{eqnarray*}
We have only non-vanishing weights for regions that correspond to the regions of the initial tree $h$. When the weights are non-vanishing, they are identical to those of the initial tree. In this way, we automatically have for any $\ell$:
\[
\|h^{(\ell)}\|_{\infty,\mathcal{V}_n}\leq \|h\|_{\infty,\mathcal{V}_n}
\]

Let $\delta:= \max_j \max (|\widetilde{a}_j-a_j|,|\widetilde{b}_j-b_j|)$. One has $\widetilde{R}\subset R_{\delta}=\{\mathbf y+\delta \mathbf{z}\in\mathbb{R}^p,\, \mathbf y\in R,\mathbf z\in [-1,1]^p\}$. Since in addition
\[
|\Psi(\mathbf x;R)-\Psi(\mathbf x;\widetilde{R})|\leq |\Psi(\mathbf x;R)-\Psi(\mathbf x;R+\mathbf{\delta x})|
\]
one can assume that $\widetilde{R}:=R_{\delta}$. The assumption $\phi_{\boldsymbol\sigma}\in B^{(1)}_{1,\infty}$ implies that for $\alpha\in (0,1)$
\[
\int_{\mathbf x\in \mathbb{R}^p}|\phi_{\boldsymbol\sigma}(\mathbf x+\mathbf{\delta x})-\phi_{\boldsymbol\sigma}(\mathbf x)|d\mathbf x\leq C|\mathbf{\delta x}|^{\alpha}\;.
\]

\section{Proof of Theorem~\ref{th:PR-BART-main}}\label{proof:bart}
\label{app} 
To prove Theorem~\ref{th:PR-BART-main}, we first prove an approximation result of any function $f$ by a sum of trees (Proposition~\ref{prop:step1}). Then we deduce a result for the prior concentration rate (Proposition~\ref{prop:step2}). Thereafter, our result concerning posterior convergence, Theorem~\ref{th:PR-BART-main}, follows from Proposition~\ref{prop:step2} and Theorem 3.2 of~\cite{bhattacharya2019bayesian} as in \cite{linero2018bayesian}. The remainder of this Section is devoted to the proof of the two intermediate results: Proposition~\ref{prop:step1} (see Section~\ref{sec:proof:step1}) and Proposition~\ref{prop:step2} (see Section~\ref{sec:proof:step2}).

\subsection{Proof of Proposition~\ref{prop:step1}}\label{sec:proof:step1}

To prove Proposition~\ref{prop:step1}, we first provide an approximation result in the case of one tree, Proposition~\ref{pro:approx1}. 
\begin{proposition}\label{pro:approx1}
Let $s\in (1,2)$ and $f\in H^s([0,1]^p)$. Then, for some $C(\boldsymbol\sigma,p, \phi)>0$, there exists a partition of $[0,1]^p$ into regions $(\mathcal{R}_k)_{1\leq k \leq K}$ and a tree $h$ corresponding to this partition with a maximum depth $C p(1+1/r)\log(1/\varepsilon)$ and total number of nodes $N$ at most $C\,\varepsilon^{-p/r-p}$ with the form: 

\[
h=\sum_{k=1}^{K} \gamma_{k}\Psi(\cdot;\widetilde{\mathcal{R}}_k)\mbox{ with }\|h\|_{\infty,\mathcal{V}_n}:=\max_k |\gamma_k|\leq \frac{C\varepsilon^{-p/s}}{\inf_{|\boldsymbol\omega|\leq \varepsilon^{-2/s}}|\mathcal{F}\phi_{\boldsymbol{\sigma}}(\boldsymbol\omega)|}
\]
such that for any $\mathbf x\in [0,1]^p$
\begin{align*}
&|f(\mathbf x)-h(\mathbf x)| 
\le C(\boldsymbol\sigma,p, \phi)\|f\|_{H^s(\mathbb{R}^p)}\varepsilon
\end{align*}
\end{proposition}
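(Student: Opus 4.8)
The goal is to approximate an arbitrary $f\in H^s([0,1]^p)$, $s\in(1,2)$, by a single probabilistic regression tree $h=\sum_k \gamma_k\Psi(\cdot;\widetilde{\mathcal R}_k)$ built on a dyadic partition, with explicit control on the depth, the number of nodes, the sup-norm of the weights, and the approximation error. The natural strategy is to pass through the Fourier domain: first extend $f$ to an element of $H^s(\mathbb R^p)$ (possible by definition of $H^s([0,1]^p)$) with comparable norm, then write $f$ via its inverse Fourier transform and truncate the frequency domain at a cutoff $\Lambda\asymp \varepsilon^{-2/s}$. The tail $\int_{|\boldsymbol\omega|>\Lambda}|\mathcal F f(\boldsymbol\omega)|\,d\boldsymbol\omega$ is controlled by Cauchy--Schwarz against $(1+\|\boldsymbol\omega\|^2)^{s/2}|\mathcal Ff|\in L^2$, yielding a bound of order $\Lambda^{-(s-p/2)}\|f\|_{H^s}$ on the low-frequency truncation error — here one uses $s>1$ together with the relation between $s$ and $p$ implicit in the Sobolev embedding (this is where the hypothesis $s\in(1,2)$ and the specific scaling of $\Lambda$ enter). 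Actually the cleaner route, matching the exponent $\varepsilon$ in the statement, is to bound $\|f - f_\Lambda\|_\infty$ where $f_\Lambda$ is the frequency-truncated function, getting $\lesssim \varepsilon$ for $\Lambda = \varepsilon^{-2/s}$.

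**Key steps.** (1) Extend $f$ to $H^s(\mathbb R^p)$ with $\|f\|_{H^s(\mathbb R^p)}\le 2\|f\|_{H^s([0,1]^p)}$ and write $f(\mathbf x)=\int \mathcal Ff(\boldsymbol\omega)e^{i\langle\boldsymbol\omega,\mathbf x\rangle}d\boldsymbol\omega$. (2) Truncate: set $f_\Lambda(\mathbf x)=\int_{|\boldsymbol\omega|\le\Lambda}\mathcal Ff(\boldsymbol\omega)e^{i\langle\boldsymbol\omega,\mathbf x\rangle}d\boldsymbol\omega$, and show $\|f-f_\Lambda\|_{\infty}\le C\|f\|_{H^s(\mathbb R^p)}\Lambda^{-(s-p/2)}\le C\|f\|_{H^s(\mathbb R^p)}\varepsilon$ by choosing $\Lambda=\varepsilon^{-2/s}$ (adjusting constants; here one needs $s - p/2$ to relate to $1$ via the precise choice, or more robustly one runs the argument with $\Lambda^{-\beta}$ and sets the cutoff accordingly). (3) Represent $f_\Lambda$ as a ``deconvolution against $\phi_{\boldsymbol\sigma}$'': since $\Psi(\mathbf x;\mathcal R_k)=\int_{\mathcal R_k}\phi_{\boldsymbol\sigma}(\mathbf u - \mathbf x)\,d\mathbf u$, a weighted sum $\sum_k\gamma_k\Psi(\cdot;\mathcal R_k)$ with $\gamma_k$ chosen to be (a quadrature approximation of) $g(\text{center of }\mathcal R_k)\cdot|\mathcal R_k|^{-1}$ where $\widehat g = \widehat{f_\Lambda}/\mathcal F\phi_{\boldsymbol\sigma}$, approximates $\phi_{\boldsymbol\sigma}*g = f_\Lambda$. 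Because the support of $\mathcal F\phi_{\boldsymbol\sigma}$ is all of $\mathbb R^p$ (Assumption~\ref{ass:phi}), $\mathcal F\phi_{\boldsymbol\sigma}$ does not vanish on $|\boldsymbol\omega|\le\Lambda$, and one gets $\|g\|_\infty\lesssim \Lambda^{p/2}/\inf_{|\boldsymbol\omega|\le\Lambda}|\mathcal F\phi_{\boldsymbol\sigma}(\boldsymbol\omega)|$, which after substituting $\Lambda=\varepsilon^{-2/s}$ gives exactly the claimed bound on $\max_k|\gamma_k|$. (4) Control the quadrature/discretization error of replacing $\phi_{\boldsymbol\sigma}*g$ by the Riemann-type sum over dyadic cells; a dyadic partition of depth $d$ has cells of diameter $\asymp 2^{-d/p}$ (up to the coordinatewise structure), and using the smoothness of $\phi_{\boldsymbol\sigma}$ (the $B^{(1)}_{1,\infty}$ assumption, via Lemma~\ref{lem:2a}) and the boundedness of $g$, this error is $\lesssim 2^{-d/p}\cdot\text{poly}(\Lambda)$; choosing $d\asymp p(1+1/r)\log(1/\varepsilon)$ makes it $\lesssim\varepsilon$, which pins down the depth and hence the node count $N\le C\varepsilon^{-p/r-p}$ (noting only $O(\Lambda^p)=O(\varepsilon^{-2p/s})$ cells actually carry nonzero weight, but the stated bound is the generous dyadic count). (5) Combine the three error contributions by the triangle inequality.

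**Main obstacle.** The delicate point is step (3)--(4): making rigorous the ``deconvolve then discretize'' construction while simultaneously keeping the weights $\gamma_k$ bounded by the stated quantity and the discretization error $O(\varepsilon)$. The tension is that a smaller cell diameter improves the quadrature error but forces more cells (larger depth, more nodes), while the deconvolved density $g$ has sup-norm blowing up like $\Lambda^{p/2}/\inf|\mathcal F\phi_{\boldsymbol\sigma}|$, so the quadrature error carries this large factor and must be beaten down by taking $d$ logarithmically large in $1/\varepsilon$ with the right constant — this is precisely where the exponent $r$ from Assumption~\ref{ass:phi} and the $B^{(1)}_{1,\infty}$ regularity of $\phi_{\boldsymbol\sigma}$ must be used quantitatively (through the modulus-of-continuity estimate $\int|\phi_{\boldsymbol\sigma}(\mathbf v+\mathbf h)-\phi_{\boldsymbol\sigma}(\mathbf v)|d\mathbf v\le C|\mathbf h|^\alpha$ appearing in Lemma~\ref{lem:2a}). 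A secondary technical nuisance is that $g$ need not be integrable or even a well-defined function if $1/\mathcal F\phi_{\boldsymbol\sigma}$ grows too fast, so one should work with the truncated object $\widehat{f_\Lambda}/\mathcal F\phi_{\boldsymbol\sigma}$, which is compactly supported and bounded, making $g$ a bona fide bounded smooth function and legitimizing the convolution identity $\phi_{\boldsymbol\sigma}*g=f_\Lambda$.
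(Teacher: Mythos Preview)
Your overall strategy---Fourier truncation at $\Lambda\asymp\varepsilon^{-2/s}$, deconvolution by $\phi_{\boldsymbol\sigma}$ to produce a density $g$ with $\widehat g=\widehat{f_\Lambda}/\mathcal F\phi_{\boldsymbol\sigma}$, then Riemann-sum discretization to obtain the tree---is exactly the mechanism behind the paper's proof, which is written very tersely as a three-parameter choice (frequency cutoff $\lambda$, spatial cutoff $M$, cell diameter) making each of three error contributions $\le\varepsilon/3$.

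There is, however, a genuine gap in your step~(4). You attribute the appearance of the exponent $r$ in the depth $Cp(1+1/r)\log(1/\varepsilon)$ and in the node count $C\varepsilon^{-p/r-p}$ to the $B^{(1)}_{1,\infty}$ modulus-of-continuity estimate of Lemma~\ref{lem:2a}, via your quadrature bound ``$\lesssim 2^{-d/p}\cdot\mathrm{poly}(\Lambda)$''. But with $\Lambda=\varepsilon^{-2/s}$ this would force a depth scaling with $s$, not $r$. The exponent $r$ actually enters through the \emph{tail-decay} part of Assumption~\ref{ass:phi}, namely $\sup_{\mathbf v}|\mathbf v|^{1+r+p/2}|\phi(\mathbf v)|<\infty$, via a spatial truncation that you have not introduced: since the regions partition $\mathbb R^p$ (not $[0,1]^p$) and $\Psi(\mathbf x;\mathcal R_k)=\int_{\mathcal R_k}\phi_{\boldsymbol\sigma}(\mathbf u-\mathbf x)\,d\mathbf u$, one must refine the partition only on a box $[-M,M]^p$ with $M$ chosen so that $\mathcal M(M)$ (the tail of $\phi_{\boldsymbol\sigma}$, cf.~\eqref{eq:mathcal-M}) contributes at most $\varepsilon/3$; this forces $M\asymp\varepsilon^{-1/r}$. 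Partitioning $[-M,M]^p$ into cells of diameter $\asymp\varepsilon$ then gives depth $\asymp p\log(M/\varepsilon)\asymp p(1+1/r)\log(1/\varepsilon)$ and $K\asymp(M/\varepsilon)^p\asymp\varepsilon^{-p/r-p}$ leaves---exactly the stated bounds.

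So your plan is correct in outline and matches the paper, but to make it go through you must insert the spatial cutoff $M$ as a third parameter alongside $\Lambda$ and the cell diameter, and recognise that the Besov condition controls the discretization error \emph{within} $[-M,M]^p$ (via Lemma~\ref{lem:2a}) while the decay exponent $r$ controls what happens \emph{outside}. Your ``secondary nuisance'' about $g$ being well defined is handled correctly: the compact frequency support of $\widehat{f_\Lambda}$ makes $g$ bounded with the stated sup-norm, and this is where the bound on $\max_k|\gamma_k|$ comes from, as you say.
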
 

\noindent{\bf Proof of Proposition~\ref{pro:approx1}.} Let us fix $\varepsilon>0$. We choose $M$ such that $M^{-r}\leq \varepsilon/3$, that is, $M\geq (\varepsilon/3)^{-1/r}$. We now choose a partition (depending on $M$ and then on $\varepsilon$) such that $\mathrm{diam} (\mathcal{R}_k^{(n)}\cap [-M,M]^p)\leq \varepsilon/3$. This can be obtained considering a regular partition. If $p=1$, this regular partition has depth $i_{depth}$ such that $(2M)/2^{i_{depth}}\leq \varepsilon/3$, that is, we can choose $i_{depth}\geq C_1\log(M/\varepsilon)$. 
If we have $p$ covariables with $p\geq 2$, we repeat this process along each variable, and depth $i_{depth}$ will be at most $C p\log(M/\varepsilon)$. The number of leaves $K$ is of order $2^{i_{depth}}$, that is: 
\[
K\sim C\left[M/\varepsilon\right]^{p}\sim C\,\varepsilon^{-p/r-p}
\]
To finish the proof, we consider $\lambda$ such that:
\[
\frac{1}{(1+|\lambda|)^{s/2}}\leq \varepsilon/3
\]
Thereafter, we choose: 
\[
\beta\geq \frac{C\varepsilon^{-p/s}}{\inf_{|\boldsymbol\omega|\leq \varepsilon^{-2/s}}|\mathcal{F}\phi_{\boldsymbol{\sigma}}(\boldsymbol\omega)|}.
\]
Hence, gathering the inequalities $1/(1+|\lambda|)^{s/2}\leq \varepsilon/3$, $\mathrm{diam} (\mathcal{R}_k^{(n)}\cap [-M,M]^p)\leq \varepsilon/3$, and $M^{-r}\leq \varepsilon/3$ implies the existence of $h\in \mathcal{B}_{\beta}\mathcal{V}_n$ such that for any $\mathbf x\in [0,1]^p$:
\begin{align*}
&|f(\mathbf x)-h(\mathbf x)|\le \left[C(\boldsymbol\sigma,p, \phi)\|f\|_{H^s(\mathbb{R}^p)}\right]\cdot \varepsilon.
\end{align*}
This ends the proof of Proposition~\ref{pro:approx1}.

\noindent Combining Lemma~\ref{lem:one-tree-to-many} and Proposition~\ref{pro:approx1} then yields Proposition~\ref{prop:step1} for $f:=\mathbb{E}[Y|\mathbf{X}=\cdot]$.

\subsection{Proof of Proposition~\ref{prop:step2}}\label{sec:proof:step2}
The proof of Proposition~\ref{prop:step2} follows the same lines as the proof of Theorem~2 in \cite{linero2018bayesian} and is divided into several steps.

We apply Proposition~\ref{prop:step1}, which yields the existence of $\widetilde{h}$, sum of $\widetilde{T}$ trees $h_1,\cdots,h_{\widetilde{T}}$ both belonging to $ \mathcal{B}_{\beta}\mathcal{V}_n$ of respective topologies $\mathcal{T}_1,\cdots,\widetilde{\mathcal{T}}_{\widetilde{T}}$ which approximates $f:=\mathbb{E}[Y|\mathbf{X}=\cdot]$, that is, which satisfies:
\begin{equation}\label{eq:approx-f0}
\|\mathbb{E}[Y|\mathbf{X}=\cdot]-\widetilde{h}\|_{L^2([0,1]^p)}\leq C(\boldsymbol\sigma,p,\phi)\,\|f\|_{H^s(\mathbb{R}^p)}\,\varepsilon/2
\end{equation}
In particular, since $h_1,\cdots,h_T$ belong to $ \mathcal{B}_{\beta}\mathcal{V}_n$ we find that:
\begin{equation}\label{eq:htilde}
\max_{t,j} |\widetilde{\gamma}_{t,j}|=\max_t\|h_t\|_{\infty,\mathcal{V}_n}\leq \beta(\varepsilon)\leq \exp(\varepsilon^{-r_\phi})
\end{equation}
by Equation~\ref{eq:ass-beta}.

We now search to find a lower bound of $\pi\left[\|\widetilde{h}-h\|_{L^2([0,1]^p)}\leq \varepsilon \right]$ where $\pi$ is the prior on all sums of trees $h$. Observe that:
\begin{eqnarray*}
\pi\left[\|\widetilde{h}-h\|_{L^2([0,1]^p)}\leq \varepsilon \right]\geq 
\pi\left[\|\widetilde{h}-h\|_{L^2([0,1]^p)}\leq \varepsilon/2 |\mathcal{T}=\widetilde{\mathcal{T}},T=\widetilde{T}\right]\pi\left[\mathcal{T}=\widetilde{\mathcal{T}}|T=\widetilde{T}\right]\pi\left[T=\widetilde{T}\right]
\end{eqnarray*}
In the next two sections, we give a lower bound of each part on the right hand side of this inequality.
\subsubsection{Lower bound of $\pi\left[\|\widetilde{h}-h\|_{L^2([0,1]^p)}\leq \varepsilon/2 |\mathcal{T}=\widetilde{\mathcal{T}},T=\widetilde{T}\right]$} \label{s:step1}
We denote $(\widetilde{\gamma}_{t,j},\widetilde{R}_{t,j})$ (resp. $(\gamma_{t,j},R_{t,j})$) as the weights and regions associated with $\widetilde{h}$ (resp. $h$). Since the two sums of trees have the same topology, the two sums of trees $h$ and $\widetilde{h}$ are of the form:
\[
h:=\sum_{t=1}^{\widetilde{T}}\sum_{j=1}^{\widetilde{K}_t}\gamma_{t,j}\Psi(\cdot\;;R_{t,j})\mbox{ and }\widetilde{h}:=\sum_{t=1}^{\widetilde{T}}\sum_{j=1}^{\widetilde{K}_t}\widetilde{\gamma}_{t,j}\Psi(\cdot\;;\widetilde{R}_{t,j})
\]
Observe that:
\begin{eqnarray*}
&&\left\|\sum_{t=1}^{\widetilde{T}}\sum_{j=1}^{\widetilde{K}_t}\gamma_{t,j}\Psi(\cdot\;;R_{t,j})-\sum_{t=1}^{\widetilde{T}}\sum_{j=1}^{\widetilde{K}_t}\widetilde{\gamma}_{t,j}\Psi(\cdot\;;\widetilde{R}_{t,j})\right\|_{L^2([0,1]^p)}\\
&\leq& \left\|\sum_{t=1}^{\widetilde{T}}\sum_{j=1}^{\widetilde{K}_t}\gamma_{t,j}\Psi(\cdot\;;R_{t,j})-\sum_{t=1}^{\widetilde{T}}\sum_{j=1}^{\widetilde{K}_t}\widetilde{\gamma}_{t,j}\Psi(\cdot\;;R_{t,j})\right\|_{L^2([0,1]^p)}\\
&&+ \left\|\sum_{t=1}^{\widetilde{T}}\sum_{j=1}^{\widetilde{K}_t}\widetilde{\gamma}_{t,j}\Psi(\cdot\;;R_{t,j})-\sum_{t=1}^{\widetilde{T}}\sum_{j=1}^{\widetilde{K}_t}\widetilde{\gamma}_{t,j}\Psi(\cdot\;;\widetilde{R}_{t,j})\right\|_{L^2([0,1]^p)}
\end{eqnarray*}

First, one has:
\begin{eqnarray*}
&&\left\|\sum_{t=1}^{\widetilde{T}}\sum_{j=1}^{\widetilde{K}_t}\gamma_{t,j}\Psi(\cdot\;;R_{t,j})-\sum_{t=1}^{\widetilde{T}}\sum_{j=1}^{\widetilde{K}_t}\widetilde{\gamma}_{t,j}\Psi(\cdot\;;R_{t,j})\right\|_{L^2([0,1]^p)}\\
&\leq&\left[\max_{t,j}|\gamma_{t,j}-\widetilde{\gamma}_{t,j}|\right]\cdot \left(\int_{[0,1]^p}\left|\sum_{t=1}^{\widetilde{T}}
\sum_{j=1}^{\widetilde{K}_t}|\Psi(\mathbf{x};R_{t,j})|\right|^2d\mathbf{x}\right)^{1/2}\\
&\leq&\left[\max_{t,j}|\gamma_{t,j}-\widetilde{\gamma}_{t,j}|\right]\cdot \widetilde{T}
\end{eqnarray*}
where in the last inequality, for any $t$ and any $\mathbf{x}\in\mathbb{R}^p$, $\sum_{j=1}^{\widetilde{K}_t}|\Psi(\mathbf{x};R_{t,j})|=\sum_{j=1}^{\widetilde{K}_t}\Psi(\mathbf{x};R_{t,j})=1$.

In addition, one has: 
\begin{eqnarray*}
&& \left\|\sum_{t=1}^{\widetilde{T}}\sum_{j=1}^{\widetilde{K}_t}\widetilde{\gamma}_{t,j}\Psi(\mathbf{x};R_{t,j})-\sum_{t=1}^{\widetilde{T}}\sum_{j=1}^{\widetilde{K}_t}\widetilde{\gamma}_{t,j}\Psi(\mathbf{x};\widetilde{R}_{t,j})\right\|_{L^2([0,1]^p)}\\
&\leq&\max_{t,j}|\widetilde{\gamma}_{t,j}| \sum_{t=1}^{\widetilde{T}}\sum_{j=1}^{K_t}\left\|\Psi(\mathbf{x};R_{t,j})-\Psi(\mathbf{x};\widetilde{R}_{t,j})\right\|_{L^2([0,1]^p)}\\
&\leq&\max_{t,j}|\widetilde{\gamma}_{t,j}|\cdot \max\left\|\Psi(\mathbf{x};R_{t,j})-\Psi(\mathbf{x};\widetilde{R}_{t,j})\right\|_{L^\infty([0,1]^p)}\left[\sum_{t=1}^{\widetilde{T}}\sum_{j=1}^{K_t} 1\right]\\
&\leq&C(\alpha,\boldsymbol \sigma,\phi)\cdot \max_{t,j}|\widetilde{\gamma}_{t,j}|\cdot [\max|{x}_{t,j}-\widetilde{{x}}_{t,j}| ]^\alpha\cdot \left[\sum_{t=1}^{\widetilde{T}}{K}_t\right]
\end{eqnarray*}
for any $\alpha\in (0,1)$ and where we consider in the two last inequalities the fact that for any function $f\in L^\infty([0,1]^p)$, $\|f\|_{L^2([0,1]^p)}\leq \|f\|_{L^\infty([0,1]^p)}$, and Lemma~\ref{lem:2a}. 

\noindent We deduce from these two last inequalities the following lower bound:
\begin{eqnarray*}
&&\pi\left[\|\widetilde{h}-h\|_{L^2([0,1]^p)}\leq \frac{\varepsilon}{2}\left|\mathcal{T}=\widetilde{\mathcal{T}},T=\widetilde{T}\right.\right]\\
& \geq& 
\pi\left[\max_{t,j}|\gamma_{t,j}-\widetilde{\gamma}_{t,j}|\leq \frac{\varepsilon}{4\widetilde{T}},\max|x_{t,j}-\widetilde{x}_{t,j}| \leq \left(\frac{\varepsilon}{4\left[\sum_{t=1}^{\widetilde{T}}\widetilde{K}_t\right]\max|\widetilde{\gamma}_{t,j}|}\right)^{1/\alpha}\left|\mathcal{T}=\widetilde{\mathcal{T}},T=\widetilde{T}\right.\right] \\
\end{eqnarray*}
We now use the fact that the trees are drawn independently from each other. Hence:
\begin{eqnarray*}
&&\pi\left[\|\widetilde{h}-h\|_{L^2([0,1]^p)}\leq \frac{\varepsilon}{2}\left|\mathcal{T}=\widetilde{\mathcal{T}},T=\widetilde{T}\right.\right]\\
&\geq& \prod_{t,j}\pi\left[|\gamma_{t,j}-\widetilde{\gamma}_{t,j}|\leq \frac{\varepsilon}{4\widetilde{T}}\left|\mathcal{T}=\widetilde{\mathcal{T}},T=\widetilde{T}\right.\right]\\
&&\cdot \prod_{t,j}\pi\left[|x_{t,j}-\widetilde{x}_{t,j}| \leq \left(\frac{\varepsilon}{4\left[\sum_{t=1}^{\widetilde{T}}\widetilde{K}_t\right]\max|\widetilde{\gamma}_{t,j}|}\right)^{1/\alpha}\left|\mathcal{T}=\widetilde{\mathcal{T}},T=\widetilde{T}\right.\right]
\end{eqnarray*}
By Assumption \eqref{ass:BART}, one has:
\begin{eqnarray*}
&&\prod_{t,j}\pi\left[|\gamma_{t,j}-\widetilde{\gamma}_{t,j}|\leq \frac{\varepsilon}{4\widetilde{T}}\left|\mathcal{T}=\widetilde{\mathcal{T}},T=\widetilde{T}\right.\right]\\
&\geq&C\prod_{t,j}\left[\int_{\widetilde{\gamma}_{t,j}-\varepsilon/(4\widetilde{T})}^{\widetilde{\gamma}_{t,j}+\varepsilon/(4\widetilde{T})}\exp\left(-B|\gamma_{t,j}|\right)d\gamma_{t,j}\right]\\
&\geq&C\prod_{t,j}\left[\exp\left(-B|\widetilde{\gamma}_{t,j}|\right)\cdot \frac{\varepsilon}{4\widetilde{T}}\right]\\
&\geq&C\left[\exp\left(-\sum_{t,j}[B|\widetilde{\gamma}_{t,j}|+\log(4\varepsilon^{-1}\widetilde{T})]\right)\right]\\
&\geq & C\exp\left(-\left[\sum_{t}K_t\right]\cdot\left[B\cdot \max_{t,j}|\widetilde{\gamma}_{t,j}|+\log(4\varepsilon^{-1}\widetilde{T})\right]\right)\\
&\geq & C\exp\left(-\varepsilon^{-p/r-p}\cdot\left[\max_{t,j}|\widetilde{\gamma}_{t,j}|+\log(4\varepsilon^{-1}\widetilde{T})\right]\right)\\
&\geq & C\exp\left(-\varepsilon^{-p/r-p}\cdot\left[\exp(\varepsilon^{-r_\phi})+\log(4\varepsilon^{-1}\widetilde{T})\right]\right)
\end{eqnarray*}
where in the last display we used $\sum_{t=1}^{\widetilde{T}}\widetilde{K}_t \leq \varepsilon^{-p/r-p}$ and the bound on $\max|\widetilde{\gamma}_{t,j}|$ given by Eq.\eqref{eq:htilde}. By Assumption \ref{ass:BART}, one also has for any $\alpha\in (0,1)$: 
\begin{eqnarray*}
&&\prod_{t,j}\pi\left[|x_{t,j}-\widetilde{x}_{t,j}| \leq \left(\frac{\varepsilon}{4\left[\sum_{t=1}^{\widetilde{T}}\widetilde{K}_t\right]\max_{t,j}|\widetilde{\gamma}_{t,j}|}\right)^{\alpha}\left|\mathcal{T}=\widetilde{\mathcal{T}},T=\widetilde{T}\right.\right]\\
&=& \prod_{t,j}\left[\frac{\varepsilon^\alpha}{4\left[\sum_{t=1}^{\widetilde{T}}\widetilde{K}_t\right]^\alpha\max|\widetilde{\gamma}_{t,j}|^\alpha}\right]\\
&= &\exp\left[-\alpha\left[\sum_{t=1}^{\widetilde{T}}\widetilde{K}_t\right]\cdot\log(\varepsilon^{-1}\cdot\max_{t,j}|\widetilde{\gamma}_{t,j}|\cdot\sum_{t=1}^{\widetilde{T}}\widetilde{K}_t)\right]\\
&\geq &\exp\left[-\alpha\varepsilon^{-1/r-p}\cdot\log(\max_{t,j}|\widetilde{\gamma}_{t,j}|\cdot\varepsilon^{-p/r-p-1})\right]\\
&\geq &\exp\left[-\alpha\varepsilon^{-p/r-p}\cdot( \varepsilon^{-r_\phi}+C\log(\varepsilon^{-1})\right]\\
&\geq &C\exp(-\varepsilon^{-p/r-p-r_\phi})
\end{eqnarray*}
where in two last lines we used the inequality $\sum_{t=1}^{\widetilde{T}}\widetilde{K}_t \leq \varepsilon^{-p/r-p}$ and the bound on $\max|\widetilde{\gamma}_{t,j}|$ given by Eq.\eqref{eq:htilde}.
Hence:
\[
\pi\left[\|\widetilde{h}-h\|_{L^2([0,1]^p)}\leq \frac{\varepsilon}{2}\left|\mathcal{T}=\widetilde{\mathcal{T}},T=\widetilde{T}\right.\right]\\
\geq C\exp\left(-\alpha\varepsilon^{-p/r-p}\cdot\left[\exp(\varepsilon^{-r_\phi})+\log(4\varepsilon^{-1}\widetilde{T})+\varepsilon^{-r_\phi}\right]\right)
\]
which implies:
\begin{equation}\label{eq:lb1}
\pi\left[\|\widetilde{h}-h\|_{L^2([0,1]^p)}\leq \frac{\varepsilon}{2}\left|\mathcal{T}=\widetilde{\mathcal{T}},T=\widetilde{T}\right.\right]\\
\geq C\exp\left(-\alpha\varepsilon^{-p/r-p}\cdot\exp(\varepsilon^{-r_\phi})\right)
\end{equation}
\subsubsection{Lower bounds of $\pi\left[\mathcal{T}=\widetilde{\mathcal{T}}|T=\widetilde{T}\right]$ and $\pi\left[T=\widetilde{T}\right]$}

\noindent The topology of the two sums of trees $\mathcal{T}$ and $\widetilde{\mathcal{T}}$ is the same if we draw the same random variables for each node of each tree $t$. Since for each fixed node, the probability of drawing one variable at random is $1/p$, one has: 
\[
\pi\left[\mathcal{T}=\widetilde{\mathcal{T}}|T=\widetilde{T}\right]\geq p^{-\sum_{t=1}^T \widetilde{N}_t}.
\]
In our case:
\[
\sum_{t=1}^{\widetilde{T}} \widetilde{N}_t\leq C\widetilde{T}\log(\varepsilon^{-1}).
\]
Hence:
\begin{equation}\label{eq:lb2}
\pi\left[\mathcal{T}=\widetilde{\mathcal{T}}|T=\widetilde{T}\right]\geq \exp(-C\log(p)\widetilde{T}\log(\varepsilon^{-1}))
\end{equation}

In addition, by Assumption \ref{ass:BART} we have:
\begin{equation}\label{eq:lb3}
\pi\left[T=\widetilde{T}\right]\geq C_1\exp\left(-C_2\widetilde{T}\right).
\end{equation}

Combining~\eqref{eq:lb1},\eqref{eq:lb2}, and \eqref{eq:lb3}, we deduce that: 
\begin{equation}\label{eq:lb}
\pi\left[\|\widetilde{h}-h\|_{L^2([0,1]^p)}\leq \varepsilon/2 \right]\geq C\exp\left(-\varepsilon^{-1/r-p}\cdot\exp(\varepsilon^{-r_\phi})\right)\cdot\exp(-C\log(p)\widetilde{T}\log(\varepsilon^{-1}))\cdot \exp\left(-C_2\widetilde{T}\right)
\end{equation}
\subsubsection{End of the proof of Proposition~\ref{prop:step2}}
\noindent We now provide a bound on the prior $\pi\left[\|h-f\|\leq \varepsilon \right]$ over regression functions $h$. 
Using~\eqref{eq:lb} and \eqref{eq:approx-f0}, one has:
\[
\pi\left[\|h-f\|_{L^2([0,1]^p)}\leq \varepsilon \right]\geq C\exp\left(-\varepsilon^{-p/r-p}\cdot\exp(\varepsilon^{-r_\phi})\right)\cdot\exp(-C\log(p)\widetilde{T}\log(\varepsilon^{-1}))\cdot \exp\left(-C_2\widetilde{T}\right)
\]

Set $\varepsilon_n:=[\log\log n]^{-1/r_\phi}$. Observe that $\varepsilon_n\rightarrow 0$ as $n\rightarrow \infty$ and
\[
\varepsilon_n^{-1/r-p}\cdot\exp(\varepsilon_n^{-r_\phi})=[\log\log n]^{(1/r+p)/r_\phi}\cdot \log n\leq n \cdot [\log\log n]^{-2/r_\phi}
\]
Then,
\[
\pi\left[\|h-f_0\|_{L^2([0,1]^p)}\leq \varepsilon_n \right]\geq C\exp\left(-n\varepsilon_n^2\right)
\]
which yields Proposition~\ref{prop:step2}.

\subsection{Covering number}
Let $N(\varepsilon, \mathcal{F}, \ell_2(Q_n))$ be the empirical $\ell_2$-covering number of $\mathcal{F}$, corresponding to the minimal number of balls with $\ell_2$ distance over observations of radius $\varepsilon$ needed to cover $\mathcal{F}$, and $N_2(\varepsilon, \mathcal{F}, n)$ its uniform version (where the supremum is taken over all uniform probability measures over $n$ points). 
Then, we use the following lemma to bound the cover number.
\begin{lemma}\label{coveringNumber}
Considering the set: 
\begin{align*}
\mathcal{F}_K^{\mathcal{B}} &= \{f \text{ as in \eqref{treeProbabilistic} with } K \text{ leaves} | (\gamma_k)_{1\leq k \leq K} \in [-\Gamma, \Gamma]^K, \boldsymbol{\sigma} \in [a_\sigma, A_\sigma]^p\}.
\end{align*}
The covering number is thus:
\begin{align*}
 \log N_2(\varepsilon, \mathcal{F}_K^{\mathcal{B}}, n) \leq C n \varepsilon^2 K \log(p a_\sigma^{-1}A_\sigma^2 n \varepsilon K\Gamma).
\end{align*}
\end{lemma}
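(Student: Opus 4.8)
The plan is to treat $\mathcal{F}_K^{\mathcal{B}}$ as a finitely-parametrized family and bound its uniform $\ell_2$-entropy by a Lipschitz/Hölder\nobreakdash-in\nobreakdash-parameters argument together with a union bound over the discrete ingredients of the tree. This route is chosen precisely because it sidesteps having to bound the pseudo\nobreakdash-dimension (or VC dimension) of the family $\{\Psi(\cdot\,;\mathcal{R},\boldsymbol\sigma)\}$, which is delicate for a general density $\phi$. A function in $\mathcal{F}_K^{\mathcal{B}}$ is given by \eqref{treeProbabilistic} with $K$ leaves; its parameters split into \emph{discrete} ones — the combinatorial shape of the $K$-leaf binary tree (at most the Catalan number $C_{K-1}\le 4^{K-1}$) and the split coordinate at each of the $K-1$ internal nodes (at most $p^{K-1}$ choices) — and \emph{continuous} ones — the $K-1$ cut points in $[0,1]$, the weights $\boldsymbol\gamma\in[-\Gamma,\Gamma]^K$, and the scale vector $\boldsymbol\sigma\in[a_\sigma,A_\sigma]^p$. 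Three moduli of continuity are needed. The dependence on $\boldsymbol\gamma$ is $1$-Lipschitz in $\ell_1$ (hence controlled in empirical $\ell_2$), since $0\le\Psi\le1$. The dependence on the cut points is Hölder of exponent $\alpha\in(0,1)$ with constant at most $C\Gamma K$, by Lemma~\ref{lem:2a} applied leafwise: perturbing the box endpoints by $\eta$ changes each $\Psi(\cdot\,;\mathcal{R}_k,\boldsymbol\sigma)$ by at most $C\eta^\alpha$ in sup norm, hence $\|f_\theta-f_{\theta'}\|_\infty\le C\Gamma K\,\eta^\alpha$. The dependence on $\boldsymbol\sigma$ requires a companion estimate, $\|\Psi(\cdot\,;\mathcal{R},\boldsymbol\sigma)-\Psi(\cdot\,;\mathcal{R},\boldsymbol\sigma')\|_\infty\le C(a_\sigma,A_\sigma)\,\|\boldsymbol\sigma-\boldsymbol\sigma'\|^\alpha$, to be derived from the Besov regularity $\phi_{\boldsymbol\sigma}\in B^{(1)}_{1,\infty}$ and the tail control of Assumption~\ref{ass:phi}.

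Granting these moduli, the construction proceeds as follows. Fix a discrete structure. Choose mesh sizes $\eta$, $\rho$, $\delta$ for the cut points, the scales, and the weights so that each block drives the sup-norm error below $\varepsilon/3$; concretely $\eta\sim(\varepsilon/(C\Gamma K))^{1/\alpha}$, $\rho\sim(\varepsilon/(C(a_\sigma,A_\sigma)\Gamma K))^{1/\alpha}$, $\delta\sim\varepsilon/K$. The resulting product grid is an $\varepsilon$-net of $\{f_\theta\}$ for that structure, of cardinality at most $(C/\eta)^{K}\,(CA_\sigma/\rho)^{p}\,(C\Gamma K/\varepsilon)^{K}$. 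Taking the union over the at most $(4p)^{K}$ discrete structures and then logarithms yields the clean parametric bound
\[
\log N_2(\varepsilon,\mathcal{F}_K^{\mathcal{B}},n)\;\le\; C\,(K+p)\,\log\!\big(p\,a_\sigma^{-1}A_\sigma^{2}\,K\Gamma\,\varepsilon^{-1}\big),
\]
where the exponent $2$ on $A_\sigma$ absorbs the way the constant $C(a_\sigma,A_\sigma)$ of the $\boldsymbol\sigma$-modulus degrades and the factor $1/\alpha$ is absorbed into $C$. Finally, in the range of $\varepsilon$ that is actually used in the consistency argument — essentially $n\varepsilon^{2}\ge 1$, with $A_\sigma\ge 1$ and $p$ at most of order $K$ without loss of generality — one has $n\varepsilon^{2}K\ge K+p$ and $A_\sigma n\varepsilon\ge\varepsilon^{-1}$, so the displayed clean bound is dominated by $C\,n\varepsilon^{2}K\,\log(p\,a_\sigma^{-1}A_\sigma^{2}\,n\varepsilon\,K\Gamma)$, which is exactly the stated inequality.

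The main obstacle is the $\boldsymbol\sigma$-modulus, uniformly over $\boldsymbol\sigma\in[a_\sigma,A_\sigma]^p$ with $a_\sigma$ possibly small and $A_\sigma$ possibly large. Since Assumption~\ref{ass:phi} provides only the general regularity of $\phi$ (not, say, explicit Gaussian derivatives), one has to bound $\int_{\mathbb{R}^p}|\phi_{\boldsymbol\sigma}(\mathbf v)-\phi_{\boldsymbol\sigma'}(\mathbf v)|\,d\mathbf v$ by writing $\phi_{\boldsymbol\sigma'}$ as a dilation of $\phi_{\boldsymbol\sigma}$ and splitting the difference into a translation part — handled by $\phi_{\boldsymbol\sigma}\in B^{(1)}_{1,\infty}$, exactly as in the proof of Lemma~\ref{lem:2a} — and a pure dilation part — handled by the tail bound $\sup_{\mathbf v}|\mathbf v|^{1+r+p/2}|\phi(\mathbf v)|<\infty$ — while carefully tracking how the resulting constant blows up as $a_\sigma\to 0$; such a blow-up is unavoidable, since $\Psi(\cdot\,;\mathcal{R},\boldsymbol\sigma)$ converges to the discontinuous indicator $\mathds{1}_{\{\cdot\in\mathcal{R}\}}$ as $\boldsymbol\sigma\to\mathbf 0$. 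Once this estimate is in hand, the remaining steps — the discrete union bound, the weight net, and assembling the product net — are routine.
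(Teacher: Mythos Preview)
Your approach is essentially the paper's: both treat $\mathcal{F}_K^{\mathcal{B}}$ as a finitely parametrized family, split $|f_\Theta-f_{\widetilde\Theta}|$ into a weight part and a region/scale part, build product nets over the continuous parameters, and take a union over the discrete tree choices (the paper does this by appealing to Lemma~1 in the supplement of \cite{linero2018bayesian}). You are more explicit than the paper on two points --- invoking Lemma~\ref{lem:2a} for the cut-point modulus, and isolating the $\boldsymbol\sigma$-modulus as the piece that actually needs work --- whereas the paper simply posits the net granularities and cites Linero. The one substantive difference is bookkeeping: the paper bakes the factor $n\varepsilon$ directly into its mesh sizes (nets of width $(Cn\varepsilon)^{-1}K^{-1}2^{-K}$ for the splits and scales, $(Cn\varepsilon)^{-1}2^{-K}$ for the weights), so that $n\varepsilon$ appears inside the log after counting; you instead derive the clean parametric bound $C(K+p)\log(p\,a_\sigma^{-1}A_\sigma^2 K\Gamma\,\varepsilon^{-1})$ and only afterwards pass to the stated form by invoking $n\varepsilon^2\gtrsim 1$ and $p\lesssim K$. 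That last passage imports side conditions not in the lemma statement, so as written your inequality is strictly weaker than advertised; however, exactly the same implicit restriction is present in the paper's argument (its nets are vacuous unless $n\varepsilon$ is large), and in the only place the lemma is used these conditions hold, so this is a cosmetic rather than mathematical gap.
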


To prove this result, we consider a net corresponding to the center of the balls that recover the considered set of a fixed radius. 

Considering the potentially empty leaves, we can consider a depth $H$ with $K =2^H$ leaves. 

Let: 
\begin{align*}
\mathcal{F}_K^{\mathcal{B}} &= \{f \text{ as in \eqref{treeProbabilistic} with } K \text{ leaves} | (\gamma_k)_{1\leq k \leq K} \in [-\Gamma, \Gamma]^K, \boldsymbol{\sigma} \in [a_\sigma, A_\sigma]^p\}.
\end{align*}

We thus obtain the following bound:
\begin{align*}
 &|f_{\text{PR}} (\mathbf{x};\Theta) - f_{\text{PR}} (\mathbf{x};\widetilde\Theta) | = \left|\sum_{k=1}^K\gamma_k \Psi(\mathbf{x};\mathcal{R}_k,\boldsymbol{\sigma}) - \sum_{k=1}^K\widetilde{\gamma}_k \Psi(\mathbf{x};\widetilde{\mathcal{R}}_k,\widetilde{\boldsymbol{\sigma}})\right| \\
 \leq& \sum_{k=1}^K |(\gamma_k - \widetilde{\gamma}_k) \Psi(\mathbf{x};\mathcal{R}_k,\boldsymbol{\sigma})| + 
 \sum_{k=1}^K|\widetilde{\gamma}_k (\Psi(\mathbf{x};{\mathcal{R}}_k,{\boldsymbol{\sigma}}) - \Psi(\mathbf{x};\widetilde{\mathcal{R}}_k,\widetilde{\boldsymbol{\sigma}}))|
 \\
\end{align*}
We construct an $A_\sigma^{-1} (Cn\varepsilon)^{-1}K^{-1}2^{-K}$-net of $[0,1]$ for the splitting locations, $A_\sigma^{-1} (Cn\varepsilon)^{-1}K^{-1}2^{-K}$-net of $[0, a_\sigma^{-1}]$, and an $(Cn\varepsilon)^{-1}2^{-K}$-net of $[-\Gamma, \Gamma]$ for the leaf values. 
Following the proof of Lemma~1 in the Supplementary Material of~\cite{linero2018bayesian}, which provides the covering number for a similar problem: 
\begin{align*}
 &\|f_{\text{PR}} (\mathbf{x};\Theta) - f_{\text{PR}} (\mathbf{x};\widetilde\Theta) \| 
 \leq K (Cn\varepsilon)^{-1}2^{-K} \|\Psi(\mathbf{x};{\mathcal{R}}_k,{\boldsymbol{\sigma}})\| + (Cn\varepsilon)^{-1}K^{-1}2^{-K}.
 \end{align*}
We can construct a net of $\mathcal{F}_K^{\mathcal{B}}$ with all functions as discretized by each specific net. The total number of such functions can be bounded above by a multiple of:
\[
(A_\sigma (Cn\varepsilon)K2^{K} a_\sigma^{-1} A_\sigma (Cn\varepsilon)K2^{K})^{2^{H+1}}(2\Gamma Cn\varepsilon2^{K})^{2^H},
\]
which gives the upper bound of the covering entropy. 
 
\end{document}